\documentclass[a4paper,11pt]{article}
\usepackage[ruled,vlined]{algorithm2e}
\usepackage{amsmath,amsfonts,amsthm}
\usepackage{thm-restate,thmtools}
\usepackage{algorithmic}
\usepackage{nicefrac}
\usepackage[utf8]{inputenc}
\usepackage{color}
\usepackage{xcolor,colortbl}
\usepackage{bm,bbm}
\usepackage{soul}
\usepackage{mathtools}
\usepackage{nicefrac}  
\usepackage{booktabs}
\usepackage{adjustbox}
\usepackage{hyperref}
\usepackage{url}
\usepackage{cleveref}
\usepackage[flushleft]{threeparttable}
\usepackage{multirow}
\usepackage[shortlabels]{enumitem}
\usepackage{pifont}
\usepackage{microtype}      
\usepackage{graphicx}
\usepackage{natbib}
\usepackage{doi}
\usepackage{authblk}
\usepackage{tikz}
\usepackage{tablefootnote}

\usepackage{amsmath,amssymb,amsthm}
\usepackage{mathrsfs} 
\usepackage{euscript}
\usepackage{charter}
\usepackage{array}
\usepackage[normalem]{ulem}

\definecolor{labelkey}{rgb}{0,0.08,0.45}
\definecolor{refkey}{rgb}{0,0.6,0.0}
\definecolor{Brown}{rgb}{0.45,0.0,0.05}
\definecolor{dgreen}{rgb}{0.00,0.49,0.00}
\definecolor{dblue}{rgb}{0,0.08,0.75}

\hypersetup{
colorlinks,
linkcolor=dblue,
citecolor=dblue
}

\topmargin     0.0cm
\oddsidemargin -0.2cm 
\textwidth     16.8cm
\headheight    0.0cm
\textheight    22.4cm
\parindent     6mm
\parskip       1pt
\tolerance     1000


\newcommand{\cD}{\mathcal{D}}

\newtheorem{lemma}{Lemma}
\newtheorem{proposition}{Proposition}

\newtheorem{corollary}{Corollary}
\newtheorem{theorem}{Theorem}

\theoremstyle{definition}
\newtheorem{definition}{Definition}
\newtheorem{remark}{Remark}

\numberwithin{equation}{section}

\title{ { \sffamily Uniform Mean Estimation for Heavy-Tailed Distributions via Median-of-Means
} }

\author[1]{Mikael M\o ller H\o gsgaard}
\author[1]{Andrea Paudice}

\affil[1]{\footnotesize Department of Computer Science, Aarhus University, Denmark}

\date{}

\usepackage{mathtools,bbm,xspace}
\usepackage{bm}
\usepackage{graphicx}
\usepackage{wrapfig}


\newcommand{\eps}{\varepsilon}
\newcommand{\ind}{\mathbbm{1}}

\newcommand{\cl}{c_{l}}

\DeclareMathOperator{\mom}{\textsc{MoM}}
\newcommand{\rXc}{\mathbf{\check{\mathbf{X}}}}
\newcommand{\rXt}{\tilde{\mathbf{X}}}
\newcommand{\ball}{\textsc{B}}
\newcommand{\median}{\textsc{median}}

\DeclareMathOperator*{\p}{\mathbb{P}}

\DeclareMathOperator*{\e}{\mathbb{E}}

\makeatletter
\newcommand{\alphacmd@factory}[1]{}
\newcounter{alphacmdcounter}
\newcommand{\GenerateAlphabetCmds}[2]{%
    \renewcommand{\alphacmd@factory}[1]{%
        \expandafter\providecommand\csname #1##1\endcsname{{#2{##1}}}%
    }
    \setcounter{alphacmdcounter}{0}
    \loop
        \stepcounter{alphacmdcounter}
        \edef\alphacmd@ID{\@Alph\c@alphacmdcounter}
        \expandafter\alphacmd@factory\alphacmd@ID
    \ifnum\thealphacmdcounter<26
    \repeat
}
\newcommand{\GenerateAlphabetCmdsLower}[2]{%
    \renewcommand{\alphacmd@factory}[1]{%
        \expandafter\providecommand\csname #1##1\endcsname{{#2{##1}}}%
    }
    \setcounter{alphacmdcounter}{0}
    \loop
        \stepcounter{alphacmdcounter}
        \edef\alphacmd@ID{\@alph\c@alphacmdcounter}
        \expandafter\alphacmd@factory\alphacmd@ID
    \ifnum\thealphacmdcounter<26
    \repeat
}
\makeatother

\GenerateAlphabetCmds{c}{\mathcal}
\GenerateAlphabetCmdsLower{c}{\mathcal}

\GenerateAlphabetCmds{r}{\mathbf}
\GenerateAlphabetCmdsLower{r}{\mathbf}

\DeclareMathOperator*{\bbE}{\mathbb{E}}

\newcommand{\bbN}{\mathbb{N}}
\DeclareMathOperator*{\bbP}{\mathbb{P}}
\newcommand{\bbR}{\mathbb{R}}

\usepackage{xcolor}

\newcommand{\fat}{\mathrm{fat}}

\begin{document}
\maketitle

\begin{abstract}
The Median of Means (MoM) is a mean estimator that has gained popularity in the context of heavy-tailed data. In this work, we analyze its performance in the task of simultaneously estimating the mean of each function in a class $\cF$ when the data distribution possesses only the first $p$ moments for $p \in (1,2]$. We prove a new sample complexity bound using a novel symmetrization technique that may be of independent interest. Additionally, we present applications of our result to $k$-means clustering with unbounded inputs and linear regression with general losses, improving upon existing works.
\end{abstract}
\section{Introduction}
The problem of estimating the mean of a random variable from a finite sample of its i.i.d. copies is fundamental in statistics and machine learning. When the random variable has exponentially decaying tails, the sample mean exhibits optimal or near-optimal performance. In particular, for $\varepsilon, \delta \in (0,1)$, it is known that $\textsc{polylog}(1/\delta)/\varepsilon^2$ samples suffice to obtain an $\varepsilon$-close estimate with probability at least $1-\delta$. Recent studies have shown that heavier-tailed distributions, possessing only the first $p$ moments for $p \in (1,2]$, are better suited to model several important cases, including but not limited to, large attention and language models \cite{Zhang2020,Zhou2020,Gurbuzbalaban2021a,Gurbuzbalaban2021b}, certain applications in econometrics \cite{Bradley2003} and network science \cite{Barabasi2013}, and some classes of extremal processes \cite{Nair2022}. Under this model, the sample mean suffers from sub-optimal performance with a polynomial dependence on $1/\delta$ \cite{Catoni2012}. \emph{Median-of-Means} (MoM) is a mean estimator that provides optimal performance guarantees even under heavy-tailed distributions \cite{Nemirovskij1983,Jerrum1986,Alon1996}. Its popularity is largely due to its simplicity and efficiency. Indeed, its computation only requires splitting the sample into $\kappa$ batches, computing the sample mean in each batch, and then returning the median of these sample means, with an overall runtime that is quasi-linear in the number of observations. Notice that the user is only required to specify the number of batches, which should be of order $\log(1/\delta)$ for optimal performance.
    
In this work, we analyze the performance of the MoM estimator in solving the following significant generalization of the mean estimation task, a problem typically referred to as \emph{uniform convergence}. Given a set of real-valued functions $\cF$ over a \emph{domain} $\cX$, and a distribution $\cD$ supported over $\cX$, we consider the problem of estimating, simultaneously for each $f \in \cF$, the mean $\mu(f) = \bbE[f({\bf X})]$ from an i.i.d. sample $\rX \sim \cD^n$ generated from $\cD$. In particular, our goal is to estimate the \emph{sample complexity} of the MoM estimator, i.e., the smallest sample size $n^* = n(\varepsilon,\delta,\cF)$ that suffices to guarantee that for all $\varepsilon, \delta \in (0,1)$ and $n \geq n^*$, the following holds:
\begin{align}
\label{eq:problem_statement}
\underset{\rX \sim \cD^n}{\bbP} \Bigg(\underset{f \in \cF}{\sup} |\mom(f, \rX) - \mu(f)| \leq \varepsilon \Bigg) \geq 1-\delta.
\end{align}
Uniform convergence has fundamental applications in machine learning. First, given an estimator $\theta$ satisfying \eqref{eq:problem_statement}, one can \emph{learn} $\cF$ by minimizing $\theta(f, \rX)$ over $\cF$. Notice that if $\theta$ is the sample mean, this corresponds to the standard \emph{Empirical Risk Minimization} (ERM) paradigm. Second, such an estimator can be used to estimate the risk of any function in $\cF$ using the same data as for training. This is particularly useful when a test set cannot be set aside, or only an approximate solution to the empirical problem can be computed. 
Third, as the sample complexity of $ \theta $  features a dependence on some complexity measure of $\cF$, it  can be used to perform \emph{model selection}, i.e., to select a class of functions for the learning problem at hand before having a look at the data.

\subparagraph{Contributions.} We provide the following contributions.
\begin{itemize}
\item We show that, upon $\cF$ admitting a \emph{suitable} distribution-dependent approximation of size $N_{\cD}(\Theta(\eps), \Theta((v_p/\eps^{p})^{1/(p-1)}))$ and success parameter $ \kappa_{0}(\Theta(\delta)) $ , where $v_p$ is a uniform upper bound to the $L_p$ norm of the functions in $\cF$, the sample complexity of the MoM estimator is at most of order $(v_p/\varepsilon^p)^{1/(p-1)}(\log(N_{\cD}(\Theta(\eps, (v_p/\eps^{p})^{1/(p-1)}))/\delta)+\kappa_{0}(\Theta(\delta)))$, see \cref{thm:main} for formal statement. Specifically, we require that: given $\eps, \delta > 0$ and $m \in \bbN$, there exists a finite set $F_{(\eps,m)}$ of size at most $N_{\cD}(\eps, m)$ s.t. for a large enough $\kappa$ (larger than $  \kappa_{0}(\Theta(\delta)) $), with probability at least $1-\delta$ the functions in $\cF$ can be $\eps$-approximated on most of the $\kappa$ batches of 3 i.i.d. random samples $\rX_{0}, \rX_{1}, \rX_{2}$ of size $m \cdot \kappa$. We argue that this condition on $\cF$ is mild, and in addition to capture the canonical case of functions with bounded range, it also captures important classes of unbounded functions.

\item To illustrate this we show that our result applies to two important class of unbounded functions. First, we prove a novel \emph{relative} generalization error bound for the classical $k$-means problem that, compared with prior work, features an exponential improvement in the confidence term $1/\delta$. Second, we use the MoM estimator to derive sample complexity bounds for a large class of regression problems. Our sample complexity bound only requires \emph{continuity} of the loss function along with a bound on the norm of the weight vectors. 
We also provide a more refined bound in the more specific case of Lipschitz losses. Moreover, our sample complexity bounds match the known results for exponentially tailed distributions, only assuming the existence of the $p$-th moments for $ p\in(1,2] $ .

\item To derive the main result, we introduce a novel symmetrization technique based on the introduction of an additional \emph{ghost sample}, compared to the standard approach using only one ghost sample. While the first ghost sample is used to symmetrize the mean, the second ghost sample is used to symmetrize the $\mom$. Analyzing two ghost samples simultaneously requires non-trivial modifications to the canonical discretization and permutation steps. The new discretization step allows for relaxing a uniform approximation over the functions to an approximation at the sample mean level, only requiring most of the sample means to be approximated, which is a desirable feature when dealing with unbounded functions and heavy tailed data.

\end{itemize}
\section{Related Work}
The study of uniform convergence for classes of real-valued functions is a fundamental topic in statistical learning theory. In the special case of binary-valued functions, a complete (worst-case) characterization is provided by the Vapnik-Chervonenkis dimension of the class \cite{Vapnik1971}. When the range of the functions in $\cF$ is bounded within an interval, the problem is known to be solved by the sample mean as soon as the fat-shattering dimension \cite{Kearns1994} of $\cF$ is finite at all scales \cite{Alon1997,Bartlett1996,Colomboni2025}. In particular, the best known upper bounds on the sample complexity of the sample mean are of the order of $\eps^{-2}(\fat_{\eps} + \log(1/\delta))$, where $\fat_{\eps}$ denotes the fat-shattering dimension of $\cF$ at scale $\eps$.

The variant of the uniform convergence problem considered in this work is a special case of the formulation given in \cite{Oliveira2023} except we don't consider adversarial contaminations. Differently from our work, the authors in \cite{Oliveira2023} analyzed the performance of the \emph{trimmed mean} with a focus on the \emph{estimation error}. Their bounds feature a dependence on a quantity related to \emph{Rademacher complexity} \cite{Bartlett2003}. Similar results, but in the more restrictive case of $p \in (2, 3]$, have also been obtained by \cite{Minsker2019}, who considered a different class of estimators interpolating between the Catoni´s estimator \cite{Catoni2012} and the MoM. The estimation error of the MoM has been studied in \cite{Lugosi2020,Lecue2020} for $p=2$. These works also feature a dependence on a quantity related to \emph{Rademacher complexity} of $\cF$. Compared to this line of work focussing on the estimation error, our focus is on the sample complexity and is thus more aligned with the results discussed earlier in this section of \cite{Alon1997,Bartlett1996,Colomboni2025}. We notice that the Rademacher Complexity depends on the sample size, and thus it is sometimes problematic to derive an explicit sample complexity bound from an estimation error bound. Taking a sample complexity perspective allows for coping with function classes that are otherwise difficult to handle through the Rademacher Complexity such as $k$-means clustering with unbounded input and center spaces, and linear regression with general continuous losses. In that respect, we see our results for $p=2$ as a complement to these works. We remark that our proof technique differs from the bounded difference arguments proposed in \cite{Lugosi2020,Lecue2020}, and instead is based on a novel symmetrization argument that we believe may be of independent interest. In contrast, while \cite{Oliveira2023,Minsker2019,Lecue2020} consider both heavy-tailed distributions and adversarial contaminations, in this work, we focus exclusively on heavy-tailed distributions.

\section{Sample Complexity Bound}
In this section, we describe our main result and provide a sketch of its proofs (we refer to \cref{appendix:samplecomplexity} for the details).
    
\subsection{Notation}
We will use boldface letters for random variables and non-boldface letters otherwise. Throughout the section, $ \rb \sim \{  0,1\}$ will always denote the random variable defined as $ \p_{\rb}\left(\rb=0\right)=\p_{\rb}\left(\rb=1\right)=1/2 $. For a natural number $ \kappa \in \mathbb{N} $ we define the set $ [\kappa] =\left\{1,\ldots, \kappa \right\}$. Given two sets $A$ and $B$, $B^A$ denotes the set of all functions from $A$ to $B$. For a function $ f \in \cF \subseteq \bbR^{\cX}$, $m \in \bbN$, $ X \in \cX^{m}$, and a distribution $\cD$ over $\cX$, the notations $\mu_{(f,X)}$ and $\mu_f$ denote the sample mean of $f$ on $ X $, i.e. $ \mu_{(f,X)}=\sum_{i=1}^{m}f(X_{i})/m $  and its expectation over $\cD$ $ \mu_{f}=\e_{\rX\sim \cD}\left[f(\rX)\right] $. Furthermore, for $ p\in \left(1,2\right] $ we write $ \cF\subseteq L_{p}(\cD) $ iff $\sup_{f\in \cF} \e_{\rX\sim \cD}\left[f(\rX)^{p}\right] <\infty$.

For $\kappa\in\mathbb{N}$, if $a_{1},\ldots,a_{\kappa}\in\mathbb{R}$ and we let $a_{(1)}\leq \ldots,\leq a_{(\kappa)}$ denote the numbers in ascending order, we define their median as
\begin{align*}
\median(a_{1},\ldots,a_{\kappa}) =\begin{cases}
a_{((\kappa+1)/2)} \text{ if } \kappa \text{ is odd} \\
a_{(\kappa/2)} \text{ if } \kappa \text{ is even}.
\end{cases}
\end{align*}  
With the definition of the median, we can now define the MoM estimator.

\begin{algorithm}[H]
\caption{Median of Means (MoM) Estimator}
\label{alg:mom_definition}
Input: Sample $X = (X^1, \ldots, X^\kappa),$ $X^i \in \cX^m$ $m, \kappa \in \mathbb{N}$, function  $f: \cX \to \mathbb{R}$ 
\\
Return: $\mom(f,X) = \median(\mu_{f,X^1}, \ldots, \mu_{f,X^\kappa})$.
\end{algorithm}

In words, the MOM takes as input a sample consisting of $ \kappa $ blocks of $ m $ samples in each, and a function $ f $ wanting the mean estimate of.    

Finally, for $m, \kappa \in \mathbb{N}$, and $X_1, X_2, X_3 \in (\cX^m)^{\kappa}$, for each $l \in \{1,2,3\}$ we rely on the following notation,
\[
X_{l}=(X_{l,1}^{1},\ldots,X_{l,m}^{1},\ldots,X_{l,1}^{\kappa},\ldots,X_{l,m}^{\kappa}).
\]
\subsection{Proof Overview}
Here, we provide a high-level and intuitive explanation of the proof for our main theorem, we here provide \cref{fig:highlevel}  as a way of thinking about the proof pictorially. 

    \begin{figure*}[h]
    \centering
    \includegraphics[width=\textwidth]{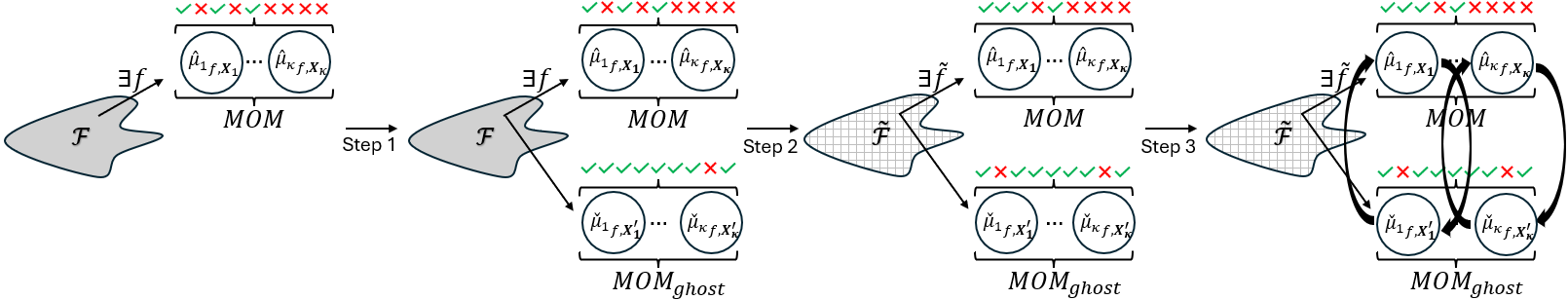}
    \caption{Proposed symmetrization approach. Red crosses and green ticks denote mean estimates that failed or succeeded respectively. Step 1: Symmetrization of the $\mom$ with a ghost sample. Step 2: Imbalance preserving discretization of the class $\cF$. Step 3: Permutation of the sample means between the $\mom$ of interest and the ``ghost'' $\mom$.}\label{fig:highlevel}
    \end{figure*}

The first thing we observe is that for the $ \mom $ to fail in providing a uniform error bound over the functions in $ \cF $, there must exist a function $ f\in \cF $ for which at least half of its $ \kappa $ mean estimates in the $ \mom $ fail to be $ \eps $-close to the true mean. However, for a fixed function $ f $, we know that the $ \mom $ is likely to have almost all of its $ \kappa $ mean estimates correct. We now leverage this in the first step of the analysis by introducing a ``ghost'' $ \mom $, that has almost all of its $ \kappa $  mean estimates correct for the function $ f\in \cF $, on which the $ \mom $ of interest had at least half of its $ \kappa $ mean estimates incorrect. This step is depicted in \cref{fig:highlevel} as ``Step 1,'' where the red crosses indicate whether a mean estimate is correct or not. We observe that the $ \mom $ of interest has at least half of its $ \kappa $ mean estimates incorrect, whereas the ``ghost'' $ \mom $ has very few errors among its $ \kappa $ mean estimates for the function $ f\in \cF $. This imbalance between incorrect mean estimates in the $ \mom $ of interest and the ``ghost'' $ \mom $ is key for ``Step 3,'' which argues that such an imbalance is unlikely due to the symmetry introduced in this step - ``Step 1'' can be seen as a symmetrization of the $ \mom $. 
    
The next step in the analysis involves discretizing the function class $ \cF $ into a finite-sized function class $ \tilde{\cF} $. Normally, this step would be performed by creating a net over the function class $ \cF $ for any possible estimating sequence. However, since we aim to provide bounds for potentially unbounded function classes, with finite moments, we adopt an alternative discretization. Specifically, we only require the discretization $ \tilde{\cF} $ of the function class  $ \cF $ to ensure that most of the mean estimates in both the $ \mom $ of interest and the ``ghost'' $ \mom $ remain the same - thus preserving the imbalance between incorrect mean estimates in the $ \mom $ of interest and the ``ghost'' $ \mom $ created in ``Step 1''. Furthermore, we also allow the discretization to fail for a negligible amount of mean estimates. This step is depicted as ``Step 2'' in \cref{fig:highlevel}, where we observe that the discretization $ \tilde{\cF} $ of $ \cF $ preserves the imbalance between incorrect mean estimates of the $ \mom $ of interest and the ``ghost'' $ \mom $.
    
The final step of the analysis is due to the previous two steps, to analyze the probability of the existence of a function $ \tilde{f}\in\tilde{F} $ for which the $ \mom $ of interest has close to half or more of its mean estimates incorrect, while the ``ghost'' $ \mom $ has very few incorrect mean estimates. First, since $ \tilde{F} $ is finite, it suffices to analyze a fixed $ \tilde{f}\in \tilde{F} $ and then do a union bound over $ \tilde{F} $. For a fixed $ \tilde{f} $, we leverage the symmetry introduced in ``Step 1'', namely using that the mean estimates of both the $ \mom $ of interest and the ``ghost'' $ \mom $ are i.i.d. Thus, we may view the  $ \kappa $  mean estimates of the $ \mom $ of interest and the ``ghost'' $ \mom $  as being "assigned" as follows: Draw two mean estimates, $ {\mu_{1}}_{\tilde{f},\rX} $ and $ {\mu_{2}}_{\tilde{f},\rX'} $, and with probability $ 1/2 $, assign $ {\mu_{1}}_{\tilde{f},\rX} $ to the $ \mom $ of interest and $ {\mu_{2}}_{\tilde{f},\rX'}$ to the ``ghost'' $ \mom $. Otherwise, assign $ {\mu_{2}}_{\tilde{f},\rX'} $ to the $ \mom $ of interest and $ {\mu_{1}}_{\tilde{f},\rX}$ to the ``ghost'' $ \mom $. Repeat this process $ \kappa $ times. Under this perspective, it is intuitively that having a large imbalance between the number of incorrect mean estimates for the $ \mom $ of interest and the ``ghost'' $ \mom $ - the $ \mom $ of interest has close to half or more of its mean estimates incorrect while the ``ghost'' $ \mom $ has very few incorrect mean estimates - is unlikely. This final step is depicted as ``Step 3'' in \cref{fig:highlevel}, where the mean estimates of the $ \mom $ of interest and the ``ghost'' $ \mom $ are permuted. 
    
The above high-level analysis contrasts with the conventional symmetrization-discretization-permutation argument, on the estimating sequence level, where the above analysis symmetrizes, discretizes, and permutes the mean estimates.
    
\subsection{Main Result}
To present our main result, we need the following definitions of discretization for a function class $\cF$. 
\begin{definition}[$(\eps,m)$-Discretization]
\label{def:eps_m_discretization}
Let $ 0<\eps$, $ m,\kappa\in \mathbb{N} $,  $ X_{0},X_{1},X_{2}\in (\cX^{m})^{\kappa} $. A function class $ \cF\subseteq \mathbb{R}^{\cX} $ admits a $(\eps, m)$- \textbf{discretization} on $X_{0},X_{1},X_{2}$ if there exists a set of functions $F_{(\eps, m)}$ defined on $X_{0},X_{1},X_{2}$ satisfying the following: for each $f \in \cF$, there exists $\pi(f) \in F_{(\eps, m)}$ and $I_{f}\subset [\kappa]$ s.t.: $|I_{f}| \leq \frac{2\kappa}{625}$,  and for each $i \in [\kappa]\backslash I_{f}$ and $\forall l \in \left\{ 0,1,2 \right\}$, it holds that \\
\begin{align}
\label{def:epsdiscretization}
\sum_{j=1}^{m} \left| \frac{f(X_{l,j}^{i})-\pi(f)(X_{l,j}^{i})}{m} \right| \leq \eps.
\end{align}
We call $ |F_{(\eps, m)}| $ the size of the $ \eps $-discretization of $ \cF $ on $ X_{0},X_{1},X_{2}.$  
\end{definition}
The above definition requires only that we can approximate most of the $\kappa$ sample means of a function $f \in \cF$ appearing in its MoM with those of its neighbor $\pi(f) \in F_{(\eps, m)}$, on all three samples $X_1, X_2, X_3$. The following definition extends this idea at distribution level, by requiring that with large probability, the samples $\rX_{0},\rX_{1},\rX_{2}$ allows $\cF$ to admit a $(\eps, m)$-discretization.
\begin{definition}[$\cD$-Discretization]
\label{def:D_discretization}
Let $ \cD $ be a distribution over $ \cX $. A function class $ \cF\subseteq\mathbb{R}^{\cX} $ admits a $ \cD $-discretization if there exists a \emph{threshold function} $ \kappa_{0} \in \mathbb{N}^{[0,1]} $, a \emph{threshold } $\eps_{0}>0$  and \emph{size function} $ N_{\cD} \in \mathbb{N}^{\mathbb{R}^{2}} $, s.t. for any $ 0<\eps<\eps_{0} $, $ 0<\delta<1 $, $ m \geq 1$, and $\kappa \geq \kappa_{0}(\delta)$, with probability at least $ 1-\delta $ (over  $ \rX_{0},\rX_{1},\rX_{2}\sim (\cD^{m})^{\kappa} $) it holds that: $ \cF $ admits a $(\eps, m)$-discretization $F_{(\eps,m)}$ on  $\rX_{0},\rX_{1},\rX_{2}$ and $|F_{(\eps, m)}|\leq N_{\cD}(\eps,m)$.  
\end{definition}    
\begin{remark}
\label{rem:discretization}
The following comments are in order.
\begin{itemize}
\item If a function class $ \cF \subseteq\mathbb{R}^{\cX}$ and $ \eps_{0}>0 $  is such that for any distribution $ \cD' $ over $ \cX $ and any $ 0<\eps\leq \eps_{0} $, $ \cF $   admits a $ \eps $-net $ \cN_{\eps}(\cD',\cF,L_{1}) $ in $ L_{1} $ with respect to $ \cD' $, i.e. for any $ f\in \cF $ there exists $ \pi(f)\in\cN_{\eps}(\cD',\cF,L_{1}) $ such that  
    \begin{align}\label{eq:remarknet}
      \e_{\rX\sim\cD'}\left[|f(\rX)-\pi(f)(\rX)|\right]\leq \eps
    \end{align}
    then for any $ 0<\eps\leq \eps_{0} $, $ m,\kappa\in \mathbb{N} $,  $ X_{0},X_{1},X_{2}\in (\cX^{m})^{\kappa} $, $ \cF $ admits a $ (\eps,m) $-discretization of size at most $ \sup_{\cD'}|\cN_{2\eps/1875}(\cD',\cF)|.$ Furthermore, for any data generating distribution $ \cD $ over $ \rX,$  $ \cF $ has $ \cD $-discretization with threshold function $ \kappa_{0}=1,$ threshold $ \eps_{0} $ and size function $ N_{\cD}(\eps,m) =\sup_{\cD'}|\cN_{2\eps/1875}(\cD',\cF)|.$ See the \cref{appendix:remark} for a proof of this claim.    

\item Let $p \geq 1$. For a function class $\cF$, it is known that the existence of a $\eps$-net w.r.t. $L_p(\cD')$ implies the existence of a $\eps$-net w.r.t. $L_1(\cD')$. Thus, each $\cF$ admitting a net w.r.t. to the $L_{p}$ metric, would also admit a $(\eps, m)$-discretization and a $ \cD $-discretization. 
    
\item Any function class $\cF$ bounded between $ [-1,1] $ and featuring finite fat shattering dimension $\textsc{fat}_{\eps}$  at every scale $\eps>0$, admits a $\eps$-net $\cN(\eps, L_{1}(\cD)) $ for any $\cD$ of size at most $\exp{\left(O(\textsc{fat}_{O(\eps)}\ln{\left(1/\eps \right)}) \right)}$ (see Corollary 5.4 in \cite{Rudelson2006}). This result can be also be extended to classes bounded in $[-M,M]$ for $M \geq 1$ with appropriate rescaling.

\item We remark that the definition of a $ \cD $-Discretization is allowed to depend on realizations of the samples, oppose to the stricter definition of having one fixed discretization which holds for all realizations of the samples. This view of considering discretizations that depend on the samples is (to our knowledge) the most common in the literature, see e.g. \cite{Shalev-Shwartz_Ben-David_2014}[Definition 27.1], \cite{KUPAVSKII202022}[Lemma 7] and \cite{Rudelson2006}[Theorem 4.4 and Corollary 5.4].

\end{itemize}
\end{remark}
    
We are now ready to present our main result.
\begin{theorem}[Main theorem]
\label{thm:mommain}
Let $\cF \subseteq \mathbb{R}^{\cX}$ and $\cD$ be a distribution over $ \cX $. Suppose that $\cF$ admits a $\cD$-discretization with threshold function $\kappa_{0} \in \mathbb{N}^{[0,1]}$, threshold $\eps_0$, and size function $N_{\cD} \in \mathbb{N}^{\mathbb{R}^{2}}$. Moreover, suppose that for some $p \in (1,2]$, $\cF \subseteq L_{p}(\cD)$ and let $v_{p} \geq \sup_{f\in \cF}\e_{\rX\sim \cD}\left[|f(\rX)-\e_{\rX\sim \cD}\left[f(\rX)\right]|^{p}\right]$. Then, there exist absolute numerical constants $c_2, c_3 > 0$ s.t., for any $\eps \in (0, \eps_{0})$ and $\delta \in (0,1)$, if  
\begin{align*}
m \geq  \left(\frac{400\cdot 16^{p}v_{p}}{ \eps^{p}}\right)^{\frac{1}{p-1}}, 
\kappa \geq \max\Bigg\{\kappa_{0}(\delta/8), \frac{10^{6}\ln{\left(2 \right)}}{99}, 50\ln{\left(\frac{8N_{\cD}(\eps/16,m)}{\delta} \right)}\Bigg\},
\end{align*}
it holds 
\begin{align}
\bbP_{\rX\sim (\cD^{m})^{\kappa}}\Big(\sup_{f \in \cF} |\mom(f,\rX)-\mu(f)|\leq \eps\Big) \geq 1 - \delta.
\end{align}
\end{theorem}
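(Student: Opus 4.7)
The plan is to upper bound $\bbP_{\rX\sim(\cD^m)^\kappa}\!\left(\sup_{f\in\cF}|\mom(f,\rX)-\mu(f)|>\eps\right)$ via the three-step symmetrization--discretization--permutation argument outlined in the Proof Overview, working throughout at the batch-mean level. The core probabilistic ingredient is a single-batch tail bound: using $\cF\subseteq L_p(\cD)$ together with a Marcinkiewicz--Zygmund / von Bahr--Esseen inequality for $p\in(1,2]$, the $p$-th central moment of a batch mean $\mu_{(f,\rX^i)}$ is at most a constant multiple of $v_p/m^{p-1}$, so Markov's inequality gives that for each fixed $f$, the probability that a given batch mean deviates from $\mu(f)$ by more than $\eps/16$ is bounded by a small constant $q$ (of order $1/400$ under the stated lower bound on $m$). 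I call such a batch \emph{bad} for $f$.

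First, I would introduce two independent ghost samples $\rX_1,\rX_2\sim(\cD^m)^\kappa$, matching the three-sample structure of \cref{def:eps_m_discretization}: $\rX_1$ symmetrizes the unknown mean $\mu(f)$ in the classical sense (replacing it by empirical batch averages), while $\rX_2$ defines a \emph{ghost MoM} which, by a Hoeffding bound on $\kappa$ i.i.d.\ Bernoulli$(q)$ bad-batch indicators, has at most $\kappa/100$ bad batches with probability $1-\exp(-\Omega(\kappa))$ for any prescribed $f$. The failure event forces some measurable selector $f^\star$ to have at least $\kappa/2$ bad batches in $\rX_0$, and combining with the ghost bound yields the \emph{imbalance event}: there exists $f\in\cF$ with $\ge\kappa/2$ bad batches in $\rX_0$ but $\le\kappa/100$ bad batches in the ghost.

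Second, I would invoke the $\cD$-discretization on the triple $(\rX_0,\rX_1,\rX_2)$ at scale $\eps/16$, taking $\kappa\ge\kappa_0(\delta/8)$ so the discretization event holds with probability at least $1-\delta/8$; by \cref{def:eps_m_discretization}, each $f$ admits a surrogate $\pi(f)\in F_{(\eps/16,m)}$ whose batch-wise $L_1$ deviation from $f$ is at most $\eps/16$ on all but $|I_f|\le 2\kappa/625$ batches, simultaneously on all three samples. Shifting the bad-batch threshold by $\eps/16$ and absorbing at most $2\kappa/625$ discretization-failed batches transfers the imbalance from $f$ to $\pi(f)$: the surrogate retains $\ge\kappa/2-2\kappa/625$ bad batches in $\rX_0$ and at most $\kappa/100+2\kappa/625$ in the ghost, at the widened scale $\eps/8$. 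Third, for each fixed $\tilde f\in F_{(\eps/16,m)}$ the pairs $(\mu_{(\tilde f,\rX_0^i)},\mu_{(\tilde f,\rX_2^i)})$, $i=1,\dots,\kappa$, are exchangeable, so introducing i.i.d.\ swap bits $\rb_i\in\{0,1\}$ that interchange the two components of the $i$-th pair when $\rb_i=1$ leaves the joint law invariant. Conditional on the values, whether a given batch mean is assigned to the ``original'' or the ``ghost'' $\mom$ is a fair coin flip, and a Hoeffding bound on $\kappa$ such coins bounds the imbalance event by $\exp(-c\kappa)$ for an absolute $c>0$. A union bound over the at most $N_{\cD}(\eps/16,m)$ surrogates, combined with $\kappa\ge 50\ln(8N_{\cD}(\eps/16,m)/\delta)$, drives this contribution below $\delta/2$; together with the $\delta/8$ loss from the discretization event and the $\exp(-\Omega(\kappa))$ from Step~1 (absorbed by $\kappa\ge 10^6\ln 2/99$), the total failure probability is at most $\delta$.

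The main obstacle will be Step~2: correctly threading the imbalance through a discretization that is itself allowed to depend on the samples, which is precisely why \cref{def:eps_m_discretization} demands simultaneous approximation on all three samples and why the dual ghost structure of Step~1 is needed. The constants ($2/625$ discretization slack, $1/100$ ghost threshold, single-batch Markov rate $q\le 1/400$, and the $\eps/16$ scale) must be tuned so that after absorbing the discretization defect and the single-batch heavy-tail failure rate, the Hoeffding exponent in Step~3 remains strictly positive and $\exp(-c\kappa)$ dominates the union-bound cost $N_{\cD}(\eps/16,m)$ under the stated lower bound on $\kappa$.
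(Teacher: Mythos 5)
Your sketch follows the paper's proof essentially step by step: symmetrize--discretize--permute at the batch-mean level with two ghost samples (one replaces $\mu(f)$ by an empirical anchor, the other is the low-error ghost $\mom$), the relaxed per-batch discretization of \cref{def:eps_m_discretization}, and a Hoeffding bound over swap bits; these are precisely the paper's \cref{thm:main}, \cref{lem:epscoverlemma}, and \cref{lem:permutationlem}, and the structural novelty (two ghost samples plus a discretization allowed to fail on an $O(\kappa)$ fraction of batches) is captured.

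The one substantive slip is the common permutation scale ``$\eps/8$'' in Step~2. After symmetrizing $\mu(f)$ by the anchor sample, the bad/good thresholds become $15\eps/16$ and $2\eps/16$ (each picks up an $\eps/16$ from the anchor's concentration); the discretization then moves each of these by a further $2\cdot\eps/16$ (once for the indexed sample, once for the anchor appearing inside the same indicator), giving $13\eps/16$ and $4\eps/16$. The permutation lemma requires both indicator families to be unified at a single threshold $T$ with $4\eps/16 \le T \le 13\eps/16$ -- the paper takes $T = 12\eps/16$. Your choice $T = \eps/8 = 2\eps/16$ lies below $4\eps/16$, so ``close at $4\eps/16$'' does not imply ``not far at $T$'' and the ghost bad-batch count cannot be bounded at that scale, which is exactly the quantity the permutation step needs. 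Once $T$ is chosen in the valid range, the rest of your accounting (single-batch failure rate of order $1/200$ at scale $\eps/16$ from von Bahr--Esseen plus Markov, the $\delta/8$ discretization budget, the factor from the two symmetrization conditionings absorbed via $\kappa\ge 10^6\ln 2/99$, and $\kappa\ge 50\ln(8N_{\cD}(\eps/16,m)/\delta)$ killing the union bound) matches the paper's.
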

\begin{remark}
The following comments are in order.

\begin{itemize}
\item To provide some intuition on the MoM parameters $m, \kappa, \eps, \delta$, we start noting that the dependency on $ \eps $ decides the number of samples $ m $ needed for each of the mean estimates, and is chosen such that they are within $ O(\eps) $ distance from the true expectation with constant probability. Furthermore, both $ \eps $ and $ \delta $ also go into the number of mean estimates, $ \kappa.$ The \emph{intuition} for the choice of $ \kappa,$ is that the MoM, which is based on aggregation of $ \kappa$ mean estimates, boosts the constant success probability to $1- \delta/N_{\cD}(\eps,m) $ probability for any function in the discretization, and one can then do a union bound.

\item The sample complexity bound implied by our theorem is of the order of
\[
    \left(\frac{v_{p}}{\eps^{p}}\right)^{\frac{1}{p-1}} \Bigg( \log N_{\cD}(\eps/16,\left(v_{p}/\eps^{p}\right)^{\frac{1}{p-1}}) + \log\Big( \frac{1}{\delta}\Big)\Bigg),
\]
when not taking into account $ \kappa_{0}(\delta/8),$ and therefore of order $\left(v_{p}/\eps^{p}\right)^{1/(p-1)} \log(v_{p}/(\eps \delta))$ as soon as $ N_{\cD}(\eps/16,(v_{p}/\eps^{p})^{1/(p-1)}) = O((v_{p}/\eps)^{\alpha})$ for some constant $\alpha$. We notice that this is optimal (up to log factors) \cite{Devroye2016}.

\item In order to apply this result, one needs to find a $\cD$-discretization of $\cF$. In \Cref{rem:discretization} we have seen that this is possible if $\cF$ is bounded. In the next section, we show two concrete examples of unbounded classes that admit such a cover.

\item The estimation error bound in \cite{Lecue2020}, holding for $p=2$, instead is of the order of
\[
\frac{\mathcal{R}(\cF,\cD,n)}{n} + \sqrt{\frac{\log(1/\delta)}{n}}
\]
where $n$ is the sample size and $\mathcal{R}(\cF,\cD,n)$ is the Rademacher complexity of $\cF$ over a sample of size $n$. To derive a sample complexity bound from this, one should be able to get an explicit estimate of $\mathcal{R}(\cF,\cD,n)$ in terms of $n$. This has already been done for certain classes of bounded or well-behaved functions (see for example \cite{Bartlett2003,Maurer2010}), it may be intersting to see if a relaxed notation of discretization, in the same spirit of \Cref{def:D_discretization}, can lead to explicit bounds even for broader classes of functions.

\item We remark here that the magnitude of the constants in \cref{thm:mommain} is rather large. This is due to the symmetrization, discretization, and permutation arguments, and was not optimized. Notice that it is not uncommon for symmetrization-discretization-permutation arguments to yield large constants, for instance: \cite{Bartlett1996} having constant of approximately $1500$ (read from proof of Theorem 9 (5)), and later improved, asymptotically, by \cite{Colomboni2025} having a constant of approximately $5000$ (read from point (j) page 13), and \cite{Long1999} having a constant of approximately $500$ (read from lemma 9).
\end{itemize}
\end{remark}
\subsection{Analysis}
We now give the proof of \cref{thm:mommain}. We start by noting that for the $\mom$ to fail, it must be the case that at least $1/2$ of the mean estimates are $\eps$-away from the expectation, as in the converse case the median is $\eps$-close to the expectation. Thus, to bound the failure probability of the $\mom$ it suffices to upper bound the probability of the former event. Before presenting the upper bound, we introduce the following auxiliary random variables that will be useful throughout this section. For $\flat \in \{>,\leq\}, \kappa, m \in \bbN, \eps > 0$, $\rX_0, \rX_1, \rX_2 \sim ((\cD)^m)^{\kappa}$, $X_0, X_1, X_2 \in ((\cX)^m)^{\kappa}$, and a  random vector $\rb \in \{0,1\}^{\kappa}$, with independent coordinates with $\p\left[\rb_{i}=0\right]=\p\left[\rb_{i}=1\right]=1/2$, we define
\begin{align*}
\hat{\rS}_{\rb}^{(\flat)}(f, \eps) &= \sum_{i=1}^{\kappa}\frac{\ind\{ |\mu_{f,\rX_{\rb_i}^{i}}-\mu_{f,\rX_{2}^{i}}| \flat \eps \}}{\kappa}, \ 
\rS_{\rb}^{(\flat)}(f, \eps) = \sum_{i=1}^{\kappa}\frac{\ind\{ |\mu_{f,X_{\rb_i}^{i}}-\mu_{f,X_{2}^{i}}|\flat\eps \}}{\kappa}.
\end{align*}
In words $ \hat{\rS}_{\rb}^{(>)}(f, \eps) $ is the fraction of the $\kappa$ mean estimates of $f$ that are $\eps$-away from the mean estimates of $f$ on the sample $ \rX_{2} $, and $ \hat{\rS}_{\rb}^{(\leq)}(f, \eps) $ is the fraction of the $\kappa$ mean estimates of $f$ that are $\eps$-close to the mean estimates of $f$ on the sample $ \rX_{2} $.     
We also consider $\hat{\rS}_{1-\rb}^{(\flat)}(f, \eps)$ and $\rS_{1-\rb}^{(\flat)}(f, \eps)$, where $1-\rb = (1-\rb_1,\dots,1-\rb_{\kappa})$. Now we can state our symmetrization lemma.
\begin{lemma}[Symmetrization]
\label{thm:main}
Let $p \in (1,2], \eps > 0$, and $ \cD $ a distribution over $ \cX $. Suppose that $\cF \subseteq L_{p}(\cD) $, and let $ v_{p}\geq \sup_{f\in \cF}\e_{\rX\sim \cD}\left[|f(\rX)-\e_{\rX\sim \cD}\left[f(\rX)\right]|^{p}\right]$. Then, if $m \geq \left(\frac{400\cdot 16^{p}v_{p}}{ \eps^{p}}\right)^{\frac{1}{p-1}}$ and $\kappa\geq\left(\frac{10^{6}\ln{\left(2 \right)}}{99}\right)$ we have that
\begin{align*}
\negmedspace\p_{\rX\sim (\cD^{m})^{\kappa}}\negmedspace\negmedspace\left(\exists f\in \cF: \sum_{i=1}^{\kappa}\frac{\ind\{ |\mu_{f,\rX^{i}}-\mu_{f}|>\eps \}}{\kappa}\geq \frac{1}{2}\right) 
\negmedspace\leq\negmedspace4\negmedspace\negmedspace\negmedspace\negmedspace\negmedspace\negmedspace\negmedspace\p_{\substack{\rb\\\rX_{0},\rX_{1},\rX_{2}} }\negmedspace\negmedspace\negmedspace\negmedspace\negmedspace\negmedspace\negmedspace\Big(\exists f\in \cF: \hat{\rS}_{\rb}^{(>)}\Big(f, \frac{15\eps}{16}\Big) \geq a, \negmedspace\hat{\rS}_{1-\rb}^{(\leq)}\Big(f, \frac{2\eps}{16}\Big)>\negmedspace b\Big),
\end{align*}
where $a=\frac{4801}{10000}, b=\frac{9701}{10000}, \rb\sim \left\{0,1\right\}^{\kappa}$, and $\rX_0, \rX_1, \rX_2 \sim (\cD^{m})^{\kappa}$.
\end{lemma}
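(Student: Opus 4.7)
The plan is a two-ghost-sample symmetrization. I would introduce fresh i.i.d.\ copies $\rX_1,\rX_2$ of $\rX_0=\rX$ and show that, conditionally on the LHS failure event for a witness $f$, with probability at least $1/4$ both ghost samples concentrate around $\mu_f$ at scale $\eps/16$; a triangle-inequality bookkeeping then converts this joint event into the RHS event, while a final exchangeability argument reinserts the sign vector $\rb$ at no cost. Informally, $\rX_1$ plays the role of a ``ghost MoM'' (to be paired with $\rX_0$ via $\rb$), while $\rX_2$ plays the role of a ``sample-mean replacement'' for the population mean $\mu_f$.

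The argument has four substeps. \textbf{(i) Per-batch $L_p$ moment bound.} By the Bahr--Esseen / Marcinkiewicz--Zygmund inequality, $\e[|\mu_{f,\rX_\ell^i}-\mu_f|^p]\leq 2v_p/m^{p-1}$, so Markov combined with the hypothesis $m\geq(400\cdot 16^{p}v_p/\eps^{p})^{1/(p-1)}$ gives $\p(|\mu_{f,\rX_\ell^i}-\mu_f|>\eps/16)\leq 1/200$ for each $\ell\in\{1,2\}$ and each batch $i$. \textbf{(ii) Ghost-good events.} For constants $c_1,c_2$ to be tuned, define $G_\ell(f)=\{\#\{i\in[\kappa]:|\mu_{f,\rX_\ell^i}-\mu_f|>\eps/16\}\leq c_\ell\kappa\}$. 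An exponential tail bound for a Bernoulli sum with mean at most $1/200$, calibrated via the hypothesis $\kappa\geq 10^6\ln(2)/99$, yields $\p(G_\ell(f)^c)\leq 1/2$ uniformly in $f$; independence of $\rX_1,\rX_2$ then gives $\p(G_1(f)\cap G_2(f)\mid \rX_0)\geq 1/4$ for any (possibly $\rX_0$-measurable) choice of $f$. \textbf{(iii) Selecting a witness.} Let $E_{0,f}$ be the LHS event restricted to $f$ and pick a measurable witness $\hat f(\rX_0)$ whenever it exists; integrating (ii) yields $\p(\exists f: E_{0,f}\cap G_1(f)\cap G_2(f)) \geq \tfrac{1}{4}\,\p(\exists f: E_{0,f})$. \textbf{(iv) Triangle inequality and exchangeability.} On $E_{0,f}\cap G_1(f)\cap G_2(f)$, the triangle inequality shows that at least $(1/2-c_2)\kappa\geq a\kappa$ batches satisfy $|\mu_{f,\rX_0^i}-\mu_{f,\rX_2^i}|>15\eps/16$ and at least $(1-c_1-c_2)\kappa>b\kappa$ batches satisfy $|\mu_{f,\rX_1^i}-\mu_{f,\rX_2^i}|\leq 2\eps/16$; this is exactly the RHS event at the deterministic swap $\rb\equiv 0$. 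Since $(\rX_0,\rX_1)$ are i.i.d.\ blocks and $\rb$ is independent of the samples, the swapped pair $(\rX_{\rb_i}^i,\rX_{1-\rb_i}^i)_i$ has the same joint law as $(\rX_0^i,\rX_1^i)_i$, so averaging over $\rb$ does not change the RHS probability.

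The main obstacle is the joint tuning of the four constants $(a,b,c_1,c_2)$: step (iv) forces $c_2\leq 1/2-a=199/10000$ and $c_1+c_2<1-b=299/10000$, while step (ii) forces the $c_\ell$ large enough for $\p(G_\ell(f)^c)\leq 1/2$ at the Bernoulli rate $1/200$. The specific numerical choices $a=4801/10000$, $b=9701/10000$, and the (relatively generous) threshold $\kappa\geq 10^6\ln(2)/99$ are precisely what is needed for these constraints to be simultaneously satisfiable with a routine Bernoulli tail bound. The only non-mechanical point beyond this constant bookkeeping is the exchangeability step, which must be stated at the level of the unswapped pair $(\rX_0,\rX_1)$ rather than for a fixed $f$, since the $\exists f$ quantifier in the RHS can depend on $\rb$.
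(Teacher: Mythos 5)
Your proposal is correct and matches the paper's proof in all essential respects: Bahr--Esseen/Marcinkiewicz--Zygmund plus Markov to get per-batch concentration at rate $1/200$, multiplicative Chernoff at the threshold $\kappa\geq 10^6\ln(2)/99$ to upgrade this to a ``most batches concentrate'' event, a triangle-inequality conversion to $\rX_2$-centered deviations, and exchangeability of the $(\rX_0,\rX_1)$ blocks to insert $\rb$ (you correctly flag that this must be argued at the level of the whole sample rather than per fixed $f$). The only cosmetic difference is organizational: the paper introduces the two ghost samples sequentially (two factors of $2$, with an explicit intermediate event $\check{\rM}_{\leq}(f,\eps/16)>(99/100)(199/200)$), whereas you introduce $\rX_1,\rX_2$ simultaneously and use their conditional independence given $\rX_0$ to get the factor $4$ in one shot; both routes use the same constants $c_1=c_2=299/20000$.
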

    
We notice that the above lemma captures the situation described in ``Step 1'' of Figure 1. That is, we have related the event of the $\mom$ failing, with the event that one $ \mom $ has many incorrect mean estimates (with the true mean being estimated), and a second $ \mom $ has few incorrect mean estimates. Notice the $ \rb_{i} $'s have been set up for the  permutation argument, which will show that this imbalance is unlikely.
    
Before applying the permutation step, we discretize the function class to enable a union bound over the event that the mean estimate fails for each function in the class. The following lemma relies on the existence of a $(\eps, m)$-discretization: by definition, moving from $\cF$ to its discretization only changes the number of mean estimates that are good approximations of the "true" mean estimate $ \mu_{f,\rX_{2}^{i}} $ or, conversely, the number of bad mean estimates, slightly. In other words, this discretization preserves, the imbalance created in the symmetrization step.  
    
\begin{lemma}[Discretization]\label{lem:epscoverlemma}
Let $m,\kappa\in \mathbb{N}, \eps > 0$, and  $X_{0},X_{1},X_{2}\in (\cX^{m})^{\kappa} $. Suppose that $\cF$ admits a  $(\frac{\eps}{16}, m)$-discretization $F_{(\eps/16, m)}$ over $X_{0}, X_{1}, X_{2}$. Then, it holds that 
\begin{align*}
&\p_{\rb \sim \left\{0,1\right\}^{\kappa}}  \Big(\exists f  \in \cF: \rS_{\rb}^{(>)}\Big(f, \frac{15\eps}{16}\Big) \geq a, \rS_{1-\rb}^{(\leq)}\Big(f, \frac{2\eps}{16}\Big)>  b\Big) 
\\
\leq |F_{(\eps/16, m)}|    \sup_{f\in F_{(\eps/16, m)}}  &\p_{\rb\sim \left\{0,1\right\}^{\kappa}}  \Big( \rS_{\rb}^{(>)}\Big(f, \frac{12\eps}{16}\Big)\geq c,\rS_{1-\rb}^{(>)}\Big(f, \frac{12\eps}{16}\Big)  <   d\Big),
\end{align*}
where $a=\frac{4801}{10000}, b=\frac{9701}{10000}, c=\frac{4769}{10000}, d=\frac{331}{10000}$.
\end{lemma}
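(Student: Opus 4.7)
The plan is to convert the supremum over the (possibly infinite) class $\cF$ into a union bound over the finite discretization $F_{(\eps/16, m)}$ via the map $f \mapsto \pi(f)$, using a deterministic, $\rb$-wise implication. Concretely, I would fix an outcome of $\rb \in \{0,1\}^{\kappa}$ and show that whenever some $f \in \cF$ satisfies both $\rS_{\rb}^{(>)}(f, 15\eps/16) \geq a$ and $\rS_{1-\rb}^{(\leq)}(f, 2\eps/16) > b$, its neighbor $\pi(f) \in F_{(\eps/16, m)}$ automatically satisfies both $\rS_{\rb}^{(>)}(\pi(f), 12\eps/16) \geq c$ and $\rS_{1-\rb}^{(>)}(\pi(f), 12\eps/16) < d$; the union bound over the (fixed) set $F_{(\eps/16, m)}$ then delivers the stated inequality.

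To establish the implication, let $\pi(f)$ and $I_f \subseteq [\kappa]$ with $|I_f| \leq 2\kappa/625$ be as in \cref{def:eps_m_discretization}. For every $i \notin I_f$ and every $l \in \{0,1,2\}$, the discretization bound combined with the triangle inequality inside the sum gives $|\mu_{f, X_l^i} - \mu_{\pi(f), X_l^i}| \leq \eps/16$. Applying the triangle inequality once more between the $\rb_i$-batch (or $(1-\rb_i)$-batch) and the $2$-batch, it follows that for every $i \notin I_f$, $|\mu_{\pi(f), X_{\rb_i}^i} - \mu_{\pi(f), X_2^i}|$ differs from $|\mu_{f, X_{\rb_i}^i} - \mu_{f, X_2^i}|$ by at most $2\eps/16$, and likewise with $1-\rb_i$ in place of $\rb_i$. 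Hence, on such $i$: $|\mu_{f, X_{\rb_i}^i} - \mu_{f, X_2^i}| > 15\eps/16$ implies $|\mu_{\pi(f), X_{\rb_i}^i} - \mu_{\pi(f), X_2^i}| > 13\eps/16 > 12\eps/16$, and $|\mu_{f, X_{1-\rb_i}^i} - \mu_{f, X_2^i}| \leq 2\eps/16$ implies $|\mu_{\pi(f), X_{1-\rb_i}^i} - \mu_{\pi(f), X_2^i}| \leq 4\eps/16 \leq 12\eps/16$. Summing these pointwise implications over $i \in [\kappa] \setminus I_f$ and absorbing the contribution of the at most $2\kappa/625$ indices in $I_f$ as a $\pm 2/625 = \pm 32/10000$ counting slack in each direction, I obtain
\[
\rS_{\rb}^{(>)}(\pi(f), 12\eps/16) \;\geq\; \rS_{\rb}^{(>)}(f, 15\eps/16) - \tfrac{2}{625} \;\geq\; a - \tfrac{32}{10000} \;=\; c,
\]
\[
\rS_{1-\rb}^{(>)}(\pi(f), 12\eps/16) \;\leq\; 1 - \rS_{1-\rb}^{(\leq)}(f, 2\eps/16) + \tfrac{2}{625} \;<\; (1-b) + \tfrac{32}{10000} \;=\; d,
\]
where the strict inequality in the second line is inherited from $\rS_{1-\rb}^{(\leq)}(f, 2\eps/16) > b$. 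These are exactly the two conditions defining the right-hand-side event for $\pi(f)$, so the deterministic containment of $\rb$-events is proved.

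The main obstacle here is pure bookkeeping rather than any probabilistic argument: the scales ($15\eps/16$, $2\eps/16$, and $12\eps/16$) and the probabilistic thresholds ($a, b, c, d$) must be chosen so that the two deterministic losses---the $2\eps/16$ triangle-inequality slack and the $2/625$ counting slack from the exceptional set $I_f$---fit simultaneously inside the target event for $\pi(f)$, with strict inequalities preserved on the $b \mapsto d$ side. The randomness of $\rb$ is then handled trivially by the union bound over $F_{(\eps/16, m)}$ once the above containment has been verified.
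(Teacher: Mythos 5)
Your proposal is correct and follows essentially the same route as the paper's own proof: fix a realization of $\rb$, use the $(\eps/16,m)$-discretization bound together with the triangle inequality to transfer the two threshold events from $f$ to $\pi(f)$ (incurring a $2\eps/16$ slack in the threshold and a $\tfrac{2}{625}=\tfrac{32}{10000}$ slack in the fraction from the exceptional set $I_f$), and finally union-bound over the finite set $F_{(\eps/16,m)}$. The only cosmetic difference is that the paper passes through the intermediate thresholds $13\eps/16$ and $4\eps/16$ before relaxing to $12\eps/16$, while you go there directly; you also correctly preserve the strict inequality $<d$ on the second condition, matching the lemma statement exactly.
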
 
The above lemma is described as ``Step 2'' in Figure 1. That is, the function class has been discretized while preserving the imbalance in the number of incorrect mean estimates, and the problem has now been reduced to analyzing an imbalance of correct mean estimates between two $\mom$s for a single function.
    
The following \emph{permutation} lemma, states that having two sets of mean estimates that differ widely on their quality happens with exponentially small probability, in the number of estimates $ \kappa$. This lemma models the situation depicted as ``Step 3'' in \cref{fig:highlevel}.
\begin{lemma}[Permutation]
\label{lem:permutationlem}
Let $m, \kappa\in\mathbb{N}, \eps > 0$, and $ X_0, X_1, X_2 \in (\cX^{m})^{\kappa} $. Then, for any $f \in \mathbb{R}^{\cX}$, it holds that
\begin{align*}
\p_{\rb\sim \left\{0,1\right\}^{\kappa}}\negmedspace\Big(\rS_{\rb}^{(>)}\Big(f, \frac{12\eps}{16}\Big)\geq c, \rS_{1-\rb}^{(>)}\Big(f, \frac{12\eps}{16}\Big)\negmedspace<\negmedspace d\Big)\negmedspace 
\leq \exp\left(-\frac{\kappa}{50}\right)
\end{align*}
where $c=\frac{4769}{10000}$ and $d=\frac{331}{10000}$.
\end{lemma}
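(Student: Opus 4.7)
My plan is to exploit the fact that once $X_{0}, X_{1}, X_{2}$ are fixed, the only randomness in the statement is the $\kappa$-bit vector $\rb$ with independent coordinates, and that the difference $\rS_{\rb}^{(>)}(f, 12\eps/16) - \rS_{1-\rb}^{(>)}(f, 12\eps/16)$ collapses into a normalized Rademacher sum. Hoeffding's inequality then delivers the desired exponential bound essentially for free.

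First I would introduce, for each $i \in [\kappa]$, the deterministic indicators
\[
\alpha_{i} = \ind\{|\mu_{f, X_{0}^{i}} - \mu_{f, X_{2}^{i}}| > 12\eps/16\}, \qquad \beta_{i} = \ind\{|\mu_{f, X_{1}^{i}} - \mu_{f, X_{2}^{i}}| > 12\eps/16\}.
\]
Unpacking the definition of $\rS^{(>)}$, the $i$-th summand of $\kappa\rS_{\rb}^{(>)}$ is $(1 - \rb_{i})\alpha_{i} + \rb_{i}\beta_{i}$ while that of $\kappa\rS_{1-\rb}^{(>)}$ is $\rb_{i}\alpha_{i} + (1 - \rb_{i})\beta_{i}$, so subtracting gives
\[
\kappa\bigl(\rS_{\rb}^{(>)}(f, 12\eps/16) - \rS_{1-\rb}^{(>)}(f, 12\eps/16)\bigr) = \sum_{i=1}^{\kappa} (1 - 2\rb_{i})(\alpha_{i} - \beta_{i}).
\]
On ``agreement'' indices with $\alpha_{i} = \beta_{i}$ the summand vanishes, whereas on the ``disagreement'' set $C = \{i : \alpha_{i} \neq \beta_{i}\}$ (a deterministic subset of $[\kappa]$, since $X_{0}, X_{1}, X_{2}$ are fixed), each summand equals $\pm 1$ symmetrically, and the summands are independent across $i$ by independence of the coordinates of $\rb$.

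Second, I would translate the joint event into a one-sided Rademacher tail. The event $\{\rS_{\rb}^{(>)} \geq c,\ \rS_{1-\rb}^{(>)} < d\}$ forces $\rS_{\rb}^{(>)} - \rS_{1-\rb}^{(>)} > c - d$, hence its probability is at most that of a Rademacher sum of length $|C| \leq \kappa$ exceeding $(c - d)\kappa$. Hoeffding's inequality bounds this by $\exp\bigl(-(c-d)^{2}\kappa/2\bigr)$, and a short check on the provided constants $c = 4769/10000$, $d = 331/10000$ confirms $(c-d)^{2}/2 > 1/50$, yielding the claimed $\exp(-\kappa/50)$.

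I do not anticipate a substantive obstacle: the argument is a clean three-move sequence (rewrite the difference as a signed sum, lower-bound the deviation implied by the event, apply Hoeffding), and the only care required is the bookkeeping in the first step, to confirm that the summands indexed by $C$ are genuinely symmetric $\pm 1$ variables independent across $i$. The constants $c, d$ have evidently been engineered upstream so that $c - d$ is comfortably large, leaving the final Hoeffding step with considerable slack.
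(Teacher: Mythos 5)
Your proof is correct, and it takes a genuinely different — and in fact cleaner — route than the paper's.

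The paper's argument proceeds by first observing that the quantity $\zeta = \sum_{j=0}^{1}\sum_{i=1}^{\kappa}\ind\{|\mu_{f,X_{j}^{i}}-\mu_{f,X_{2}^{i}}|>12\eps/16\}$ is deterministic (independent of $\rb$), and that the event in question can have positive probability only if $\zeta \geq c\kappa$. From this it deduces that the set $I$ of indices $i$ where at least one of the two samples fails has $|I|\geq c\kappa/2$, lower-bounds the expectation $\eta = \bbE[\rS_{1-\rb}^{(>)}]\geq |I|/(2\kappa)\geq 4769/40000$, and then applies a one-sided \emph{multiplicative} Chernoff lower-tail bound to $\rS_{1-\rb}^{(>)}$ alone (the condition $\rS_{\rb}^{(>)}\geq c$ is never used probabilistically, only to pin down $\zeta$). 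You instead rewrite the difference $\kappa(\rS_{\rb}^{(>)}-\rS_{1-\rb}^{(>)})$ as a Rademacher sum $\sum_{i\in C}(1-2\rb_{i})(\alpha_{i}-\beta_{i})$ supported on the deterministic disagreement set $C$, note the joint event forces this sum to exceed $(c-d)\kappa$, and apply \emph{additive} Hoeffding with $|C|\leq\kappa$ to get $\exp(-(c-d)^{2}\kappa/2)\leq\exp(-\kappa/50)$ (indeed $(c-d)^{2}/2 = (0.4438)^{2}/2 \approx 0.0985 > 1/50$; when $C=\emptyset$ the sum is $0<(c-d)\kappa$ and the probability is trivially $0$). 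Your version is shorter, avoids introducing $\zeta$ and $\eta$, uses both halves of the joint event symmetrically through the difference, and in fact delivers a better exponent ($\approx 0.0985\kappa$ versus the paper's $\approx 0.031\kappa$) before both are rounded to $\kappa/50$; the paper's route, on the other hand, is the one that more naturally isolates the role of $\rS_{1-\rb}^{(>)}$ having small expectation, which is the heuristic picture emphasized in the paper's ``Step 3'' discussion.
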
 
We are now ready to show the proof of Theorem \ref{thm:mommain}.
\begin{proof}[Proof of \cref{thm:mommain}]
For the $\mom$ to fail to provide a uniform estimation for $ \cF $ it must be the case that there exists a function $ f\in \cF $ s.t. at $1/2$ of the mean estimates of its $ \mom $ fails. That is
\begin{align*}
\p_{\rX\sim(\cD^{m})^{\kappa}}\left(\sup_{f \in \cF} |\mom(f,\rX)-\mu(f)| > \eps\right) 
\leq \p_{\rX\sim (\cD^{m})^{\kappa}}\left(\exists f\in \cF: \sum_{i=1}^{\kappa}\frac{\ind\{ |\mu_{f,\rX_{i}}-\mu_{f}|>\eps \}}{\kappa}\geq \frac{1}{2}\right).
\end{align*}
Since $m \geq \left(\frac{400\cdot 16^{p}v_{p}}{ \eps^{p}}\right)^{\frac{1}{p-1}}$ and $\kappa\geq\frac{10^{6}\ln{\left(2 \right)}}{99}$, \Cref{thm:main} yields   
\begin{align*}
\p_{\rX\sim(\cD^{m})^{\kappa}}\left(\exists f\in\cF: |\mom(f,\rX)-\mu_{f}|>\eps\right) 
\leq\negmedspace4\negmedspace\negmedspace\negmedspace\negmedspace\negmedspace\negmedspace\negmedspace\negmedspace\negmedspace\negmedspace\p_{\substack{\rb\sim \left\{0,1\right\}^{\kappa}\\ \rX_{0},\rX_{1},\rX_{2}\sim (\cD^{m})^{\kappa}} }\negmedspace\negmedspace\negmedspace\negmedspace\negmedspace\negmedspace\negmedspace\negmedspace\negmedspace\Big(\exists f\in \cF: \hat{\rS}_{\rb}^{(>)}\Big(f, \frac{15\eps}{16}\Big) \geq a, \nonumber
\hat{\rS}_{1-\rb}^{(\leq)}\Big(f, \frac{2\eps}{16}\Big) >\negmedspace b\Big),
\end{align*}

with $a=\frac{4801}{10000}$ and $b=\frac{9701}{10000}$.
Now let $ G $ denote the event that the samples $ \rX_{0}, \rX_{1}, \rX_{2} $ are s.t. $\cF$ admits a $(\eps/16, m)$-discretization of size at most $N_{\cD}(\eps/16, m)$ over them. Then, since by hypothesis $\kappa \geq \kappa_{0}(\delta/8)$, it holds that 
\begin{align}
 &\p_{\rX\sim(\cD^{m})^{\kappa}}\left(\exists f\in\cF: |\mom(f,\rX)-\mu_{f}|>\eps\right) \nonumber
 \\
 \leq 4\negmedspace&\e_{\rX_{0},\rX_{1},\rX_{2}\sim (\cD^{m})^{\kappa}}\Big[ 
\ind\{ G\} \p_{\rb\sim \left\{0,1\right\}^{\kappa} }\Big(\exists f\in \cF: \negmedspace:\negmedspace \hat{\rS}_{\rb}^{(>)}\Big(f, \frac{15\eps}{16}\Big) \geq\negmedspace a, 
\label{eq:mommain1}
 \hat{\rS}_{1-\rb}^{(\leq)}\Big(f, \frac{2\eps}{16}\Big)>\negmedspace b\Big)\Big]+\delta/2.
\end{align} 

Since for each realization $ X_{0},X_{1},X_{2} $ of $ \rX_{0},\rX_{1},\rX_{2} \in G$, $\cF$ admits a $(\eps/16, m)$-discretization, \Cref{lem:epscoverlemma} implies that 
\begin{align*}
\p_{\rb\sim \left\{0,1\right\}^{\kappa} }\Big(\exists f\in \cF: \negmedspace:\negmedspace \hat{\rS}_{\rb}^{(>)}\Big(f, \frac{15\eps}{16}\Big) \geq\negmedspace a, \hat{\rS}_{1-\rb}^{(\leq)}\Big(f, \frac{2\eps}{16}\Big)>\negmedspace b\Big) 
\\\leq |F_{(\eps/16, m)}|\negmedspace\negmedspace\sup_{f\in F_{(\eps/16, m)}}\negmedspace\p_{\rb\sim \left\{0,1\right\}^{\kappa}}\negmedspace\Big( \rS_{\rb}^{(>)}\Big(f, \frac{12\eps}{16}\Big)\geq c,\qquad\qquad\rS_{1-\rb}^{(>)}\Big(f, \frac{12\eps}{16}\Big)\negmedspace<\negmedspace d\Big),
\end{align*}
where $c=\frac{4769}{10000}$ and $d=\frac{331}{10000}$.
Notice that the term on the right-hand side, by \Cref{lem:permutationlem}, can be bounded with $\exp{\left(-\kappa/50 \right)}$. Thus, if we take $ \kappa\geq 50\ln{\left(\frac{8N_{\cD}(\eps/16,m)}{\delta} \right)}$, the above it is at most $\delta/8$, which, combined with \eqref{eq:mommain1}, implies that
\begin{align*}
\p_{\rX\sim(\cD^{m})^{\kappa}}\left(\exists f\in\cF: |\mom(f,\rX)-\mu_{f}|>\eps\right)\leq \delta
\end{align*}
and concludes the proof.
\end{proof}
\section{Applications}
In this section we present two applications of Theorem \ref{thm:mommain}. 
    
\subsection{k-Means Clustering over Unbounded Spaces}
$k$-means clustering is one of the most popular clustering paradigms. Here, we provide a new sample complexity bound that improves upon existing works for the case of unbounded input and centers.
    
\paragraph{Preliminaries.} Given $x, y\in \mathbb{R}^{d}$, we let $ d(x,y)^{2}= \left|\left| x-y \right|\right|^{2} $. We use $k \in\mathbb{N}$ to denote the number of centers, and $Q \in \mathbb{R}^{d\times  k }$ to denote the centers meant as the columns of $ Q $. For $ x\in \mathbb{R}^{d} $ and $Q \in \mathbb{R}^{d\times  k }$, we let the \emph{loss} of $Q$ on $x$ be defined as $d(x,Q)^{2}=\min_{q\in Q} \left|\left| x-q \right|\right|^{2}$, where the minimum is taken over the columns of $Q$. For a distribution $\cD $ over $ \mathbb{R}^{d} $, we let $ \mu= \e_{\rX\sim\cD}\left[\rX\right]$ and $ \sigma^{2}=\e_{\rX\sim\cD}\left[d(\rX,\mu)^{2}\right]$.
    
In the $k$-means clustering problem, we are given random i.i.d. samples from $\cD$, and the objective if to minimize \emph{risk} $R(Q) = \bbE_{\rX \sim \cD}[d(\rX, Q)^2]$ over $Q \in \bbR^{d\times  k}$. Our goal is to provide a uniform estimation bound for all possible sets of $k$-centers. We consider the class of \emph{normalized loss functions} defined below. For $Q \in \mathbb{R}^{d\times  k } $, we define
\[
f_{Q}(x) = \frac{2d(x,Q)^2}{\sigma^{2}+\e_{\rX\sim\cD}\left[d(\rX,Q)^{2}\right]},
\]
and $ \cF_{k}= \left\{f_{Q}|Q\in \mathbb{R}^{d\times  k }  \right\}$.
The class $\cF_k$ has been introduced in \cite{Bachem2017} and provide several advantages compared to the standard loss class including scale-invariance, and that it allows to derive uniform bounds even when the support of $\cD$ is unbounded and $Q \in \bbR^d$.
The next theorem provide a bound on the sample complexity of the $\mom$ for this problem.
\begin{theorem}
\label{pro:kmeans}
Let $ k \in \mathbb{N}$ and let $ \cD $ be a distribution over $ \mathbb{R}^{d}$ s.t. $\sigma^2 < \infty$. Suppose that, there exists a $p \in (1,2]$ s.t. $\cF_{k}\subseteq L_{p}(\cD)$, and  $\infty>v_{p} \geq \sup_{f\in \cF_{k}}\e_{\rX\sim \cD}\left[|f(\rX)-\e_{\rX\sim \cD}\left[f(\rX)\right]|^{p}\right]$. Then, $\cF_k$ admits a $\cD$-discretization with 
\begin{align*}
\kappa_{0}(\delta) = 2\cdot8000^{2}\ln{\left(e/\delta \right)},\
\eps_0 = 1,\
N_{\cD}(\eps, m) = 8 \left( \frac{72\cdot10^{4}\cdot 8000e}{\eps } \right)^{140kd\ln{\left(6k \right)}}.
\end{align*}
Moreover, let $\eps, \delta \in (0,1)$, if  
\begin{align*}
m \geq \left(\frac{400\cdot 16^{p}v_{p}}{ \eps^{p}}\right)^{\frac{1}{p-1}},\
\kappa \geq \max\left(\kappa_{0}(\delta/8),\frac{10^{6}\ln{\left(2 \right)}}{99} ,50\ln{\left(\frac{8N_{\cD}(\eps/16,m)}{\delta} \right)}\right)
\end{align*}
then 
\begin{align*}
\bbP_{\rx\sim(\cD^{m})^{\kappa}}\Big(\sup_{f \in \cF_k} |\mom(f,\rX)-\mu(f)|\leq \eps\Big) \geq 1 - \delta.
\end{align*}
\end{theorem}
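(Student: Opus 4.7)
The plan is to construct a $\cD$-discretization of $\cF_k$ with the stated $\kappa_{0}, \eps_{0}, N_{\cD}$, after which \cref{thm:mommain} immediately yields the concentration bound, since the conditions on $m$ and $\kappa$ line up verbatim. The core challenge is handling two kinds of unboundedness: centers $Q \in \bbR^{d \times k}$ and inputs $x \in \bbR^{d}$. Both are tamed by the normalization in $f_{Q}(x) = 2 d(x,Q)^{2}/(\sigma^{2} + \e[d(\rX,Q)^{2}])$, which renders $f_{Q}$ scale-invariant with $\e[f_{Q}] \leq 2$, and by the $L_{p}$ moment bound $v_{p}$ which controls the heavy tails.

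First, I would construct a finite candidate class $F_{(\eps/16,m)}$ by covering $\cF_{k}$ in a manner independent of the empirical measure (which is why $N_{\cD}(\eps,m)$ has no $m$-dependence). Following the decomposition used in \cite{Bachem2017}, for ``far'' centers $Q$, meaning those whose $\e[d(\rX,Q)^{2}]$ dominates $\sigma^{2}$, the function $f_{Q}$ is approximately $2 d(x,\mu)^{2}/\e[d(\rX,Q)^{2}]$, depending on $Q$ only through a scalar; a coarse one-dimensional net covers this regime. For ``near'' centers, I would invoke the pseudo-dimension of $\cF_{k}$, known to be $O(kd \ln k)$, together with the Rudelson-style covering bound invoked in \cref{rem:discretization}, after a constant-level truncation $\tilde f_{Q} = \min(f_{Q}, T)$. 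The resulting net size $\exp(O(kd \ln(k) \ln(1/\eps)))$ matches the claimed $(1/\eps)^{140 k d \ln(6k)}$, and the prefactor $8$ in $N_{\cD}$ absorbs a constant number of union bounds over sample indices and truncation events.

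Second, this intrinsic cover must be promoted to the batch-wise condition of \cref{def:eps_m_discretization}, i.e.\ for each $f \in \cF_{k}$ there must exist $\pi(f) \in F_{(\eps/16,m)}$ and $I_{f}$ with $|I_{f}| \leq 2\kappa/625$ such that $\sum_{j=1}^{m}|f(X_{l,j}^{i}) - \pi(f)(X_{l,j}^{i})|/m \leq \eps/16$ for every $i \in [\kappa] \setminus I_{f}$ and each $l \in \{0,1,2\}$. By construction $\e[|f - \pi(f)|] \leq \eps/C$ for a suitably small constant $C$; applying Markov on each batch shows the fraction of ``bad'' batches has expectation much smaller than $2/625$, and a Chernoff-type bound over batches boosts this to a uniform-over-$F_{(\eps/16,m)}$ high-probability statement. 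This step is the source of the $\kappa \geq \kappa_{0}(\delta) = 2 \cdot 8000^{2} \ln(e/\delta)$ requirement: the large constant reflects the need to control the bad-batch fraction simultaneously across the three samples $\rX_{0}, \rX_{1}, \rX_{2}$ and across the entire cover after an additional union bound of size $8 N_{\cD}(\eps/16,m)$.

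The main obstacle is the interaction between the two unboundedness sources at the batch level: without the slack of allowing $|I_{f}| \leq 2\kappa/625$ bad batches, a single heavy-tailed outlier in a batch could corrupt its empirical mean arbitrarily, so the argument must exploit this slack rather than insist on pointwise batch control. A secondary subtlety is ensuring the cover is valid simultaneously on all three samples; this is handled cleanly since $F_{(\eps/16,m)}$ is defined intrinsically rather than sample-dependently, so a single cover plus union bound over $l \in \{0,1,2\}$ suffices. Once the $\cD$-discretization is established with the claimed parameters, \cref{thm:mommain} immediately yields the desired concentration inequality.
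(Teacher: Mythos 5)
There is a genuine structural gap. You plan to build a sample-independent $L_1(\cD)$ net, then, for each $f$ (via its projection $\pi(f)$), Chernoff-bound the fraction of batches where $\sum_j |f - \pi(f)|/m$ is large, and finally union-bound over the $N_{\cD}$ cover elements. This cannot yield the claimed $\kappa_0(\delta) = 2\cdot 8000^2 \ln(e/\delta)$: a union bound over the cover injects a $\ln N_{\cD}(\eps/16,m)$ term into the threshold $\kappa_0$, but the theorem requires $\kappa_0$ to depend on $\delta$ only (the $\ln N_{\cD}$ dependence appears in a \emph{separate} term of the $\kappa$ requirement, supplied by \cref{lem:permutationlem}). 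Worse, the batch-failure event depends on $f$, not on $\pi(f)$ alone: $f \mapsto \pi(f)$ is many-to-one, so controlling the batch sums for each $g$ in the cover does not control $\sum_j |f(X_{l,j}^i) - \pi(f)(X_{l,j}^i)|/m$ uniformly over the uncountably many $f \in \cF_k$ mapping to $g$. Your plan never bridges from ``per cover element'' to ``all $f$.''

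The paper sidesteps both issues by reversing the order of operations. It first defines a \emph{function-independent} high-probability event: batch $i$ is ``good'' if the empirical average of the envelope $s(x) = 4d(x,\mu)^2/\sigma^2 + 8$ (which dominates every $f_Q$ by Bachem's Lemma~1) over that batch's $3m$ points is at most $12\cdot 8000$. By Markov $\p(G_i^C) \leq 1/8000$, and a single Chernoff bound (no union over the cover) gives $|I| \geq (1-1/8000)^2\kappa$ with probability $1-\delta$ once $\kappa \geq 2\cdot 8000^2\ln(e/\delta)$. \emph{Conditioned} on the realized samples, it then defines the empirical measure $\cD_G$ on the good batches, invokes Bachem's Lemma~4 (pseudo-dimension packing) with $\e_{\cD_G}[s] \leq 12\cdot 8000$ to get a \emph{sample-dependent} net of the claimed size, and finally uses a deterministic averaging argument (not a probabilistic Chernoff) to show that for every $f \in \cF_k$ at most $\kappa/10^4$ of the good batches violate the $\eps$-approximation; adding $|I^C| \leq (1-(1-1/8000)^2)\kappa$ gives $|I_f| \leq 2\kappa/625$. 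Your ``far/near'' decomposition and the truncation $\tilde f_Q = \min(f_Q,T)$ do not appear: the paper relies directly on the envelope and on the $L_1(\cD_G)$ packing bound, so the cover is built on the data rather than intrinsically, and $N_{\cD}$ is $m$-free because the packing bound depends only on the conditional envelope mean and the pseudo-dimension, not on $m$.
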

\begin{remark}
The following comments are in order.
\begin{itemize}
\item The sample complexity bound implied by \Cref{pro:kmeans} is of the order of
\begin{align}
\label{eq:kmeans_sc}
\frac{v_{p}^{\frac{1}{p-1}}}{\eps^{\frac{p}{p-1}}} \Bigg(d k \log k \log \frac{1}{\eps} + \log \frac{1}{\delta} \Bigg).
\end{align}
Notice that the $d k \log k$-term depends on the number of centers $k$ and the dimensionality of the problem $d$, and resembles a \emph{complexity} term.
     
\item The literature on generalization bounds for $k$-means is rich and has mostly focussed on distributions with bounded support and centers lying in a norm ball of a given radious \cite{Linder1994,Bartlett1998,Levrard2013,Antos2005,Maurer2010,Telgarsky2013}. The work closer to ours, in that it considers inputs and centers from unbounded sets, is \cite{Bachem2017}. In that work, authors analyze the problem of uniform estimation over $\cF_k$ with the sample mean and show a sample complexity bound of the order of
\begin{align}
\label{eq:kmeans_bachem}
\frac{\mathcal{K}}{\eps^2 \delta} \Bigg(d k \log k + \log \frac{1}{\delta} \Bigg),
\end{align}
where $\mathcal{K} = \bbE[d(\rX,\mu)^4]/\sigma^4$ is the \emph{kurtosis} of $\cD$.
    
We start noticing that \cite{Bachem2017} requires the finiteness of the kurtosis, while our result only requires $\cD$ to have a finite variance and $\cF_k \subseteq L_p(\cD)$ for some $p \in (1,2]$. To see that our condition is weaker, observe that when $\mathcal{K} < \infty$ then $\cF_k \subseteq L_2(\cD)$ (See \cref{lem:kmeans1} and the relation between $ f\in \cF_{k} $ and $ s $). Focussing on the case of $p=2$, we have the following observations. First, notice that our sample complexity bound is exponentially better in the confidence term $1/\delta$, this is due to the stronger concentration properties of the $\mom$ compared to the sample mean. Second, in \eqref{eq:kmeans_bachem} the confidence term $1/\delta$ multiplies the complexity term $d k \log k$ which is undesirable. In contrast, in our sample complexity bound these two terms are decoupled. We finally note that our bound suffers from a slightly worse dependence on $\eps$ due to the extra log term.
    
\item We have focussed on providing uniform estimation guarantees for the class $\cF_k$ of normalized losses. In practice, one may be instead intered in bounding the risk $R(Q)$ of a certain set of centers $Q$, given its performance on the sample. Calculations show that one can get such a bound from \Cref{pro:kmeans} (see the \cref{appendix:kmeans} for details). In particular, under the same assumptions of \Cref{pro:kmeans}, for each $Q \in \mathbb{R}^{(d \times k)}$ the following holds with probability at least $1-\delta$
\begin{align}
           R(Q)  \in  (1 \pm \eps)  \Big(  \mom(d(\cdot,Q)^{2},\rX)\pm\frac{\eps\sigma^{2}}{2}  \Big),
\end{align}
%
\end{itemize}
\end{remark}
\subsection{Linear Regression with General Losses}
Linear regression is a classical problem in machine learning and statistics. This problem is typically studied either in the special case of the squared loss or for (possibly) non-smooth Lipschit losses. We consider instead the more general class of continuous losses and show a new sample complexity result that holds for broad class of distributions.
    
\paragraph{Preliminaries.} In this section $\ell \in [0,\infty)^{\mathbb{R}}$ will denote a continuous loss function unless further specified. We consider the function class obtained composing linear predictors with bounded norm with $ \ell $. That is, for $W > 0$, we define
\[
\cF_{W}=\left\{\ell(\langle w, \cdot \rangle-\cdot) \mid w\in \mathbb{R}^{d},  \left|\left| w \right|\right|\leq W \right\}.
\]
Thus, if $ f \in \cF_{W} $, then $f((x,y))=\ell(\left\langle(w,-1),(x,y) \right\rangle)= \ell(\left\langle w,x\right\rangle -y),$ for any $ x\in \mathbb{R}^{d} $, $ y\in \mathbb{R} $.
For $ a,b>0 $, the define number $ \alpha_{\ell}(a,b) $ as the largest positive real s.t. for $x, y \in [-a,a]$ and  $|x-y| \leq \alpha_{\ell}(a,b) $ we have that $ |\ell(x)-\ell(y)| \leq b $. Since $ \ell $ is continuous and $[-a,a]$ is a compact interval, $\alpha_{\ell}(a,b)$ is well-defined. Thus, $\ell$ is uniform continuous on $[-a,a]$. Furthermore, when $\ell$ is $L$-Lipschitz, then $\alpha_{\ell}(a,b)=b/L$.
    
The next result provides a uniform bound that holds for general continuous losses. 
\begin{theorem}
\label{thm:regression}
Let $ W>0 $ and $ \cD_{X}$ and $ \cD_{Y} $ be distributions over $ \mathbb{R}^{d} $ and $ \mathbb{R} $ respectively and let $\cD = \cD_X \times \cD_Y$. Suppose that, there exists a $p \in (1,2]$ s.t. $\cF_{W}\subseteq L_{p}(\cD)$, and $\infty> v_{p} \geq \sup_{f\in \cF_{W}}\e_{\rX\sim \cD}\left[|f(\rX)-\e_{\rX\sim \cD}\left[f(\rX)\right]|^{p}\right].$ Then $\cF_W$ admits a $\cD$-discretization with 
\begin{align*}
\kappa_{0}(\delta) =  4\cdot1250^{2}\ln{\left(e/\delta \right)}, \
\eps_0 = \infty, \
N_{\cD}(\eps,m) = \left(\frac{6 W}{\beta(\eps,m,\cD)}\right)^{d},
\end{align*} 
where 
\begin{align*}
& \beta(\eps,m,\cD) = \min\Big(\frac{W}{2},\frac{ \alpha_{\ell}(J,\eps)}{3750 \left(\e\left[||X||_1\right]+\e\left[|Y|\right]\right)m}\Big), \\
&J = \left(3W/2+1\right)\cdot 3750 \left(\e\left[||\rX||_1\right]+\e\left[|\rY|\right]\right)m.
\end{align*} 
Moreover, let $ \eps\in(0,\infty)$ and $\delta \in (0, 1)$, if 
\begin{align*}
m \geq \left(\frac{400\cdot 16^{p}v_{p}}{ \eps^{p}}\right)^{\frac{1}{p-1}}, \
\kappa \geq \max\left(\kappa_{0}(\delta/8),\frac{10^{6}\ln{\left(2 \right)}}{99} ,50\ln{\left(\frac{8N_{\cD}(\eps/16,m)}{\delta} \right)}\right),
\end{align*} 
then 
\begin{align*}
\bbP_{\rX\sim(\cD^{m})^{\kappa}}\Big(\sup_{f \in \cF_W} |\mom(f,\rZ)-\mu(f)|\leq \eps\Big) \geq 1 - \delta,
\end{align*}  
where $\rZ= (\rX,\rY)\sim ((\cD_{X}\times \cD_{Y})^{m})^{\kappa} $.
\end{theorem}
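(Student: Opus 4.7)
The plan is to reduce the statement to \cref{thm:mommain} by verifying that $\cF_W$ admits a $\cD$-discretization (in the sense of \cref{def:D_discretization}) with the prescribed $\kappa_0$, $\eps_0=\infty$, and $N_{\cD}$. Since $\cF_W$ is parametrized by the Euclidean ball $\{w\in\bbR^d:\|w\|\leq W\}$, the natural candidate is obtained by transporting a standard $\beta$-net $\cN$ of this ball (with $\beta=\beta(\eps,m,\cD)$) into function space via $w\mapsto f_w=\ell(\langle w,\cdot\rangle-\cdot)$, and setting $F_{(\eps,m)}=\{f_{w'}:w'\in\cN\}$. The classical volumetric covering bound gives $|\cN|\leq (1+2W/\beta)^d\leq (6W/\beta)^d=N_{\cD}(\eps,m)$, matching the announced size.

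To argue this is a valid $(\eps,m)$-discretization on three independent samples with high probability, I would first control block-wise empirical norms through a good event $G$. For each block $i\in[\kappa]$ and sample index $l\in\{0,1,2\}$, let
\[
U_i^l=\frac{1}{m}\sum_{j=1}^m\bigl(\|X_{l,j}^i\|_1+|Y_{l,j}^i|\bigr).
\]
Markov's inequality gives $\bbP(U_i^l>3750(\bbE[\|\rX\|_1]+\bbE[|\rY|]))\leq 1/3750$; a union bound over $l$ shows each block is "bad" (violates this bound for some $l$) with probability at most $1/1250$. A Chernoff-type bound on the independent indicators across $i\in[\kappa]$ then ensures that, as soon as $\kappa\geq\kappa_0(\delta)=4\cdot 1250^2\ln(e/\delta)$, the fraction of bad blocks is at most $2/625$ with probability at least $1-\delta$. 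Let $I$ denote the (random but $w$-independent) set of bad blocks, so that $|I|\leq 2\kappa/625$ as required by \cref{def:eps_m_discretization}.

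On $G$, for any $w$ with $\|w\|\leq W$ and its net projection $w'$, each retained block $i\notin I$ has $\|X_{l,j}^i\|_1+|Y_{l,j}^i|\leq mU_i^l\leq 3750m(\bbE[\|\rX\|_1]+\bbE[|\rY|])$ for every $j$ and every $l$. Combined with $\|w'\|\leq W+\beta\leq 3W/2$ (valid since $\beta\leq W/2$), this yields both $|\langle w,X_{l,j}^i\rangle-Y_{l,j}^i|$ and $|\langle w',X_{l,j}^i\rangle-Y_{l,j}^i|\leq J$, while $|\langle w-w',X_{l,j}^i\rangle|\leq\beta\|X_{l,j}^i\|_1\leq\alpha_\ell(J,\eps)$ by the definition of $\beta$. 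The uniform continuity of $\ell$ on $[-J,J]$ (together with the definition of $\alpha_\ell$) then gives $|\ell(\langle w,X_{l,j}^i\rangle-Y_{l,j}^i)-\ell(\langle w',X_{l,j}^i\rangle-Y_{l,j}^i)|\leq\eps$ \emph{pointwise} in $j$, and the averaged discretization inequality of \cref{def:eps_m_discretization} follows immediately. Invoking \cref{thm:mommain} with the verified $\kappa_0$, $\eps_0=\infty$, and $N_{\cD}$ closes the proof.

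The principal obstacle is precisely this norm-control step: since $\ell$ is only assumed continuous (and generally unbounded), the approximation $|\ell(u)-\ell(u')|\leq\eps$ needs pointwise control of both arguments lying in a fixed compact interval, not merely a bound on the average of $|u-u'|$. This forces the net scale $\beta$ to shrink like $1/m$ through a uniform bound on $\|X_{l,j}^i\|_1$ across every sample in the retained blocks, and in turn explains the appearance of $m$ inside both $J$ and $\beta$. Making the constants in the Chernoff step align with the announced $\kappa_0(\delta)=4\cdot 1250^2\ln(e/\delta)$ is a routine but somewhat delicate calculation on the bad-block indicators.
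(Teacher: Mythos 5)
Your proposal is correct and follows essentially the same route as the paper's proof: a Markov/Chernoff control on the empirical $\ell_1$-norms per block to identify a small set of bad blocks, a $\beta$-net of the ball $\ball(W)$ transported to function space, the chain of Cauchy--Schwarz and $\|\cdot\|_2\leq\|\cdot\|_1$ bounds to place both residuals in $[-J,J]$ and to control $|\langle w-w',X\rangle|$ by $\alpha_\ell(J,\eps)$, and then an invocation of \cref{thm:mommain}. The only cosmetic difference is in the good event: the paper uses a single Markov bound on the combined average $\sum_{l=0}^2\sum_{j=1}^m \|(X^i_{l,j},Y^i_{l,j})\|_1/(3m)\leq 1250(\e[\|X\|_1]+\e[|Y|])$, whereas you take a per-$l$ bound at threshold $3750(\cdot)$ and union-bound over $l\in\{0,1,2\}$; both give failure probability $1/1250$ per block and the same pointwise bound $\|(X^i_{l,j},Y^i_{l,j})\|_1\leq 3750m(\e[\|X\|_1]+\e[|Y|])$ on retained blocks, so the downstream argument is unaffected.
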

\begin{remark}
The following comments are in order.
\begin{itemize}
\item If we omit the dependence on $ v_{p},$  the sample complexity bound implied by \Cref{thm:regression} is of the order of
\begin{align}
\label{eq:regression_general}
\frac{1}{\eps^{\frac{p}{p-1}}} \Bigg( \log \Big(\frac{ WJ }{\alpha_{\ell}(J,\eps)}\Big)+\log\Big(\frac{1}{\delta}\Big) \Bigg).
\end{align}
Notice that, for a given loss function $\ell$, the $d \log \Big(\frac{ W}{\alpha_{\ell}(J,\eps)}\Big)$ depends both on $d, \eps$ and $W$ as well as $J$. Which resembles a complexity term, depending on the distribution via $ J $, the complexity of $ \ell $ via $ \alpha_{\ell},$ and the norm of the regressor and its dimension $ d $ .

\item If the loss function $\ell$ is also $L$-Lipschitz and $ \ell(0)=0 $ , it is possible to obtain a more explicit bound. In particular, calculations (see \cref{appendix:linear} for details) show that, if $\sup_{w\in \ball(W)}\e\left[  \left| \left\langle w,\rX \right\rangle \right|^p \right]+\e\left[\left| \rY \right|^p   \right]<\infty,$ and omitting it in the following expression(also omitting $ v_{p} $ ), the sample complexity is at most of the order of(assuming $ W\geq1 $ )
\begin{align}
\label{eq:regression_lip}
\frac{1}{\eps^{\frac{p}{p-1}}} \Bigg(d \log \Big(\frac{WL}{\eps}\Big)+\log\Big(\frac{1}{\delta}\Big) \Bigg).
\end{align}
In this case, the dependence in $\epsilon$ is explicit and of the order of $\eps^{\frac{p}{1-p}} \log(1/\eps)$.

\item We notice that the rate \ref{eq:regression_lip} matches, in terms of the dependence in $\eps$ and $\delta$ and up to log factors, the know rates of the sample average when the distributions of $\|\rX\|$ and $\rY$ are sub-exponential (see for example \cite{Maurer2021}). For this class of distributions, all moments exist, while our result only requires the existence of the $p$-th moment for $ p\in(1,2] $ . We also point out that a similar generality on the distribution is also achieved by \cite{Lecue2020} which also relied on the $\mom$ estimator. The main difference is that their bound has a dependence on the Rademacher complexity of $\cF_W$, which as far as we know, is not explicit for this class distribution.
\end{itemize}
\end{remark}

\section{Conclusions}
In this work, we made a novel analysis of the $\mom$ for the problem of estimating the expectation of all functions in a class only assuming that moments of order up to $p \in (1,2]$ exist. We have focussed on the sample complexity and identified a new notation of discretization that allows the $\mom$ to solve the task. To obtain this result, we have also developed a new symmetrization technique. We also showed that it is possible to find such a discretization in two important cases: $k$-means clustering and linear regression. It is interesting to find other applications where a discretization of the function class is possible. Finally, another problem is to match asymptotically lower and upper bounds to the sample complexity of uniform mean estimation under heavy tails, as done in \cite{Valient2022} for the single mean estimation problem.  
We leave these questions for future work.
\section*{Acknowledgement}
Mikael Møller Høgsgaard is supported by DFF Sapere Aude Research Leader Grant No.
9064-00068B by the Independent Research Fund Denmark. 
Andrea Paudice is supported by Novo Nordisk Fonden Start Package Grant No. NNF24OC0094365 (\emph{Actionable Performance Guarantees in Machine Learning}).

\bibliographystyle{plain}
\bibliography{references.bib}  

\appendix

\section{Details of \cref{rem:discretization}}\label{appendix:remark}
In this appendix we elaborate on \cref{rem:discretization}.
\begin{remark}
    If a function class $ \cF \subseteq\mathbb{R}^{\cX}$ and $ \eps_{0}>0 $  is such that for any distribution $ \cD' $ over $ \cX $ and any $ 0<\eps\leq \eps_{0} $, $ \cF $   admits a $ \eps $-net $ \cN_{\eps}(\cD',\cF,L_{1}) $ in $ L_{1} $ with respect to $ \cD' $, i.e. for any $ f\in \cF $ there exists $ \pi(f)\in\cN_{\eps}(\cD',\cF,L_{1}) $ such that  
    \begin{align}\label{eq:remarknet}
      \e_{\rX\sim\cD'}\left[|f(\rX)-\pi(f)(\rX)|\right]\leq \eps
    \end{align}
    then for any $ 0<\eps\leq \eps_{0} $, $ m,\kappa\in \mathbb{N} $,  $ X_{0},X_{1},X_{2}\in (\cX^{m})^{\kappa} $, $ \cF $ admits a $ (\eps,m) $-discretization of size at most $ \sup_{\cD'}|\cN_{2\eps/1875}(\cD',\cF)|.$ Furthermore, for any data generating distribution $ \cD $ over $ \rX,$  $ \cF $ has $ \cD $-discretization with threshold function $ \kappa_{0}=1,$ threshold $ \eps_{0} $ and size function $ N_{\cD}(\eps,m) =\sup_{\cD'}|\cN_{2\eps/1875}(\cD',\cF)|$     
\end{remark}
    To see the above let $ 0<\eps\leq \eps_{0} $  $ X_{0},X_{1},X_{2} \in (\cX^{m})^{\kappa}.$ Now consider the following distribution $ \cD' $ induced by $ X_{0},X_{1},X_{2}  $  given by $ \cD'(x)=\sum_{l=0}^{2}\sum_{x'\in X_{l}  }\frac{\ind\{ x=x' \} }{3\kappa m} $, i.e. points in the sequences $ X_{0},X_{1},X_{2}  $ are weighted proportionally to the number of occurrences they have in  $ X_{0},X_{1},X_{2}  $. Let now $ \cN_{2\eps/1875}(\cD',\cF) $ be a $ 2\eps/1875 $-net for $ \cF $ in $ L_{1} $ with respect to $ \cD' $ and let $ I $ denote the subset of $ [\kappa] $  such that $I=\{ i:i\in [\kappa],\exists l\in \{0,1,2  \}, \sum_{x\in X_{l}^{i}}  \frac{|f(x)-\pi(f)(x)|}{m}>\eps\}$ , then we have by \cref{eq:remarknet} and the definition of the distribution $ \cD' $  that   
    \begin{align*}
       2\eps/1875\geq \e_{\rX\sim\cD'}\left[|f(x)-\pi(f)(x)|\right]=\sum_{i=1}^{k}\sum_{l=0}^{2}\sum_{j=1}^{m} \frac{|f(X_{l,j}^{i})-\pi(f)(X_{l,j}^{i})|}{3m\kappa}
       \\
       \geq 
       \sum_{i\in I}\sum_{l=0}^{2}\sum_{j=1}^{m} \frac{|f(X_{l,j}^{i})-\pi(f)(X_{l,j}^{i})|}{3m\kappa}\geq \frac{\eps|I|}{3\kappa }
       \end{align*}
    which implies that $ |I|\leq \frac{6\kappa}{1875}=\frac{2\kappa}{625} $, and shows that $ \cF $ admits a  $ (\eps,m)$-discretization $ \cN_{2\eps/1875}(\cD',\cF) $  on $   X_{0},X_{1},X_{2}$, and that it has size at most $ \sup_{\cD}|\cN_{2\eps/1875}(\cD',\cF)|$.

    We notice that the above held for any $ 0<\eps\leq \eps_{0} $, $ m,\kappa\in \mathbb{N} $,  $ X_{0},X_{1},X_{2}\in (\cX^{m})^{\kappa} $, so especially also for the outcome of $ \rX_{0},\rX_{1},\rX_{2}\sim (\cD^{m})^{\kappa} $ for any distribution $ \cD $ over $ \rX $ (now a data generating distribution). Thus, in the case that $ \cF $  for any distribution $ \cD' $ over $ \cX $ and any $ 0<\eps\leq \eps_{0} $  admits a $ \eps $-net $ \cN_{\eps}(\cD',\cF,L_{1}) $ in $ L_{1} $ with respect to $ \cD' $, then it has for any data generation distribution $ \cD $ over $ \rX $ a $ \cD $-discretization with thresholds function $ \kappa_{0}=1,$ (holds with probability $ 1 $ for any $ \kappa\geq1 $  ), threshold $ \eps_{0} $, and size function $ N_{\cD}(\eps,m)=\sup_{\cD}|\cN_{2\eps/1875}(\cD,\cF)|$.    

    We also remark that an $ L_{1} $-net is weaker than $ L_{p} $-nets for $ p\geq 1 $ (i.e. \cref{eq:remarknet} with $ \left(\e_{\rX\sim \cD'}\left[|f(\rX)-\pi(f)(\rX)|^{p}\right]\right)^{1/p} $ ), so a function class $ \cF $  admitting net in $ L_{p} $ $ p\geq 1 $ would also imply a $ (\eps,m) $-discretization of $ \cF $. Furthermore, we remark that for instance any function class which is bounded between $ [-1,1] $  and has finite fat shattering dimension $ \textsc{fat}_{\eps} $  at every scale $ \eps>0 $ by \cite{Rudelson2006}[Corollary 5.4] admits a $ \eps $-net $ \cN_{\eps}(\cD,\cF,L_{1}) $ in $ L_{1} $ with respect to any $ \cD $ with size $  \exp{\left(O(\textsc{fat}_{O(\eps)}\ln{\left(1/\eps \right)}) \right)},$ this can also be extended to function classes bounded between $ [-M,M] $ for $ M\geq1,$ with suitable rescaling of the net size.

\section{Proof of lemmas used in the proof of \cref{thm:mommain}}\label{appendix:samplecomplexity}
In this appendix we give the proof of \Cref{thm:main}, \Cref{lem:epscoverlemma} and \Cref{lem:permutationlem}. We start with the proof of \Cref{thm:main}. To the end of showing \Cref{thm:main} we need the following lemma which gives concentration of the sample mean given $1<p\leq2 $ central moments of the random variable $ f(\rX) $. The proof of this lemma can be found \cref{sec:concentrationonemean}.   
\begin{lemma}\label{lem:corollaryweakprobmoment}
    Let $ \cF \subseteq \mathbb{R}^{\cX}$ be a function class,  $ \cD $ a distribution over $ \cX $, $ 1< p\leq2 $, $ 0<\delta <1$ and $ 0<\eps $. For $ \cF\in  L_{p}(\cD) $, $v_{p}=\sup_{f\in \cF} \e_{\rX\sim \cD}\left[|f(\rX)-\e_{\rX\sim \cD}\left[f(\rX)\right]\right] $  and $  m\geq 
       \left(\frac{2v_{p}}{\delta \eps^{p}}\right)^{\frac{1}{p-1}} $, then for any $ f\in \cF $  we have that:
    \begin{align}
      \p_{\rX\sim \cD^{m}}\left(|\mu_{f,\rX}-\mu_{f}|> \eps\right)\leq \delta\nonumber
    \end{align}
\end{lemma}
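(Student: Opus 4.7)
The plan is a direct Markov-plus-moment-inequality argument, the canonical recipe for heavy-tailed concentration when only the $p$-th moment is assumed. Let $Y_i := f(\rX_i) - \mu_f$ for $i \in [m]$. The $Y_i$ are i.i.d., mean zero, and by hypothesis $\e[|Y_i|^p] \leq v_p$.

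The first step is to write
\[
\p_{\rX \sim \cD^m}\left(|\mu_{f,\rX} - \mu_f| > \eps\right) = \p\left(\Big|\tfrac{1}{m}\sum_{i=1}^m Y_i\Big|^p > \eps^p\right) \leq \frac{1}{\eps^p \, m^p}\, \e\left[\Big|\sum_{i=1}^m Y_i\Big|^p\right]
\]
by Markov's inequality. The core estimate I need is then the von Bahr--Esseen inequality: for independent mean-zero random variables $Y_1, \ldots, Y_m$ and $1 < p \leq 2$,
\[
\e\left[\Big|\sum_{i=1}^m Y_i\Big|^p\right] \leq 2 \sum_{i=1}^m \e[|Y_i|^p] \leq 2 m v_p.
\]
(Equivalently, one can invoke the Marcinkiewicz--Zygmund inequality and bound the $p$-th moment of the quadratic variation by its $p/2$-th power of the sum of squares, which is dominated by the sum of $|Y_i|^p$ since $p/2 \leq 1$; either route gives the factor $2m v_p$.)

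Substituting this into the Markov bound yields
\[
\p_{\rX \sim \cD^m}\left(|\mu_{f,\rX} - \mu_f| > \eps\right) \leq \frac{2 m v_p}{m^p \eps^p} = \frac{2 v_p}{m^{p-1} \eps^p}.
\]
The final step is purely arithmetic: requiring the right-hand side to be at most $\delta$ rearranges exactly to the sample-size condition $m \geq (2 v_p / (\delta \eps^p))^{1/(p-1)}$ stated in the lemma. Taking the supremum over $f \in \cF$ plays no role here since the bound depends on $f$ only through $\e[|Y_i|^p] \leq v_p$.

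There is no real obstacle: the only non-trivial ingredient is the von Bahr--Esseen inequality, which is the standard replacement for the variance-based second-moment bound when only a fractional moment above $1$ is available. Everything else is a one-line Markov estimate followed by solving for $m$.
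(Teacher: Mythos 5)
Your proof is correct and follows essentially the same route as the paper: both combine the von Bahr--Esseen inequality (the paper invokes it as a standalone lemma on the sample mean, you apply it to the centered sum, which is an equivalent reformulation) with Markov's inequality at the $p$-th power, and then solve for $m$. The only superficial difference is the order of operations (paper bounds the moment first, you apply Markov first), which makes no mathematical difference.
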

    With the above lemma presented, we now give the proof of \Cref{thm:main}
    \begin{proof}[Proof of \Cref{thm:main}]
    To shorten the notation in the following proof we now define the following random variables,
    \begin{align*}
    \rM_{>}(f,\eps) &\coloneqq \sum_{i=1}^{\kappa}\frac{\ind\{ |\mu_{f,\rX_{i}}-\mu_{f}| > \eps \}}{\kappa}, \quad
    \check{\rM}_{\leq}(f,\eps) \coloneqq \sum_{i=1}^{\kappa}\frac{\ind\{ |\mu_{f,\rXc_{i}}-\mu_{f}| \leq \eps \}}{\kappa}.
    \end{align*}
    Now notice that by the law of total probability we have 
\begin{align}\label{eq:mom0}
\bbP_{\rX,\rXc\sim (\cD^{m})^{\kappa}}  \Big(\exists f\in \cF : \rM_{>}(f,\eps) \geq \frac{1}{2}, \check{\rM}_{\leq}(f,\frac{\eps}{16}) > \frac{99}{100}\frac{199}{200} \Big) \nonumber 
\\ 
=\bbP_{\rX,\rXc \sim (\cD^{m})^{\kappa}} \Big(\exists f\in \cF: \rM_{>}(f,\eps) \geq \frac{1}{2}, \check{\rM}_{\leq}(f,\frac{\eps}{16}) > \frac{99}{100}\frac{199}{200} \big| \exists f\in \cF : \rM_{>}(f,\eps) \geq \frac{1}{2} \Big)\nonumber
\\
\cdot \p_{\rX}\left( \exists f\in \cF : \rM_{>}(f,\eps) \geq \frac{1}{2}\right)
\end{align}
Suppose that $\rM_{>}(f,\eps) \geq \frac{1}{2}$ for some $f \in \cF$. Then for such $f$ and for any $ i=1,\ldots,\kappa $, we get   by $ m\geq\left(\frac{400\cdot 16^{p}v_{p}}{ \eps^{p}}\right)^{\frac{1}{p-1}}$ and invoking \Cref{lem:corollaryweakprobmoment} with $ \delta=1/200 $ and $ \frac{\eps}{16}$, that
\begin{align*}
\bbE_{\rXc_{i}\sim \cD^{m}}\left[\ind\{ |\mu_{f,\rXc_{i}}-\mu_{f}|> \frac{\eps}{16} \}\right] = \bbP \Bigg(  |\mu_{f,\rXc_{i}}-\mu_{f}|> \frac{\eps}{16}  \Bigg) \leq \frac{1}{200},
\end{align*}
which implies that
\begin{align}
\label{eq:mom1}
\e_{\rXc_{i}\sim \cD^{m}}\left[\ind\{ |\mu_{f,\rXc_{i}}-\mu_{f}|\leq \frac{\eps}{16} \}\right]> 199/200 \quad (\forall i=1,\dots,\kappa). 
\end{align} 
The multiplicative Chernoff inequality applied with $\kappa \geq \frac{10^{6}\ln{\left(2 \right)}}{99}$ yields
\begin{align*}
\bbP_{\rXc\sim (\cD^{m})^{\kappa}} & \left(\check{\rM}_{\leq}(f,\frac{\eps}{16}) \leq \left(1-\frac{1}{100}\right)\bbE \check{\rM}_{\leq}(f,\frac{\eps}{16}) \right) 
\\&\leq \exp\left(-\left(\frac{1}{100}\right)^{2} \kappa \e\left[\ind\{ |\mu(f,\rXc)-\mu_{f}|\leq \frac{\eps}{16} \}\right]\right) \\
&\leq \exp\Bigg(-\left(\frac{1}{100}\right)^{2}\frac{199}{200}\kappa\Bigg) \leq \frac{1}{2},
\end{align*}
which, by \eqref{eq:mom1}, further implies
\begin{align*}
\frac{1}{2} &\leq \bbP_{\rXc\sim (\cD^{m})^{\kappa}} \left( \check{\rM}_{\leq}(f,\frac{\eps}{16}) > \left(1-\frac{1}{100}\right)\bbE \check{\rM}_{\leq}(f,\frac{\eps}{16}) \right)
\\ 
&\leq \bbP_{\rXc\sim (\cD^{m})^{\kappa}}\left( \check{\rM}_{\leq}(f,\frac{\eps}{16}) > \left(1-\frac{1}{100}\right)\frac{199}{200}\right)  \\
&\leq \bbP_{\rXc\sim (\cD^{m})^{\kappa}}\left( \check{\rM}_{\leq}(f,\frac{\eps}{16}) > \frac{99}{100}\frac{199}{200} \right).
\end{align*}
Thus, by independence of $ \rX $ and $ \check{\rX} $   \eqref{eq:mom0} can be lower bounded by
\begin{align}
\label{eq:mom2}
\bbP_{\rX,\rXc\sim (\cD^{m})^{\kappa}} \left(\exists f\in \cF : \rM_{>}(f,\eps) \geq \frac{1}{2}, \check{\rM}_{\leq}(f,\eps) > \frac{99}{100}\frac{199}{200} \right) 
\\
\geq \frac{1}{2} \cdot \bbP_{\rX\sim (\cD^{m})^{\kappa}}\left(\exists f\in \cF: \rM_{>}(f,\eps) \geq \frac{1}{2} \right).\nonumber
\end{align}
We now introduce a third sample $\rXt\sim \left(\cD^{m}\right)^{\kappa}$ and for each $f \in \cF$ and for every $\eps > 0$, we define  the random variables 
\begin{align*}
\rD_{>}(f,\eps) \coloneqq \sum_{i=1}^{\kappa}\frac{\ind\{ |\mu_{f,\rX_{i}}-\mu_{f,\rXt_{i}}|>\eps \}}{\kappa}, 
\quad \check{\rD}_{\leq}(f,\eps) \coloneqq \sum_{i=1}^{\kappa}\frac{\ind\{ |\mu_{f,\rXc_{i}}-\mu_{f,\rXt_{i}}| \leq \eps \}}{\kappa}.
\end{align*}
We now show that  
\begin{align}
\label{eq:mom3}
\bbP_{\rX,\rXc,\rXt\sim (\cD^{m})^{\kappa}} \left(\exists f\in \cF: \rD_{>}(f,\frac{15\eps}{16}) > \frac{4801}{10000}, \check{\rD}_{\leq}(f,\frac{2\eps}{16}) > \frac{9701}{10000} \right)  \nonumber \\
 \geq \frac{1}{2} \bbP_{\rX,\rXc\sim (\cD^{m})^{\kappa}} \left( \exists f\in \cF: \rM_{>}(f,\eps) \geq \frac{1}{2}, \check{\rM}_{\leq}(f,\frac{\eps}{16}) > \frac{99}{100}\frac{199}{200}\right).
\end{align}
To this end, we again use the law of total probability to get that
\begin{align*}
\bbP_{\rX,\rXc,\rXt\sim (\cD^{m})^{\kappa}} \left(\exists f\in \cF: \rD_{>}(f,\frac{15\eps}{16}) > \frac{4801}{10000}, \check{\rD}_{\leq}(f,\frac{2\eps}{16}) > \frac{9701}{10000} \right) \\
= \bbP_{\rX,\rXc,\rXt\sim (\cD^{m})^{\kappa}} \Bigg(\exists f\in \cF: \rD_{>}(f,\frac{15\eps}{16}) > \frac{4801}{10000}, \check{\rD}_{\leq}(f,\frac{2\eps}{16}) > \frac{9701}{10000} 
\\| \exists f\in \cF : \rM_{>}(f,\eps) \geq \frac{1}{2}, \check{\rM}_{\leq}(f,\frac{\eps}{16}) > \left(\frac{99}{100}\right)^{2} \Bigg) \\
\cdot \bbP_{\rX,\rXc\sim (\cD^{m})^{\kappa}} \left(\exists f\in \cF : \rM_{>}(f,\eps) \geq \frac{1}{2}, \check{\rM}_{\leq}(f,\frac{\eps}{16}) > \left(\frac{99}{100}\right)^{2} \right),
\end{align*}
and show that the conditional probability term is at least $ \frac{1}{2} $. Now consider a realization $ X,\check{X} $ of $ \rX,\rXc $  such that there exists $ f\in \cF $ where  
\begin{align}\label{eq:mom18}
\rM_{>}(f,\eps) \geq \frac{1}{2}, \quad \check{\rM}_{\leq}(f,\frac{\eps}{16}) > \frac{99}{100}\frac{199}{200}.
\end{align}
Now repeating that above argument for $ \check{\rX} $ but now with $ \rXt $ we conclude that with probability at least $ 1/2 $ over $ \rXt $, we have that 
\begin{align}
 \sum_{i=1}^{\kappa} \frac{\ind\{|\mu_{f,\rXt_{i}}-\mu_{f}|\leq\frac{\eps}{16}\} }{\kappa} \geq \frac{99}{100}\frac{199}{200},\nonumber 
\end{align}   
Thus we conclude that with probability at least $ 1/2 $ over $ \rXt $  we have  that $ \mu_{f,\rXt_{i}} $ is $ \frac{\eps}{16} $ close to $ \mu_{f} $ expect for a $ 1-\frac{99}{100}\frac{199}{200} $-fraction of $ i=1,\ldots,\kappa$ so by the triangle inequality we get
\begin{align}
    \rM_{>}(f,\eps)=\sum_{i=1}^{\kappa} \frac{\ind\{|\mu_{f,X_{i}}-\mu_{f}|>\eps \} }{\kappa}\leq \underbrace{\sum_{i=1}^{\kappa} \frac{\ind\{|\mu_{f,X_{i}}-\mu_{f,\rXt_{i}}|>\frac{15\eps}{16} \} }{\kappa}}_{\rD_{>}(f,\frac{15\eps}{16})}+1-\frac{99}{100}\frac{199}{200}\nonumber
\end{align}
and
\begin{align}
    \check{\rM}_{\leq}(f,\eps)=\sum_{i=1}^{\kappa} \frac{\ind\{|\mu_{f,\check{X}_{i}}-\mu_{f}|\leq\frac{\eps}{16}\} }{\kappa}\leq \underbrace{\sum_{i=1}^{\kappa} \frac{\ind\{|\mu_{f,\check{X}_{i}}-\mu_{f,\rXt_{i}}|\leq\frac{2\eps}{16} \} }{\kappa}}_{\check{\rD}_{\leq}(f,\frac{2\eps}{16})}+1-\frac{99}{100}\frac{199}{200}\nonumber
\end{align}
which by \cref{eq:mom18} implies that 
\begin{align}
 \rD_{>}(f,\frac{15}{16\eps})\geq 1/2-(1-\frac{99}{100}\frac{199}{200}) \geq\frac{4801}{10000} 
\end{align}
and
\begin{align}
    \check{\rD}_{\leq}(f,\frac{2}{16\eps})>\frac{99}{100}\frac{199}{200}-(1-\frac{99}{100}\frac{199}{200})\geq\frac{9701}{10000} 
   \end{align}
which proves \eqref{eq:mom3}. Notice that, so far, we have proved that
\begin{align}
\label{eq:mom17} 
\bbP_{\rX\sim (\cD^{m})^{\kappa}}\left(\exists f\in \cF: \rM_{>}(f,\eps) > \frac{1}{2} \right) 
\\
\leq 4 \bbP_{\rX,\rXc,\rXt\sim (\cD^{m})^{\kappa}} \left(\exists f\in \cF: \rD_{>}(f,\frac{15\eps}{16}) \geq \frac{4801}{10000}, \check{\rD}_{\leq}(f,\frac{2\eps}{16}) > \frac{9701}{10000} \right).\nonumber
\end{align}
We now make a short digression. For $i \in [\kappa]$, let $\rX_{0}^{i}=(\rX_{0,1}^{i},\ldots,\rX_{0,m}^{i}) \sim \cD^{m}$, $ \rX_{1}^{i}=(\rX_{1,1}^{i},\ldots,\rX_{1,m}^{i}) \sim \cD^{m}$, and $ \rb_{i} \sim \{ 0,1 \}$ with $ \p_{\rb_{i}}[\rb_{i}=0]=\p_{\rb_{i}}[\rb_{i}=1]=1/2 $ . Notice that for $ A \subseteq (\cX^{m})^{2} $ it holds
\begin{align}\label{eq:mom5}
&\p_{\rb_{i}\sim \{ 0,1 \},\rX_{0}^{i}\sim \cD^{m},\rX_{1}^{i}\sim \cD^{m}}\left((\rX_{\rb_{i}}^{i},\rX_{1-\rb_{i}}^{i})\in A\right) 
\\
&= \frac{1}{2}\p_{\rX_{0}^{i}\sim \cD^{m},\rX_{1}^{i}\sim \cD^{m}}\left((\rX_{1}^{i},\rX_{0}^{i})\in A\right)+\frac{1}{2}\p_{\rX_{0}^{i}\sim \cD^{m},\rX_{1}^{i}\sim \cD^{m}}\left((\rX_{0}^{i},\rX_{1}^{i})\in A\right) \nonumber \\
&=\p_{\rX_{0}^{i}\sim \cD^{m},\rX_{1}^{i}\sim \cD^{m}}\left((\rX_{0}^{i},\rX_{1}^{i})\in A\right)\tag{by i.i.d assumption}
\end{align} 
thus $(\rX_{\rb_{i}}^{i}, \rX_{1-\rb_{i}}^{i})$ has the same distribution as $(\rX_{0}^{i}, \rX_{1}^{i})$. 

Now let $ \rb=(\rb_{1},\ldots,\rb_{\kappa}) $, $ \rX=((\rX_{0}^{1},\rX_{1}^{1}),\ldots,(\rX_{0}^{\kappa},\rX_{1}^{\kappa})) $, $\rX/\rX_{1} \coloneqq (\rX_2,\dots,\rX_k)$ and $\rXc/\rXc_{1} \coloneqq (\rXc_2,\dots,\rXc_k)$. Using this notation and \cref{eq:mom5}, the following holds
\begin{align}\label{eq:mom7}
&\bbP_{\rX,\rXc,\rXt\sim (\cD^{m})^{\kappa}} \left(\exists f\in \cF: \rD_{>}(f,\frac{15\eps}{16}) \geq \frac{4801}{10000}, \check{\rD}_{\leq}(f,\frac{2\eps}{16}) \geq \frac{9701}{10000} \right)  \\
&=\negmedspace\negmedspace\negmedspace\negmedspace\negmedspace\negmedspace\negmedspace\negmedspace
\e_{\substack{\rXt\sim \cD^{m}\\ \rX \backslash \rX_{1},\rXc \backslash \rXc_{1}\sim (\cD^{m})^{\kappa-1}}}\negmedspace\negmedspace\negmedspace\negmedspace\negmedspace\negmedspace\negmedspace\negmedspace
\bigg[
    \p_{\rX_{1},\rXc_{1}\sim \cD^{m}}\bigg(\nonumber\exists f\in \cF: \frac{\ind\{|\mu_{f,\rX_{1}}-\mu_{f,\rXt_{1}}|>\frac{15\eps}{16} \}}{\kappa}+\sum_{i=2}^{\kappa}\frac{\ind\{ |\mu_{f,\rX_{i}}-\mu_{f,\rXt_{i}}|>\frac{15\eps}{16} \}}{\kappa}\geq\frac{4801}{10000} \nonumber \\
& \quad \quad \quad \quad \quad \quad \quad \quad \quad\quad \quad \quad,  
\frac{\ind\{ |\mu_{f,\rXc_{1}}-\mu_{f,\rXt_{1}}|\leq \frac{2\eps}{16} \}}{\kappa}+\sum_{i=2}^{\kappa}\frac{\ind\{ |\mu_{f,\rXc_{i}}-\mu_{f,\rXt_{i}}|\leq \frac{2\eps}{16} \}}{\kappa}>\frac{9701}{10000}\bigg)\bigg] \nonumber \\
&=\negmedspace\negmedspace\negmedspace\negmedspace\negmedspace\negmedspace\negmedspace\negmedspace
\e_{\substack{\rXt\sim \cD^{m}\\ \rX \backslash \rX_{1},\rXc \backslash \rXc_{1}\sim (\cD^{m})^{\kappa-1}}}\negmedspace\negmedspace\negmedspace\negmedspace\negmedspace\negmedspace
\bigg[\p_{\substack{ \rX_{0}^{1},\rX_{1}^{1}\sim \cD^{m}\\ \rb_{1}\sim\left\{ 0,1 \right\} }}\bigg( \exists f\in \cF: \frac{\ind\{|\mu_{f,\rX_{\rb_{1}}^{1}}-\mu_{f,\rXt_{1}}|>\frac{15\eps}{16} \}}{\kappa}+\sum_{i=2}^{\kappa}\frac{\ind\{ |\mu_{f,\rX_{i}}-\mu_{f,\rXt_{i}}|>\frac{15\eps}{16} \}}{\kappa} \geq \frac{4801}{10000} \nonumber \\
& \quad \quad \quad \quad \quad \quad \quad \quad \quad\quad \quad \quad,  \frac{\ind\{ |\mu_{f,\rX_{1-\rb_{1}}^{1}}-\mu_{f,\rXt_{1}}|\leq \frac{2\eps}{16} \}}{\kappa}+\sum_{i=2}^{\kappa}\frac{\ind\{ |\mu_{f,\rXc_{i}}-\mu_{f,\rXt_{i}}|\leq \frac{2\eps}{16} \}}{\kappa}>\frac{9701}{10000}\bigg)\bigg]\nonumber
\\
&\vdots \tag{repeating the above argument $ \kappa-1$ times and renaming $\tilde{\rX} $ to $ \rX_{2} $.   }
\\
&=\negmedspace\negmedspace\negmedspace\negmedspace\negmedspace\negmedspace\negmedspace\negmedspace\negmedspace\negmedspace\negmedspace\negmedspace\negmedspace\p_{\substack{\rb\sim \left\{0,1\right\}^{\kappa}\\\rX_{0},\rX_{1},\rX_{2
}\sim (\cD^{m})^{\kappa}} }\negmedspace\negmedspace\negmedspace\negmedspace\negmedspace\negmedspace\negmedspace\negmedspace\negmedspace\Big(\exists f\in \cF\negmedspace:\negmedspace \sum_{i=1}^{\kappa}\frac{\negmedspace\negmedspace\ind\{ |\mu_{f,\rX_{\rb_{i}}^{i}}\negmedspace\negmedspace\negmedspace-\mu_{f,\rX_{2}^{i}}|\negmedspace>\negmedspace\frac{15\eps}{16} \}}{\kappa}\negmedspace\geq\negmedspace \frac{4801}{10000} , \negmedspace \sum_{i=1}^{\kappa}\frac{\negmedspace\ind\{ |\mu_{f,\rX_{1-\rb_{i}}^{i}}\negmedspace\negmedspace\negmedspace\negmedspace\negmedspace\negmedspace-\negmedspace\mu_{f,\rX_{2}^{i}}|\leq \frac{2\eps}{16} \}}{\kappa}\negmedspace>\negmedspace\frac{9701}{10000}\Big).\nonumber
\end{align}
Now combining the above \cref{eq:mom17} and \cref{eq:mom7} we conclude that 
\begin{align}    
&\bbP_{\rX\sim (\cD^{m})^{\kappa}}\left(\exists f\in \cF: \rM_{>}(f,\eps) > \frac{1}{2} \right) \nonumber
\\
\negmedspace\negmedspace&\leq\negmedspace4\negmedspace\negmedspace\negmedspace\negmedspace\negmedspace\negmedspace\negmedspace\negmedspace\negmedspace\negmedspace\p_{\substack{\rb\sim \left\{0,1\right\}^{\kappa}\\\rX_{0},\rX_{1},\rX_{2
}\sim (\cD^{m})^{\kappa}} }\negmedspace\negmedspace\negmedspace\negmedspace\negmedspace\negmedspace\negmedspace\negmedspace\negmedspace\Big(\exists f\in \cF\negmedspace:\negmedspace \sum_{i=1}^{\kappa}\frac{\negmedspace\negmedspace\ind\{ |\mu_{f,\rX_{\rb_{i}}^{i}}\negmedspace\negmedspace\negmedspace-\mu_{f,\rX_{2}^{i}}|\negmedspace>\negmedspace\frac{15\eps}{16} \}}{\kappa}\negmedspace\geq\negmedspace \frac{4801}{10000} , \negmedspace \sum_{i=1}^{\kappa}\frac{\negmedspace\ind\{ |\mu_{f,\rX_{1-\rb_{i}}^{i}}\negmedspace\negmedspace\negmedspace\negmedspace\negmedspace\negmedspace-\negmedspace\mu_{f,\rX_{2}^{i}}|\leq \frac{2\eps}{16} \}}{\kappa}\negmedspace>\negmedspace\frac{9701}{10000}\Big).\nonumber
\end{align}
which ends the proof.
\end{proof}

We now give the proof of \Cref{lem:epscoverlemma}

    \begin{proof}[Proof of \Cref{lem:epscoverlemma}]
        We  recall that  $ F_{\frac{\eps}{16},m} $ by \cref{def:epsdiscretization}, satisfies that for each $ f\in \cF $, there exists $ \pi(f)\in F_{\frac{\eps}{16},m} $ and  $ I_{f}\subset [\kappa] $ such that $ |I_{f}|\leq \frac{2\kappa}{625} $ and for $ i\in [\kappa]\backslash |I_{f}| $ it holds for each $ l\in \left\{ 0,1,2 \right\}  $ that
        \begin{align}\label{eq:epscoverlemma1}
            \sum_{j=1}^{m} \left| \frac{f(X_{l,j}^{i})-\pi(f)(X_{l,j}^{i})}{m} \right| \leq \frac{\eps}{16}.
        \end{align}

        Let for now $ f\in \cF $, and  $ b $ be a realization of $ \rb $. We first observe that for any $ i\in [\kappa]\backslash I_{f}$  we have by \cref{eq:epscoverlemma1} that  
        \begin{align}\label{eq:mom4}
          &|\mu_{f,X_{b_{i}}^{i}}-\mu_{f,X_{2}^{i}}|=  \left| \sum_{j=1}^{m} \frac{f(X_{b_i,j}^{i})-f(X_{2,j}^{i})}{m}  \right| 
          \\
          \leq &\left| \sum_{j=1}^{m} \frac{f(X_{b_i,j}^{i})-f(X_{2,j}^{i}) -\left(\pi(f)(X_{b_i,j}^{i})-\pi(f)(X_{2,j}^{i})\right)}{m} \right| 
          +
          \left| \sum_{j=1}^{m} \frac{\pi(f)(X_{b_i,j}^{i})-\pi(f)(X_{2,j}^{i})}{m} \right| 
          \tag{by triangle inequality} 
          \\
          \leq& 
          \sum_{j=1}^{m} \frac{|f(X_{b_i,j}^{i})-\pi(f)(X_{b_i,j}^{i})|}{m}
          + 
          \sum_{j=1}^{m} \frac{|f(X_{2,j}^{i})-\pi(f)(X_{2,j}^{i})|}{m}
          +|\mu_{\pi(f),X_{b_i}^{i}}-\mu_{\pi(f),X_{2}^{i}}|
          \tag{by triangle inequality}
          \\
          \leq&
          \frac{2\eps}{16} +|\mu_{\pi(f),X_{b_i}^{i}}-\mu_{\pi(f),X_{2}^{i}}|.
          \tag{by $i \in [\kappa]\backslash I_{f}$ and \cref{eq:epscoverlemma1}}
        \end{align}
        Similarly we conclude that for any $ i\in [\kappa]\backslash I_{f}$
        we have that  
        \begin{align}\label{eq:mom5}
          &|\mu_{f,X_{1-b_i}^{i}}-\mu_{f,X_{2}^{i}}|=  \left| \sum_{j=1}^{m} \frac{f(X_{1-b_i,j}^{i})-f(X_{2,j}^{i})}{m} \right| 
          \\
          \geq &
          \left| \sum_{j=1}^{m} \frac{\pi(f)(X_{1-b_i,j}^{i})-\pi(f)(X_{2,j}^{i})}{m} \right| 
          \tag{by reverse triangle inequality} 
          -\left| \sum_{j=1}^{m} \frac{f(X_{1-b_i,j}^{i})-f(X_{2,j}^{i}) -\left(\pi(f)(X_{1-b_i,j}^{i})-\pi(f)(X_{2,j}^{i})\right)}{m} \right| 
          \\
          \geq &
          |\mu_{\pi(f),X_{1-b_i}^{i}}-\mu_{\pi(f),X_{2}^{i}}|-\sum_{j=1}^{m} \frac{|f(X_{1-b_i,j}^{i})-\pi(f)(X_{1-b_i,j}^{i})|}{m}
          -
          \sum_{j=1}^{m} \frac{|f(X_{2,j}^{i})-\pi(f)(X_{2,j}^{i})|}{m}
          \tag{by triangle inequality}
          \\
          \geq&
          |\mu_{\pi(f),X_{1-b_i}^{i}}-\mu_{\pi(f),X_{2}^{i}}|-\frac{2\eps}{16}.
          \tag{by $ i\in [\kappa]\backslash I_{f} $ and \cref{eq:epscoverlemma1} }
        \end{align}
        Now using \cref{eq:mom4} we get that 
        \begin{align}\label{eq:mom12}
            &\sum_{i=1}^{\kappa}\frac{\ind\{ |\mu_{f,X_{b_{i}}^{i}}-\mu_{f,X_{2}^{i}}|>\frac{15\eps}{16} \}}{\kappa}
            \\
            =&
            \sum_{i\not\in I_{f}}\frac{\ind\{ |\mu_{f,X_{b_{i}}^{i}}-\mu_{f,X_{2}^{i}}|>\frac{15\eps}{16} \}}{\kappa}
            +
            \sum_{j\in I_{f}}\frac{\ind\{ |\mu_{f,X_{b_{i}}^{i}}-\mu_{f,X_{2}^{i}}|>\frac{15\eps}{16} \}}{\kappa}\nonumber
            \\
            \leq
            &\sum_{i\not\in I_{f}}\frac{\ind\{ |\mu_{f,X_{b_{i}}^{i}}-\mu_{f,X_{2}^{i}}|>\frac{15\eps}{16} \}}{\kappa}
            +
            \sum_{i\in I_{f}}\frac{\ind\{ |\mu_{\pi(f),X_{b_{i}}^{i}}-\mu_{\pi(f),X_{2}}|>\frac{13\eps}{16} \}}{\kappa}+\frac{2}{625} 
            \tag{by $ |I_{f}|\leq \frac{2\kappa}{625} $  }
            \\
            \leq
            &\sum_{i\not\in I_{f}}\frac{\ind\{ |\mu_{\pi(f),X_{b_{i}}^{i}}-\mu_{\pi(f),X_{2}}|>\frac{13\eps}{16} \}}{\kappa}
            +
            \sum_{i\in I_{f}}\frac{\ind\{ |\mu_{\pi(f),X_{b_{i}}^{i}}-\mu_{\pi(f),X_{2}}|>\frac{13\eps}{16} \}}{\kappa}
            +\frac{2}{625}.\tag{by \cref{eq:mom4}}
            \\
            =&
            \sum_{i=1}^{\kappa}\frac{\ind\{ |\mu_{\pi(f),X_{b_{i}}^{i}}-\mu_{\pi(f),X_{2}}|>\frac{13\eps}{16} \}}{\kappa}
            +\frac{2}{625}.\nonumber
        \end{align}
        Furthermore using \cref{eq:mom5} we get that 
        \begin{align}\label{eq:mom13}
            &\sum_{i=1}^{\kappa}\frac{\ind\{ |\mu_{f,X_{1-b_{i}}^{i}}-\mu_{f,X_{2}^{i}}|\leq \frac{2\eps}{16} \}}{\kappa}
            \\ 
            =
            &\sum_{i\not\in I_{f}}\frac{\ind\{ |\mu_{f,X_{1-b_{i}}^{i}}-\mu_{f,X_{2}^{i}}|\leq \frac{2\eps}{16} \}}{\kappa}
            +
            \sum_{i\in I_{f}}\frac{\ind\{ |\mu_{f,X_{1-b_{i}}^{i}}-\mu_{f,X_{2}^{i}}|\leq \frac{2\eps}{16} \}}{\kappa}\nonumber
            \\
            \leq
            &\sum_{i\not\in I_{f}}\frac{\ind\{ |\mu_{f,X_{1-b_{i}}^{i}}-\mu_{f,X_{2}^{i}}|\leq \frac{2\eps}{16} \}}{\kappa}
            +
            \sum_{i\in I_{f}}\frac{\ind\{ |\mu_{\pi(f),X_{1-b_{i}}^{i}}-\mu_{\pi(f),X_{2}}|\leq \frac{4\eps}{16} \}}{\kappa}
            +
            \frac{2}{625}
            \tag{by $ |I_{f}|\leq \frac{2\kappa}{625} $ }
            \\
            \leq
            &\sum_{i\not\in I_{f}}\frac{\ind\{ |\mu_{\pi(f),X_{1-b_{i}}^{i}}-\mu_{\pi(f),X_{2}}|\leq \frac{4\eps}{16} \}}{\kappa}
            +
            \sum_{i\in I_{f}}\frac{\ind\{ |\mu_{\pi(f),X_{1-b_{i}}^{i}}-\mu_{\pi(f),X_{2}}|\leq \frac{4\eps}{16} \}}{\kappa}
            +
            \frac{2}{625}
            \tag{by \cref{eq:mom5}}
            \\
            =
            &\sum_{i=1}^{\kappa}\frac{\ind\{ |\mu_{\pi(f),X_{1-b_{i}}^{i}}-\mu_{\pi(f),X_{2}}|\leq \frac{4\eps}{16} \}}{\kappa}
            +
            \frac{2}{625}\nonumber
        \end{align}

        Since we showed the above for any $ f\in \cF $ we notice that this implies that 
        \begin{align*}
        &\exists f\in \cF: \sum_{i=1}^{\kappa}\frac{\ind\{ |\mu_{f,X_{b_{i}}^{i}}-\mu_{f,X_{2}^{i}}|>\frac{15\eps}{16} \}}{\kappa}\geq\frac{4801}{10000},\sum_{i=1}^{\kappa}\frac{\ind\{ |\mu_{f,X_{1-b_{i}}^{i}}-\mu_{f,X_{2}^{i}}|\leq \frac{2\eps}{16} \}}{\kappa}>\frac{9701}{10000}\nonumber
            \\
            \Rightarrow
        &\exists f\in \cF: \sum_{i=1}^{\kappa}\frac{\ind\{ |\mu_{\pi(f),X_{b_{i}}^{i}}-\mu_{\pi(f),X_{2}}|>\frac{13\eps}{16} \}}{\kappa}\geq\frac{4769}{10000},\sum_{i=1}^{\kappa}\frac{\ind\{ |\mu_{\pi(f),X_{1-b_{i}}^{i}}-\mu_{\pi(f),X_{2}}|\leq \frac{4\eps}{16} \}}{\kappa}>\frac{9669}{10000}
        \\
        \Rightarrow
            &\exists f\in F_{\frac{\eps}{16},m}: \sum_{i=1}^{\kappa}\frac{\ind\{ |\mu_{f,X_{b_{i}}^{i}}-\mu_{f,X_{2}^{i}}|>\frac{12\eps}{16} \}}{\kappa}\geq\frac{4769}{10000},\sum_{i=1}^{\kappa}\frac{\ind\{ |\mu_{f,X_{1-b_{i}}^{i}}-\mu_{f,X_{2}^{i}}|> \frac{12\eps}{16} \}}{\kappa}\leq \frac{331}{10000}.
            \nonumber
        \end{align*}
        Since we showed the above for any realization $ b $ of $ \rb $ it also holds for random $ \rb,$ and thus we conclude by the union bound that
        \begin{align}
            &\p_{\rb\sim \left\{0,1\right\}^{\kappa}}\negmedspace\Big(\exists f\negmedspace\in\negmedspace \cF\negmedspace:\negmedspace \sum_{i=1}^{\kappa}\frac{\negmedspace\negmedspace\ind\{ |\mu_{f,X_{\rb_{i}}^{i}}\negmedspace\negmedspace-\negmedspace\mu_{f,X_{2}^{i}}|\negmedspace>\negmedspace\frac{15\eps}{16} \}}{\kappa}\negmedspace\geq \negmedspace\frac{4801}{10000},\sum_{i=1}^{\kappa}\frac{\ind\{ |\mu_{f,X_{1-\rb_{i}}^{i}}\negmedspace\negmedspace\negmedspace\negmedspace\negmedspace-\negmedspace\mu_{f,X_{2}^{i}}|\leq \frac{2\eps}{16} \}}{\kappa}\negmedspace>\negmedspace\frac{9701}{10000}\Big)\nonumber
            \\
            \leq&
            \p_{\rb\sim \{ 0,1 \}^{\kappa}}\Big(\exists f \in F_{\frac{\eps}{16},m}: \sum_{i=1}^{\kappa}\frac{\ind\{ |\mu_{f,X_{\rb_i}^{i}}-\mu_{f,X_{2}^{i}}|>\frac{12\eps}{16} \}}{\kappa}\geq\frac{4769}{10000},\sum_{i=1}^{\kappa}\frac{\ind\{ |\mu_{f,X_{1-\rb_i}^{i}}-\mu_{f,X_{2}^{i}}|> \frac{12\eps}{16} \}}{\kappa}\leq \frac{331}{10000}
            \Big)\nonumber
            \\
            \leq& 
            \sum_{f \in F_{\frac{\eps}{16},m}}  \p_{\rb\sim \{ 0,1 \}^{\kappa}}\Big( \sum_{i=1}^{\kappa}\frac{\ind\{ |\mu_{f,X_{\rb_i}^{i}}-\mu_{f,X_{2}^{i}}|>\frac{12\eps}{16} \}}{\kappa}\geq\frac{4769}{10000},\sum_{i=1}^{\kappa}\frac{\ind\{ |\mu_{f,X_{1-\rb_i}^{i}}-\mu_{f,X_{2}^{i}}|> \frac{12\eps}{16} \}}{\kappa}\leq \frac{331}{10000}
             \Big)
             \nonumber
            \\
            \leq&|F_{\frac{\eps}{16},m}|\sup_{f \in F_{\frac{\eps}{16},m}}\p_{\rb\sim \{ 0,1 \}^{\kappa}}\Big(\sum_{i=1}^{\kappa}\frac{\ind\{ |\mu_{f,X_{\rb_i}^{i}}-\mu_{f,X_{2}^{i}}|>\frac{12\eps}{16} \}}{\kappa}\geq\frac{4769}{10000},\sum_{i=1}^{\kappa}\frac{\ind\{ |\mu_{f,X_{1-\rb_i}^{i}}-\mu_{f,X_{2}^{i}}|> \frac{12\eps}{16} \}}{\kappa}\leq \frac{331}{10000}
            \Big),\nonumber
        \end{align}
        which finishes the proof.
    \end{proof}
    We now give the proof of \Cref{lem:permutationlem}.
    \begin{proof}[Proof of \Cref{lem:permutationlem}]
    Let $ \zeta $ be the following fixed number
     \begin{align}
        \zeta&=\sum_{i=1}^{\kappa}\ind\{ |\mu_{f,X_{\rb_i}^{i}}-\mu_{f,X_{2}^{i}}|>\frac{12\eps}{16} \}+\sum_{i=1}^{\kappa}\ind\{ |\mu_{f,X_{1-\rb_{i}}^{i}}-\mu_{f,X_{2}^{i}}|> \frac{12\eps}{16} \} \nonumber
        \\
        &=\sum_{j=0}^{1} \sum_{i=1}^{\kappa}\ind\{ |\mu_{f,X_{j}^{i}}-\mu_{f,X_{2}^{i}}|>\frac{12\eps}{16} \}\nonumber.
    \end{align} 
    We notice that for the probability in \Cref{lem:permutationlem} to be greater than $ 0 $, it must be the case that  $ \zeta\geq\frac{4769k}{10000} $, from now on assume this is the case. Furthermore, we notice that since $ \zeta\geq\frac{4769k}{10000} $, there is at least $ \frac{4769k}{20000}$, $ i $'s such that either $\mu_{f,X_{0}^i}  $ or $ \mu_{f,X_{1}^i} $ is $ \frac{12\eps}{16} $ away from $ \mu_{f,X_{2}^{i}}$, i.e. $|\mu_{f,X_{0}^i}-\mu_{f,X_{2}^{i}}|> \frac{12\eps}{16} \text{ or }  |\mu_{f,X_{1}^i}-\mu_{f,X_{2}^{i}}|> \frac{12\eps}{16}$, let $ I \subset\{ 1,\ldots,\kappa \}$ be the set of these indexes, i.e. $ I=\{ i\in [\kappa]: |\mu_{f,X_{0}^i}-\mu_{f,X_{2}^{i}}|> \frac{12\eps}{16} \text{ or }  |\mu_{f,X_{1}^i}-\mu_{f,X_{2}^{i}}|> \frac{12\eps}{16}  \} $. We notice that for $ i \in I $ we have with probability at least $  \frac{1}{2}$ that $|\mu_{1-\rb_i}(f)-\mu_{f,X_{2}^{i}}|> \frac{12\eps}{16}$, furthermore we have that the random variable $\sum_{i=1}^{\kappa}\frac{\ind\{ |\mu_{1-\rb_i}(f)-\mu_{f,X_{2}^{i}}|> \frac{12\eps}{16} \}}{\kappa} =\sum_{i\in I}\frac{\ind\{ |\mu_{1-\rb_i}(f)-\mu_{f,X_{2}^{i}}|> \frac{12\eps}{16} \}}{\kappa} $, is a sum of independent $ \{ 0,1 \} $-random variables. We notice that by $ |I|\geq \frac{4769k}{20000} $ this random variable has expectation
    \begin{align}
        \eta:=\e_{\rb\sim \{ 0,1 \}^{\kappa}}\left[\sum_{i\in I}\frac{\ind\{ |\mu_{1-\rb_i}(f)-\mu_{f,X_{2}^{i}}|> \frac{12\eps}{16} \}}{\kappa}\right] 
        \geq
        \sum_{i\in I} \frac{1}{2k} 
        \geq \frac{4769}{40000}.\nonumber
    \end{align}
    We notice that 
    \begin{align}
     \frac{331}{10000}=(1-\frac{\eta-\frac{331}{10000}}{\eta})\eta,\nonumber
    \end{align}
    and  since $ \frac{x-\frac{331}{10000}}{x} $  is an increasing function for $ x\geq \frac{331}{10000} $ (has derivative $ \frac{\frac{331}{10000}}{x^{2}}>0 $), and $ \eta \geq \frac{4769}{40000} >\frac{331}{10000}$, we conclude that $  1>\frac{\eta-\frac{331}{10000}}{\eta}\geq\frac{\frac{4769}{40000}-\frac{331}{10000}}{\frac{4769}{40000}}=\frac{3445}{4769}.  $  
    Thus, by an application of a multiplicative Chernoff bound we get that 
    \begin{align}
        &\p_{\rb\sim \{ 0,1 \}^{\kappa}}\left[\sum_{i=1}^{\kappa}\frac{\ind\{ |\mu_{1-\rb_i}(f)-\mu_{f,X_{2}^{i}}|> \frac{12\eps}{16} \}}{\kappa}\leq \frac{331}{10000}
        \right]\nonumber
        \\
        =&\p_{\rb\sim \{ 0,1 \}^{\kappa}}\left[\sum_{i\in I}\frac{\ind\{ |\mu_{1-\rb_i}(f)-\mu_{f,X_{2}^{i}}|> \frac{12\eps}{16} \}}{\kappa}\leq (1-\frac{\eta-\frac{331}{10000}}{\eta})\eta \right]\nonumber
        \\
        \leq  &\exp{\left(-\frac{\left(\frac{\eta-\frac{331}{10000}}{\eta}\right)^{2}\eta \kappa}{2} \right)} 
        \leq  \exp{\left(-\frac{\left(\frac{3445}{4769}\right)^{2}\eta \kappa}{2} \right)}
        \tag{by $\frac{\eta-\frac{331}{10000}}{\eta}\geq\frac{3445}{4769}$ }
        \\
        \leq &\exp{\left(-\frac{474721}{15260800}\kappa \right)}
        \tag{by $ \eta\geq \frac{4769}{40000}$, }
        \leq
        \exp{\left(-\kappa/50 \right)},
    \end{align}
    where the last inequality follows from $ \frac{474721}{15260800}\geq \frac{1}{50} $, and concludes the proof.
\end{proof}

\section{ k -means}\label{appendix:kmeans}
In this appendix we give the proof of the sample complexity result for the $  k $-mean's objective. To this end, we first introduce some preliminaries and some results from \cite{Bachem2017}, which we will use in the following proof.
\subsection{Preliminaries}
We follow the notation used in \cite{Bachem2017}. For two points $ x,y\in \mathbb{R}^{d} $ we write $ d(x,y)^{2}= \left|\left| x-y \right|\right|^{2}  $,   for a point $ x\in \mathbb{R}^{d} $ and $ Q\subset \mathbb{R}^{d} $ we let 
$ d(x,Q)^{2}=\min_{q\in Q} \left|\left| x-q \right|\right|^{2}  $. We use $  k \in\mathbb{N} $, for being the number of centers written $ Q\in \mathbb{R}^{d\times  k } $, whereby we mean that the columns of $ Q $ are the centers. For a distribution $ \cD $ over $ \mathbb{R}^{d} $ we define $ \mu= \e_{\rX\sim\cD}\left[\rX\right]$ and $ \sigma^{2}=\e_{\rX\sim\cD}\left[d(\rX,\mu)^{2}\right].$ For a $ Q\in \mathbb{R}^{d\times  k }  $ we define the following function $ f_{Q} $ given by $ \frac{2d(x,Q)^{2}}{\sigma^{2}+\e_{\rX\sim\cD}\left[d(\rX,Q)^{2}\right]}.$  For $ k\in\mathbb{N} $ we will use $ \cF_{k} $ to denote the function class $ \cF_{k}= \left\{f_{Q}|Q\in \mathbb{R}^{d\times  k }  \right\} $. 

To describe the complexity of $ \cF_{k} $ in the following we will need the definition of the pseudo dimension of a function class $ \cF\subseteq \mathbb{R}^{\cX}$. The pseudo dimension is defined as the largest number $ Pdim(\cF) =d$, such that there exists a point set $ x_1,\ldots,x_{d} $ and thresholds $r_1,\ldots,r_{d}  $, where for any $b\in \left\{ 0,1 \right\}^{d} $, there exist a function $ f\in \cF $ such that for $ i\in [d] $ and  $ b_{i}=0 $, then $ f $ is below $ r_{i} $ i.e. $ f(x_{i})<r_{i} $ and if $ b_{i}=1 $ then $ f $ is above $ r_{i} $ i.e. $ f(x_{i})\geq r_{i} $. We now introduce two lemmas from \cite{Bachem2017} that we are going to need in the following the first lemma states some useful properties about $ f_{Q} $

\begin{lemma}[Lemma 1 in \cite{Bachem2017} ]\label{lem:kmeans1}
    Let $  k \in \mathbb{N} $, $ \cD $ a distribution on $ \mathbb{R}^{d} $, with $ \mu= \e_{\rX\sim\cD}\left[\rX\right]$ and $ \sigma^{2}=\e_{\rX\sim\cD}\left[d(\rX,\mu)^{2}\right].$ For any $ Q\in\mathbb{R}^{d\times k} $ define $ f_{Q}(x)= \frac{2d(x,Q)^{2}}{\sigma^{2}+\e_{\rX\sim\cD}\left[d(\rX,Q)^{2}\right]}$, the function class $ \cF_{k}= \left\{f_{Q}|Q\in \mathbb{R}^{d\times  k }  \right\} $ and $ s(x)=\frac{4d(x,\mu)^{2}}{\sigma^{2}}+8 $. We then have that $ Pdim(\cF) \leq 6k(d+4)\ln{\left(6k \right)}/\ln{\left(2 \right)}$ and for all $ x\in \mathbb{R}^{d} $ and $ Q\in\mathbb{R}^{d\times  k } $  we have that $ f_{Q}(x)\leq s(x). $        
\end{lemma}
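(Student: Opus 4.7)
The plan is to handle the two claims of the lemma separately. For the pointwise envelope $f_Q(x) \leq s(x)$, I would apply the inequality $\|a+b\|^2 \leq 2\|a\|^2 + 2\|b\|^2$ twice. Letting $q^\star$ denote the column of $Q$ closest to $\mu$, we immediately obtain $d(x,Q)^2 \leq \|x - q^\star\|^2 \leq 2 d(x,\mu)^2 + 2 d(\mu,Q)^2$. For a companion bound on $d(\mu, Q)^2$, let $q^\star(\rX)$ denote the column of $Q$ closest to $\rX$: then $d(\mu, Q)^2 \leq \|\mu - q^\star(\rX)\|^2 \leq 2\|\mu - \rX\|^2 + 2 d(\rX, Q)^2$, and taking expectations yields $d(\mu, Q)^2 \leq 2\sigma^2 + 2\mathbb{E}[d(\rX, Q)^2]$. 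Combining these two facts gives $2 d(x, Q)^2 \leq 4 d(x,\mu)^2 + 8\sigma^2 + 8\mathbb{E}[d(\rX, Q)^2]$; dividing by $\sigma^2 + \mathbb{E}[d(\rX, Q)^2]$ and using $\sigma^2 \leq \sigma^2 + \mathbb{E}[d(\rX, Q)^2]$ in the first term delivers $f_Q(x) \leq 4 d(x, \mu)^2/\sigma^2 + 8 = s(x)$.

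For the pseudo-dimension bound, the plan is to reduce to a sign-pattern count on low-degree polynomials. The condition $f_Q(x) \geq r$ rewrites as $2 d(x, Q)^2 \geq r c_Q$, where $c_Q = \sigma^2 + \mathbb{E}[d(\rX, Q)^2]$ depends only on $Q$. Since dropping the coupling $c = c_Q$ can only enlarge the class, I would treat $c$ as a free scalar parameter, so that the class is indexed by $(Q, c) \in \mathbb{R}^{kd+1}$. The condition $f_Q(x) \geq r$ is then equivalent to $\bigwedge_{j=1}^{k} (2\|x - q_j\|^2 - r c \geq 0)$, a conjunction of $k$ polynomial inequalities of degree $2$ in the parameters $(Q, c)$. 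Shattering $n$ points $x_1, \ldots, x_n$ with thresholds $r_1, \ldots, r_n$ requires realizing all $2^n$ sign patterns on the $nk$ resulting polynomials, and Warren's theorem (or the Goldberg--Jerrum parameter-counting bound) caps this number at $(C n k / (kd+1))^{kd+1}$ for an absolute constant $C$. Setting this $\geq 2^n$ and solving gives $n = O(kd \log k)$; careful tracking of the constants is what delivers the stated form $6k(d+4)\ln(6k)/\ln 2$.

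The main obstacle I expect is pinning down the numerical constants in the pseudo-dimension step; the envelope inequality is a clean two-step computation with no delicate constants. In the pseudo-dimension step the conceptual content is standard (counting sign patterns of polynomials in few parameters), but the specific constants $6$, $d+4$, and $6k$ are sensitive to which version of the polynomial sign-pattern bound one invokes and to how one tracks the polynomial degree, the number of polynomials per test point, and the parameter count. I would opt for a Goldberg--Jerrum style bound tailored to the pseudo-dimension rather than the generic Warren bound, since the former is specifically designed for scale-sensitive shattering arguments, and then verify that the resulting bookkeeping reproduces exactly the constants stated in the lemma.
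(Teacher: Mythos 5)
The paper does not prove this lemma: it is quoted verbatim as ``Lemma 1 in [Bachem et al.\ 2017]'' and used as a black box, so there is no internal proof to compare your attempt against.

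Judging your proposal on its own terms: the envelope argument is complete and correct. Both applications of $\|a+b\|^{2}\leq 2\|a\|^{2}+2\|b\|^{2}$ are valid, the expectation step gives $d(\mu,Q)^{2}\leq 2\sigma^{2}+2\e[d(\rX,Q)^{2}]$, and dropping $\e[d(\rX,Q)^{2}]\geq 0$ from the denominator of the first term reproduces $s(x)=4d(x,\mu)^{2}/\sigma^{2}+8$ exactly. No issues there.

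For the pseudo-dimension claim, the reduction you describe is the right one, and your instinct to enlarge the class by decoupling $c$ from $Q$ is sound since enlarging a class cannot decrease its pseudo-dimension. One point worth flagging: the dependence you should expect from the naive Warren / Goldberg--Jerrum count in $\ell=kd+1$ free parameters is indeed $\log k$ and not $\log(kd)$, because $\ell$ cancels in the ratio. Concretely, if $n$ points are pseudo-shattered then $2^{n}\leq(8enk/\ell)^{\ell}$; writing $n=a\ell$ gives $a\leq\log_{2}(8eak)$, which resolves to $a=O(\log k)$, i.e.\ $n=O(kd\log k)$, matching the stated form. What is missing is precisely the bookkeeping: resolving $a\leq\log_{2}(8eak)$ into an explicit bound, and choosing a sign-pattern inequality whose constants reproduce $6k(d+4)\ln(6k)/\ln 2$ (the factor $d+4$ in particular suggests a counting argument that tracks arithmetic operations rather than raw parameter count, in the style of Anthony--Bartlett or Goldberg--Jerrum). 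As written, this is a correct plan with the right constants-in-spirit, but not yet a proof of the exact numerical bound.
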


The next lemma roughly says that when $ s(x) $ is bounded on a point set, then one can bound the size of a maximal $ \eps $-packing of $ \cF_{k} $. As in \cite{Bachem2017} we state the result for a general function class $ \cF $. For this we need to define what we mean by a maximal packing, to this end let $ \cD $ be a distribution on $ \cX $ and $ \cF \subseteq [0,\infty]^{\cX}$, we then say that $ P_{\eps} \subseteq \cF $ is a $ \eps $-packing of $ \cF $ in $ L_{1}(\cD) $ if for any $ f,g\in P_{\eps} $ where $ f\not=g $  we have that 
\begin{align}
 \e_{\rX\sim \cD}\left[|f(\rX)-g(\rX)|\right]>\eps. 
\end{align}         
The packing $ P_{\eps} $ is maximal for $ \cF $ if it has the largest size possible.  

\begin{lemma}[Lemma 4 in \cite{Bachem2017}]\label{lem:kmeans2}
Let $ \cF\subseteq [0,\infty]^{\cX} $ be a function class with $ d=Pdim(\cF)<\infty $  and for $ x\in \cX $  define $ s(x)=\sup_{x\in \cX} f(x)$. Let $ \cD $ be a distribution on $ \cX $, such that $ 0<\e_{\rX\sim \cD}\left[s(\rX)\right] <\infty$. It then holds for any $ 0<\eps \leq \e_{\rX\sim \cD}\left[s(\rX)\right]  $  that any maximal $ \eps $-packing  $ P_{\eps} $ of $ \cF $ in $ L_{1}(\cD) $ has size at most $ 8(2e\e_{\rX\sim\cD}\left[s(\rX)\right]/\eps)^{2d} $.   
\end{lemma}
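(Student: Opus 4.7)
}
The plan is to reduce the bound to the standard setting of a class of $[0,1]$-valued functions with finite pseudo-dimension, by reweighting the distribution using $s$ and then invoking a classical Haussler-type packing inequality.

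First I would introduce the reweighted probability measure $\cD'$ on $\cX$ defined by the Radon-Nikodym derivative $d\cD'/d\cD = s(\rX)/\e_{\rX\sim\cD}[s(\rX)]$, which is a valid probability density since $s$ is nonnegative and $\e_{\rX\sim\cD}[s(\rX)]$ is finite and positive. For each $f\in\cF$, define $g_f(x) = f(x)/s(x)$ wherever $s(x) > 0$ and $g_f(x) = 0$ otherwise; by the definition of $s$ as the pointwise supremum of the class, each $g_f$ takes values in $[0,1]$. A direct change of measure then yields, for every $f, f'\in\cF$,
\begin{align*}
\e_{\rX\sim\cD'}\bigl[|g_f(\rX) - g_{f'}(\rX)|\bigr] = \frac{\e_{\rX\sim\cD}\bigl[|f(\rX) - f'(\rX)|\bigr]}{\e_{\rX\sim\cD}[s(\rX)]},
\end{align*}
so an $\eps$-packing of $\cF$ in $L_1(\cD)$ is in bijection with an $(\eps/\e_{\rX\sim\cD}[s(\rX)])$-packing of the class $\cG = \{g_f : f\in\cF\}$ in $L_1(\cD')$. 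Note that the hypothesis $\eps \leq \e_{\rX\sim\cD}[s(\rX)]$ is precisely what ensures that the rescaled tolerance lies in $(0,1]$, which is the regime in which bounded-class packing bounds are meaningful.

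Next I would verify that the pseudo-dimension is preserved under the transformation $f\mapsto g_f$, i.e.\ $Pdim(\cG) \leq Pdim(\cF) = d$. This holds because dividing each function in the class by the fixed nonnegative function $s$ only rescales the shattering thresholds: $\{x : g_f(x) \geq r\} = \{x : f(x) \geq r\, s(x)\}$, so any point set $(x_i)$ shattered by $\cG$ with thresholds $(r_i)$ is also shattered by $\cF$ with thresholds $(r_i\, s(x_i))$. Hence $\cG$ is a $[0,1]$-valued class of pseudo-dimension at most $d$, and I can invoke a Haussler-type packing bound to conclude that any $\eps'$-packing of $\cG$ in $L_1(\cD')$ has cardinality at most $8(2e/\eps')^{2d}$; substituting $\eps' = \eps/\e_{\rX\sim\cD}[s(\rX)]$ gives the announced bound $8(2e\,\e_{\rX\sim\cD}[s(\rX)]/\eps)^{2d}$.

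The main obstacle is matching the exact constants $8$ and $2e$ appearing in the statement to a specific packing inequality from the literature; this typically proceeds via Haussler's 1995 covering-number bound of $e(d+1)(2e/\eps')^d$ combined with a standard packing-vs-covering comparison, after which a short algebraic manipulation repackages the resulting expression into the cleaner form $8(2e/\eps')^{2d}$ valid for $\eps'\leq 1$. The reweighting step and the pseudo-dimension preservation are routine, but some care is needed on the null sets where $s(x) = 0$ (which contribute nothing to $\cD'$) and where $s(x) = \infty$ (which has $\cD$-measure zero by integrability of $s$), so that the conventions chosen for $g_f$ at such points do not affect the $L_1(\cD')$ analysis.
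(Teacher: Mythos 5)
The paper does not prove this statement at all: it is imported verbatim as Lemma~4 of Bachem et al.\ (2017) and used as a black box, so there is no in-paper argument to compare against. That said, your reconstruction is the standard and, as far as I can tell, correct route to such an envelope-weighted packing bound. The two pillars of your sketch are both sound: (i) the change of measure $d\cD'/d\cD = s/\e_{\cD}[s]$ together with the normalization $g_f = f/s$ converts $L_1(\cD)$ distances on $\cF$ to $L_1(\cD')$ distances on a $[0,1]$-valued class $\cG$ after rescaling by $\e_{\cD}[s]$, and the hypothesis $\eps \le \e_{\cD}[s]$ is exactly what keeps the rescaled tolerance $\eps' = \eps/\e_{\cD}[s]$ in $(0,1]$; and (ii) $\mathrm{Pdim}(\cG)\le \mathrm{Pdim}(\cF)$ follows because thresholds simply transform by $r_i \mapsto r_i s(x_i)$, and any point with $s(x_i)=0$ forces $g_f(x_i)=0$ for every $f$ and hence cannot participate in shattering. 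Your handling of the edge cases ($s=0$ or $s=\infty$) is also right, since both are $\cD'$-null. The only part you leave slightly informal is matching the exact constants $8$ and the exponent $2d$, and here a short check suffices: Haussler's $L_1$ packing bound $e(d+1)(2e/\eps')^d$ is dominated by $8(2e/\eps')^{2d}$ for all $\eps'\in(0,1]$ and all $d\ge 0$, because $(2e/\eps')^d \ge (2e)^d$ and $e(d+1)\le 8(2e)^d$, so the stated form is just a cleaner, looser repackaging. One typographical note: the statement as printed writes $s(x)=\sup_{x\in\cX} f(x)$, which must be read as $s(x)=\sup_{f\in\cF} f(x)$ (the class envelope); your proof correctly uses this interpretation.
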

\subsection{Proof of \textcolor{black}{Proposition}~\ref{pro:kmeans}}
We now state our  \textcolor{black}{Proposition}~\ref{pro:kmeans} about the k-means objective introduced above. 
\begin{proposition}[ k -means]\label{pro:kmeans}
    Let $ k\in\mathbb{N} $,  $ 0<\delta,\eps<1 $, $ \cD $ be a distribution over $ \mathbb{R}^{d} $,  $ \mu= \e_{\rX\sim\cD}\left[\rX\right]$, $1< p\leq2 $ and    $ \cF_{k}=\left\{ f_{Q}\mid Q\in \mathbb{R}^{d\times  k } \right\}  $ if $ \sigma^{2}=\e_{\rX\sim\cD}\left[d(\rX,\mu)^{2}\right]<\infty$, $ \cF_{k}\in L_{p}(\cD) $ and $ v_{p}\geq \sup_{f\in \cF_{k}}\e_{\rX\sim \cD}\left[|f(\rX)-\e_{\rX\sim \cD}\left[f(\rX)\right]|^{p}\right]$  then for $m\geq\left(\frac{400\cdot 16^{p}v_{p}}{ \eps^{p}}\right)^{\frac{1}{p-1}}$ and $\kappa\geq \max\left(\kappa_{0}(\delta/8),\frac{10^{6}\ln{\left(2 \right)}}{99} ,50\ln{\left(\frac{8N(\cD,\eps/16,m)}{\delta} \right)}\right)$ where $ \kappa_{0}(\delta)= 2\cdot8000^{2}\ln{\left(e/\delta \right)} $ and $N(\cD,\eps,m)=8 \left( \frac{72\cdot10^{4}\cdot 8000e}{\eps } \right)^{140kd\ln{\left(6k \right)}}$ we have that with probability at least $ 1-\delta $ over $ \rX\sim(\cD^{m})^{\kappa} $: for all $ f\in \cF_{k} $  that
    \begin{align*}
 |\mom(f,\rX)-\e_{\rX'\sim \cD}[f(\rX')]|\leq \eps
     \end{align*}
\end{proposition}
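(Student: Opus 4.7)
The plan is to invoke \cref{thm:mommain} applied to the class $\cF_k$. Since the moment hypothesis is already assumed in the statement, what remains is to exhibit a $\cD$-discretization of $\cF_k$ with threshold function $\kappa_0(\delta) = 2\cdot 8000^2\ln(e/\delta)$, threshold $\eps_0 = 1$, and size function $N_{\cD}(\eps,m) = 8(72\cdot 10^4\cdot 8000\, e/\eps)^{140kd\ln(6k)}$. The two key inputs are \cref{lem:kmeans1} and \cref{lem:kmeans2} from \cite{Bachem2017}. The first controls the pointwise envelope $f_Q(x)\le s(x) = 4d(x,\mu)^2/\sigma^2+8$ (so $\e_{\rX\sim\cD}[s(\rX)] = 12$) and gives the pseudo-dimension bound $\mathrm{Pdim}(\cF_k)\le 6k(d+4)\ln(6k)/\ln 2$. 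The second bounds a maximal $\eta$-packing, and hence an $\eta$-net, of $\cF_k$ in $L_1(\cD')$ for any distribution $\cD'$ by $8(2e\,\e_{\cD'}[s]/\eta)^{2\mathrm{Pdim}(\cF_k)}$. The strategy is thus to produce, with high probability, an $L_1$-net with respect to an empirical distribution $\cD'$ on which $\e_{\cD'}[s]$ is controlled by a fixed constant, and whose radius is small enough to meet \cref{def:eps_m_discretization}.

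\textbf{Step 1: Most batches have a well-behaved empirical $s$-mean.} For each $i\in[\kappa]$ set $\bar s_i = (3m)^{-1}\sum_{l=0}^2\sum_{j=1}^m s(\rX_{l,j}^i)$. Then $\e[\bar s_i]=12$, so Markov's inequality gives $\p(\bar s_i>K)\le 12/K$ for any $K>0$. Call batch $i$ \emph{good} if $\bar s_i\le K$. The indicators of ``bad batches'' are i.i.d.\ Bernoulli with success probability at most $q = 12/K$, so a multiplicative Chernoff bound shows that, once $K$ is chosen sufficiently large and $\kappa\ge\kappa_0(\delta/2)$, with probability at least $1-\delta/2$ there are at most $\kappa/625$ bad batches. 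Tuning the choice of $K$ against the Chernoff deviation is what pins down the $8000^2$ factor in $\kappa_0$.

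\textbf{Step 2: Build the net on the good batches.} Condition on the above event and let $G\subseteq[\kappa]$ denote the good indices, so $|G|\ge(1-1/625)\kappa$. Let $\cD'$ be the uniform empirical distribution on the $3m|G|$ points drawn from the good batches; by construction $\e_{\cD'}[s(\rX)]\le K$. Applying \cref{lem:kmeans2} to $\cD'$ with radius $\eta = \eps/(16\cdot 1875)$ yields an $\eta$-net $F_{(\eps/16,m)}\subseteq\cF_k$ of size at most $8(2eK/\eta)^{2\mathrm{Pdim}(\cF_k)}$. Using the exponent bound $2\mathrm{Pdim}(\cF_k)\le 140kd\ln(6k)$ (a direct check from \cref{lem:kmeans1} using $(d+4)\le 5d$) and the choice of $K$, this size is at most $N_{\cD}(\eps/16,m)$.

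\textbf{Step 3: Verify \cref{def:eps_m_discretization}.} For $f\in\cF_k$ let $\pi(f)\in F_{(\eps/16,m)}$ attain the $L_1(\cD')$-distance, so $\e_{\cD'}[|f-\pi(f)|]\le\eta$. Define $I_f$ as the union of the bad batches and those good batches $i$ for which $\max_{l}m^{-1}\sum_{j}|f(\rX_{l,j}^i)-\pi(f)(\rX_{l,j}^i)|>\eps/16$. The bad part has size at most $\kappa/625$ by Step 1. For the good part, since $\sum_{i\in G,\,l,\,j}|f-\pi(f)|(\rX_{l,j}^i)\le 3m|G|\eta$ and every failing good batch contributes at least $m\cdot(\eps/16)$ to this sum through its worst coordinate $l$, there can be at most $3|G|\eta/(\eps/16)\le\kappa/625$ such batches. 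Hence $|I_f|\le 2\kappa/625$, so the hypothesis of \cref{thm:mommain} is satisfied, and invoking it yields the claimed bound. The main obstacle is Step~1: because only the first moment of $s$ is available, Markov gives only a constant per-batch failure probability, so the $\delta$-level concentration must be carried by the Chernoff aggregation over $\kappa$ batches, which is exactly what inflates the constants in $\kappa_0$ and $K$ (and thus in $N_{\cD}$) to the quoted form.
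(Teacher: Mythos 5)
Your proposal matches the paper's proof essentially step for step: Markov plus a multiplicative Chernoff bound to certify that all but a small fraction of batches have bounded empirical $s$-mean, then Lemma~4 of Bachem et al.\ applied to the empirical distribution on the good batches (with the fixed constant $K=12\cdot 8000$) to produce the $L_1$-net, and finally a split of $I_f$ into bad batches and good-but-poorly-approximated batches to verify \cref{def:eps_m_discretization}. The only differences are cosmetic: you spend a superfluous $\delta/2$ in Step~1 (the discretization event is the sole failure mode, so the full $\delta$ is available), and you partition the $2\kappa/625$ budget as $\kappa/625 + \kappa/625$ rather than the paper's roughly $\kappa/4000 + \kappa/10^{4}$, which is why your stated rationale for the $8000^{2}$ factor in $\kappa_{0}$ is slightly misplaced (that factor is forced by the size function's constant $K$, not by the Chernoff deviation at your coarser $\kappa/625$ target).
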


Before we give the proof of \textcolor{black}{Proposition}~\ref{pro:kmeans} we make the following observation. \textcolor{black}{Proposition}~\ref{pro:kmeans} considers the functions $f_{Q}(x)= \frac{2d(x,Q)^{2}}{\sigma^{2}+\e_{\rX\sim \cD}\left[d(\rX,Q)^{2}\right]} $. For a fixed $ Q $, the denominator in $ f_{Q} $ is a fixed positive number, whereby we get that the median of the means $\mu_{f_{Q},\rX^{i}}=\sum_{j=1}^{m}  \frac{2d(\rX_{j}^{i},Q)^{2}}{\left(\sigma^{2}+\e_{\rX\sim \cD}\left[d(\rX,Q)^{2}\right]\right)m} $ is just the same as scaling the median of the means  $\sum_{j=1}^{m}  \frac{d(\rX_{j}^i,Q)^{2}}{m} $ with $2/ \left(\sigma^{2}+\e_{\rX\sim \cD}\left[d(\rX,Q)^{2}\right]\right) $. Whereby we conclude that  
\begin{align*}
    |\mom(f,\rX)-\e_{\rX'\sim \cD}[f(\rX')]|\leq \eps
\end{align*}
implies by multiplication of  $2/ \left(\sigma^{2}+\e_{\rX\sim \cD}\left[d(\rX,Q)^{2}\right]\right) $   that 
\begin{align}\label{eq:kmeanproof11}
    |\mom(d(\cdot,Q)^2,\rX)-\e_{\rX'\sim \cD}\left[d(\rX',Q)^2\right]|\leq \frac{\eps \left(\sigma^{2}+\e_{\rX'\sim \cD}\left[d(\rX',Q)^{2}\right]\right)}{2},
\end{align}
and furthermore by rearrangement that 
\begin{align*}
    \e_{\rX'\sim \cD}\left[d(\rX',Q)^{2}\right]\leq \left(\frac{1}{1-\eps/2}\right)\left( \mom(d(\cdot,Q,)\rX)^{2})+\frac{\eps\sigma^{2}}{2}  \right)\leq(1+\eps)\left(\mom(d(\cdot,Q,)\rX)^{2})+\frac{\eps\sigma^{2}}{2} \right),
 \end{align*}
 where the last inequality follows from $ 1+\frac{x/2}{1-x/2}\leq 1+x $ for $ 0\leq x\leq1,$ and again by rearrangement of \cref{eq:kmeanproof11} we get that
 \begin{align*}
    \e_{\rX'\sim \cD}\left[d(\rX',Q)^{2}\right]\geq\left(\frac{1}{1+\eps/2}\right) \left(\mom(d(\cdot,Q)^2,\rX)-\frac{\eps\sigma^{2}}{2}\right) 
 \end{align*}
 which implies if the term $ \left(\mom(d(\cdot,Q)^2,\rX)-\frac{\eps\sigma^{2}}{2}\right)\geq0 $  that
 \begin{align*}
    \e_{\rX'\sim \cD}\left[d(\rX',Q)^{2}\right]\geq (1-\eps)\left(\mom(d(\cdot,Q)^2,\rX)-\frac{\eps\sigma^{2}}{2}\right),
 \end{align*}
by $ \frac{1}{1+\eps/2}\geq 1-\eps,$ but also holds in the case that the term $ \left(\mom(d(\cdot,Q)^2,\rX)-\frac{\eps\sigma^{2}}{2}\right) <0$ since $ \e_{\rX'\sim \cD}\left[d(\rX',Q)^{2}\right] $ is non-negative and $ \eps\leq 1 $.  
We compile these observations in the following corollary
\begin{corollary}[k-means]
    Let $ k\in\mathbb{N} $,  $ 0<\delta,\eps<1 $, $ \cD $ be a distribution over $ \mathbb{R}^{d} $,  $ \mu= \e_{\rX\sim\cD}\left[\rX\right]$, $1< p\leq2 $ and    $ \cF_{k}=\left\{ f_{Q}\mid Q\in \mathbb{R}^{d\times  k } \right\}  $ if $ \sigma^{2}=\e_{\rX\sim\cD}\left[d(\rX,\mu)^{2}\right]<\infty$, $ \cF_{k}\in L_{p}(\cD) $ and $ v_{p}\geq \sup_{f\in \cF_{k}}\e_{\rX\sim \cD}\left[|f(\rX)-\e_{\rX\sim \cD}\left[f(\rX)\right]|^{p}\right]$  
    then for $m\geq\left(\frac{400\cdot 16^{p}v_{p}}{ \eps^{p}}\right)^{\frac{1}{p-1}}$ and
     $\kappa\geq \max\left(\kappa_{0}(\delta/8),\frac{10^{6}\ln{\left(2 \right)}}{99} ,50\ln{\left(\frac{8N(\cD,\eps/16,m)}{\delta} \right)}\right)$ where $ \kappa_{0}(\delta)= 2\cdot8000^{2}\ln{\left(e/\delta \right)} $ and 
    $N(\cD,\eps,m)=8 \left( \frac{72\cdot10^{4}\cdot 8000e}{\eps } \right)^{140kd\ln{\left(6k \right)}}$ we then have with probability at least $ 1-\delta $ over $ \rX\sim\left(\cD^{m}\right)^{\kappa} $ that: for all $ Q\subset \mathbb{R}^{d} $ such that $ |Q|=k $    
    \begin{align*}
        (1-\eps)\left(\mom(d(\cdot,Q,)\rX)^{2})-\frac{\eps\sigma^{2}}{2} \right)\leq\e_{\rX'\sim \cD}\left[d(\rX',Q)^{2}\right]\leq(1+\eps)\left(\mom(d(\cdot,Q,)\rX)^{2})+\frac{\eps\sigma^{2}}{2} \right)
     \end{align*}    
\end{corollary}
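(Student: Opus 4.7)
The plan is to invoke \cref{thm:mommain} with $\cF = \cF_k$, so the entire task reduces to constructing, on a high-probability event over $\rX_0, \rX_1, \rX_2 \sim (\cD^m)^\kappa$, a finite set $F_{(\eps,m)} \subseteq \cF_k$ of cardinality at most $N_\cD(\eps, m)$ that satisfies \cref{def:eps_m_discretization}. The main obstacle is that $\cF_k$ is an unbounded class, so no uniform supremum net is available. I would get around this by combining three ingredients: (i) the pointwise envelope $s(x) = 4 d(x,\mu)^2/\sigma^2 + 8 \geq f_Q(x)$ from \cref{lem:kmeans1}, which satisfies $\e_{\rX\sim \cD}[s(\rX)] = 12$; (ii) the pseudo-dimension bound $\mathrm{Pdim}(\cF_k) \leq 6k(d+4)\ln(6k)/\ln 2$ from the same lemma; and (iii) the packing estimate of \cref{lem:kmeans2}, which applies once $s$ has been shown to have a bounded empirical mean on most batches.

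To implement this, mark batch $i \in [\kappa]$ as \emph{good} when $m^{-1}\sum_{j=1}^{m} s(\rX^{i}_{l,j}) \leq T$ for every $l \in \{0,1,2\}$, with $T$ an absolute constant of order $8000$. By Markov's inequality (using $\e[s]=12$) and a union bound over $l$, each batch is bad with probability at most $36/T$, which can be made strictly less than $1/625$. A multiplicative Chernoff bound across the $\kappa$ independent batches then ensures that, on an event $G$ of probability at least $1-\delta$, the number of bad batches is at most $\kappa/625$; calibrating the Chernoff exponent produces exactly the stated $\kappa_0(\delta) = 2\cdot 8000^2 \ln(e/\delta)$.

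Conditional on $G$, let $\cD'$ be the empirical distribution uniform over the points lying in good batches of $\rX_0, \rX_1, \rX_2$. Since $\e_{\cD'}[s] \leq T$ by construction, \cref{lem:kmeans2} yields a maximal $\eps'$-packing $F_{(\eps,m)}$ of $\cF_k$ in $L_1(\cD')$ of size at most $8(2eT/\eps')^{2\mathrm{Pdim}(\cF_k)}$. Plugging in $2\mathrm{Pdim}(\cF_k) \leq 140\, k d \ln(6k)$ and choosing $\eps' = \Theta(\eps)$ with absolute constants tuned so that $2T/\eps'$ matches $72\cdot 10^4 \cdot 8000/\eps$ recovers the claimed $N_\cD(\eps, m)$. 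Because the packing is maximal it is automatically an $\eps'$-cover of $\cF_k$, so each $f \in \cF_k$ admits a projection $\pi(f) \in F_{(\eps,m)}$ with $\e_{\cD'}[|f-\pi(f)|] \leq \eps'$.

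Finally, a Markov step across the $3\kappa'$ (batch, sub-sample) pairs contributing to $\cD'$ shows that, for any fixed $f$, at most a $2\eps'/\eps$-fraction of these pairs can violate the per-batch bound $m^{-1}\sum_{j=1}^{m} |f(\rX^i_{l,j})-\pi(f)(\rX^i_{l,j})| \leq \eps$; with an appropriate constant slack in the choice of $\eps'$ this fraction is at most $1/625$. Combining this with the $\leq 1/625$ bad batches from $G$ gives the required $|I_f| \leq 2\kappa/625$, certifying the $\cD$-discretization. \cref{thm:mommain} then delivers the stated concentration statement verbatim. The conceptual core is the envelope--Markov--Chernoff mechanism that tames the unboundedness of $\cF_k$; the rest is matching constants.
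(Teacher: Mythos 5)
Your discretization construction for $\cF_k$ follows essentially the same envelope--Markov--Chernoff--packing scheme as the paper's proof of \textcolor{black}{Proposition}~\ref{pro:kmeans}, so that part is fine in spirit (one small quibble: you use Markov plus a union over $l\in\{0,1,2\}$, giving a per-batch failure probability of $36/T$, which requires $T>22500$ to beat $1/625$, inconsistent with your claimed $T$ of order $8000$ and with the stated $\kappa_0,N$; the paper avoids this by applying Markov to the combined average $\sum_{l}\sum_{j} s(\rX^i_{l,j})/(3m)$, giving $1/8000$ directly).

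The genuine gap is at the end. Invoking \cref{thm:mommain} on $\cF_k$ gives exactly the conclusion of \textcolor{black}{Proposition}~\ref{pro:kmeans}: with probability $1-\delta$, $|\mom(f_Q,\rX)-\mu(f_Q)|\leq\eps$ for every $Q$, where $f_Q(x)=2d(x,Q)^2/(\sigma^2+\e[d(\rX',Q)^2])$. That is \emph{not} the statement of the Corollary, which bounds $R(Q)=\e_{\rX'\sim\cD}[d(\rX',Q)^2]$ multiplicatively by $\mom(d(\cdot,Q)^2,\rX)$. Your claim that \cref{thm:mommain} ``delivers the stated concentration statement verbatim'' is therefore false. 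You still need the unnormalization-and-rearrangement step: observe that $\mom$ commutes with multiplication by the positive constant $2/(\sigma^2+R(Q))$, so the uniform bound rescales to $|\mom(d(\cdot,Q)^2,\rX)-R(Q)|\leq \tfrac{\eps}{2}(\sigma^2+R(Q))$; then solve the two one-sided inequalities for $R(Q)$, using $\eps<1$ and $R(Q)\geq 0$ (the latter is needed to handle the case $\mom(d(\cdot,Q)^2,\rX)-\eps\sigma^2/2<0$ in the lower bound), to obtain
\begin{align*}
(1-\eps)\Big(\mom(d(\cdot,Q)^2,\rX)-\tfrac{\eps\sigma^2}{2}\Big)\leq R(Q)\leq(1+\eps)\Big(\mom(d(\cdot,Q)^2,\rX)+\tfrac{\eps\sigma^2}{2}\Big).
\end{align*}
Without this step the argument proves the Proposition, not the Corollary.
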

We now give the proof of \textcolor{black}{Proposition}~\ref{pro:kmeans}

\begin{proof}[Proof of \textcolor{black}{Proposition}~\ref{pro:kmeans}]
     We now show that $ \cF_{k} $ admits a $ \cD $-discretization with threshold functions $ \kappa_{0}(\delta)= 2\cdot8000^{2}\ln{\left(e/\delta \right)} $, $ \eps_{0}=1 $ and size function $N(\cD,\eps,m)=8 \left( \frac{72\cdot10^{4}\cdot 8000e}{\eps } \right)^{140kd\ln{\left(6k \right)}}$ if  $ \sigma^{2}=\e_{\rX\sim\cD}\left[d(\rX,\mu)^{2}\right]<\infty$. Thus, for any $0< \delta,\eps<1$ we get by invoking \cref{thm:mommain} that for $ v_{p}\geq \sup_{f\in \cF_{k}}\e_{\rX\sim \cD}\left[|f(\rX)-\e_{\rX\sim \cD}\left[f(\rX)\right]|^{p}\right],$ $ m\geq\left(\frac{400\cdot 16^{p}v_{p}}{ \eps^{p}}\right)^{\frac{1}{p-1}}$   and $ \kappa\geq \kappa_{0}(\delta/8),\frac{10^{6}\ln{\left(2 \right)}}{99} ,50\ln{\left(\frac{8N(\cD,\eps/16,m)}{\delta} \right)}$, it holds with probability at least $ 1-\delta $ over $ \rX\sim (\cD^{m})^{\kappa} $ that: For all $ f\in \cF_{k} $ 
     \begin{align*}
        |\mom(f,\rX)-\e_{\rX'\sim \cD}[f(\rX')]|\leq \eps,
    \end{align*}   
     which would conclude the proof. 
     
     To the end of showing that $ \cF_{k} $ admits the above claimed $ \cD $-discretization, let $ 0<\eps<\eps_{0}=1,$ $ 0<\delta<1,$ $ m\geq 1 $ and $ \kappa\geq \kappa_{0}(\delta).$  Now for each $ i\in \left\{ 1,\ldots,\kappa \right\},$ let $ G_{i} $ be the event 
     \begin{align}\label{eq:kmeanproof5}
      G_{i}=\left\{ \sum_{j=0}^{2}\sum_{l=1}^{m} \frac{s(\rX_{j,l}^{i})}{3m}\leq 12\cdot8000 \right\}.
     \end{align}
    Now by Markov's inequality $ \sigma^{2}<\infty $  and that $ s(x)=\frac{4d(x,\mu)^{2}}{\sigma^{2}} +8$  it follows that 
     \begin{align}\label{eq:kmeanproof4}
      \p_{\rX_{0}^{i},\rX_{1}^{i},\rX_{2}^{i}\sim \cD^{m}}\left[G_{i}^{C}\right]\leq \e_{\rX_{0}^{i},\rX_{1}^{i},\rX_{2}^{i}\sim \cD^{m}}\left[ \sum_{j=0}^{2}\sum_{l=1}^{m} \frac{s(\rX_{j,l}^{i})}{3m}\frac{1}{12\cdot8000}\right] \leq \frac{12}{12\cdot8000}=\frac{1}{8000},
     \end{align}
     which implies that $  \p_{\rX_{0}^{i},\rX_{1}^{i},\rX_{2}^{i}\sim \cD^{m}}\left[G_{i}\right]\geq 1-\frac{1}{8000}$. Now since the random variables $\rX_{0}^{i},\rX_{1}^{i},\rX_{2}^{i} $ for $ i\in\left\{ 1,\ldots,\kappa \right\}  $ are independent we get that the events $ G_{i} $ are independent, and it follows by an application of the multiplicative Chernoff bound, and $ \kappa\geq\kappa_{0}(\delta)= 2\cdot8000^{2}\ln{\left(e/\delta \right)} $  that 
     \begin{align}\label{eq:kmeanproof10}
      \p_{\rX_{0},\rX_{1},\rX_{2}\sim (\cD^{m})^{\kappa}}\left[\sum_{i}^{\kappa} \ind\left\{G_{i}  \right\}\leq (1-\frac{1}{8000})\e\left[\sum_{i}^{\kappa} \ind\left\{G_{i}  \right\}\right] \right] \leq  \exp{\left(-\frac{\kappa}{2\cdot8000^{2}} \right)}\leq \delta.
     \end{align}  
     Thus, we conclude that with probability at least $ 1-\delta $ over $ \rX_{0},\rX_{1},\rX_{2} $, it holds that $ I=\left\{i: i \in \left[\kappa\right],\ind\left\{ G_{i} \right\}=1    \right\}  $ is such that $ |I|\geq (1-\frac{1}{8000})\e\left[\sum_{i}^{\kappa} \ind\left\{G_{i}  \right\}\right]\geq(1-\frac{1}{8000})^{2} \kappa $, where the last inequality follows from \cref{eq:kmeanproof4}.
     Now consider any realization $ X_{0},X_{1},X_{2} $ of $ \rX_{0},\rX_{1},\rX_{2} $ such that $ |I|\geq (1-\frac{1}{8000})^{2}\kappa   .$ Now by the definition of $ G_{i} $ in \cref{eq:kmeanproof5} we have that 
     \begin{align}\label{eq:kmeanproof6}
      \sum_{i\in I}\sum_{j=0}^{2}\sum_{l=1}^{m}\frac{s(X_{j,l}^{i})}{3m|I|}\leq 12\cdot8000. 
     \end{align} 
     Now let $ X_{G}$ denote the points set $ \cup_{i\in I} X^{i} =\left\{x|  \exists i\in I,\exists j\in\left\{ 0,1,2 \right\} , \exists l\in \left[m\right] \text{ such } x=X_{j,l}^{i} \right\},$ so with out multiplicity, and define the distribution $ \cD_{G} $ on $ X_{G} $, where $ \cD_{G}(x)= \sum_{i\in I}\sum_{j=0}^{2}\sum_{l=1}^{m}\frac{\ind\left\{ X_{j,l}^{i}=x \right\}}{3m|I|},$ so assigning points in $ X_{G} $ weight after their multiplicity in $ X^{i} $ for $ i\in I. $    We notice that with this distribution we have that 
     \begin{align}\label{eq:kmeanproof7}
      \e_{\rX\sim \cD_{G}}\left[s(\rX)\right]=\sum_{x\in X_{G}}s(x)\cD_{G}(x)&=\sum_{x\in X_{G}}\sum_{i\in I}\sum_{j=0}^{2}\sum_{l=1}^{m} s(x) \frac{\ind\left\{ X_{j,l}^{i}=x \right\}}{3m|I|}\nonumber
      \\&=\sum_{i\in I}\sum_{j=0}^{2}\sum_{l=1}^{m}  \frac{s(X_{j,l}^{i})}{3m|I|} \leq 12\cdot 8000,
     \end{align}
     where the last inequality follows by \cref{eq:kmeanproof6}.

    We now invoke \Cref{lem:kmeans2} to get an upper bound on the size of a maximal $ \frac{\eps}{3\cdot10^{4}} $-packing of $ \cF_{k}(X_{G}) $ with respect to $ L_{1}(\cD_{G}) $ of size at most $8 \left( \frac{6\cdot10^{4}e\e_{\rX\sim \cD_{G}} [s(\rx)]}{\eps} \right)^{2d_{ k }}$ with $ d_{ k }=Pdim(\cF_{k}).$ We here make a small remark
     about the invocation of \Cref{lem:kmeans2}. Our $ s $ is only an upper bound on $s'(x)= \sup_{f\in \cF_{k}}f(x) $ (by \Cref{lem:kmeans1}), thus we actually invoke \Cref{lem:kmeans2} with $ s' $ and get that for $ \eps'=\min(\frac{\eps}{3\cdot10^{4}},\e_{\rX\sim\cD_{G}}\left[s'(\rX)\right]) $ the 
    size of a maximal $ \eps' $-packing is at most 
    $  8 \left( \frac{2e\e_{\rX\sim \cD_{G}} [s'(x)]}{\eps'} \right)^{2d_{ k }}$.\footnote{In the case that $\e_{\rX\sim \cD_{G}}[s'(\rX)]=0$ we have $ s'(x)=0 $ for $ x\in X_{G} $, and we can take the cover to only consist of the $ 0 $ function on $ X_{G} $, as $ s'(x)\geq f(x)\geq0 $ so must be zero on $ X_{G} $, thus we assume that $ e_{\rX\sim \cD_{G}}>0 $ .}  
    We notice that if $ \eps'= \e_{\rX\sim\cD_{G}}\left[s'(\rX)\right]$, then since 
    $ \e_{\rX\sim \cD_{G}}\left[s(\rX)\right]\geq 4$ 
    by \Cref{lem:kmeans1} and $ 0<\eps<1 $, it holds that  
    $  8 \left( \frac{2\e_{\rX\sim \cD_{G}} [s'(\rX)]}{\eps'} \right)^{2d_{ k }}\leq 8 \left( \frac{6\cdot10^{4}\e_{\rX\sim \cD_{G}} [s(\rX)]}{\eps} \right)^{2d_{ k }}$, if $ \eps'= \frac{\eps}{3\cdot10^{4}}$, the above is also an upper bound since $ s'\leq s $. Furthermore,  the size of a maximal $ \eps' $-packing of $ \cF_{k} $ in $ L_{1}(\cD_{G}) $ is larger than the size of a minimal $ \eps'$-net of $ \cF_{k} $ in $ L_{1}(\cD_{G} )$. Thus, the above also gives that the size of a minimal $ \frac{\eps}{3\cdot10^{4}}$-net of $ \cF_{k} $ in $ L_{1}(\cD_{G}) $ is at most  $8 \left( \frac{6\cdot10^{4}e\e_{\rX\sim \cD_{G}} [s(\rx)]}{\eps} \right)^{2d_{ k }} $.       
    By \Cref{lem:kmeans1}, we have that $ d_{ k }\leq 6k(d+4)\ln{\left(6k \right)}/\ln{\left(2 \right)} \leq 70kd\ln{\left(6k \right)}$. By \cref{eq:kmeanproof7} we have that $ \e_{\rX\sim \cD_{G}} [s(\rx)]\leq 12\cdot8000.$  Thus, we have argued that the size of a minimal $ \frac{\eps}{3\cdot10^{4}}$-net of $ \cF_{k}(X_{G})$ in terms of $ L_{1}(\cD_{G} )$ is at most  $8 \left( \frac{6\cdot10^{4}e\e_{\rX\sim \cD_{G}} [s(\rx)]}{\eps } \right)^{2d_{ k }}\leq 8 \left( \frac{72\cdot10^{4}\cdot 8000e}{\eps } \right)^{140kd\ln{\left(6k \right)}}=N(\cD,\eps,m)$. Let $ F_{\eps}=F_{\eps}(X_{G},L_{1},\cD_{G}) $ denote a minimal $ \frac{\eps}{3\cdot10^{4}} $-net for $ \cF_{k}(X_{G}) $ with respect to $ L_{1}(\cD_{G}) $, i.e. has the following property, for $ f\in \cF_{k}(X_{G}) $, there $ \exists f'\in F_{\eps}  $ such that 
    \begin{align}\label{eq:kmeanproof8}
        \frac{\eps}{3\cdot10^{4}} \geq \e_{\rX\sim \cD_{G}}[|f(\rX)-f'(\rX)|]=\sum_{i\in I}\sum_{j=0}^{2}\sum_{l=1}^{m}  \frac{|f(X_{j,l}^{i})-f'(X_{j,l}^{i})|}{3m|I|},
        \end{align}
    and any other set with this property has a size less than or equal to  $ |F_{\eps}|$ - the above last equality follows by similar calculations as in \cref{eq:kmeanproof7} (using the definition of $ \cD_{G} $). In what follows we will for $ f\in \cF_{k} $  use $ \pi(f) $ for the element in $ F_{\eps} $ closest to $ f $ with ties broken arbitrarily. 
    
    Let now $ f\in \cF_{k} $. We define $ I_{f} $ to be the following subset of $ [\kappa] $, 
    \begin{align}\label{eq:kmeanproof9}
        I_{f}=\left\{i| \exists i\in [\kappa], \exists j\in\left\{ 0,1,2 \right\} \sum_{l=1}^{m}  \frac{|f(X_{j,l}^{i})-\pi(f)(X_{j,l}^{i})|}{m} > \eps\right\}.
    \end{align} 
    We first notice that by \cref{eq:kmeanproof8} we have that 
    \begin{align}
        \frac{\eps}{3\cdot10^{4}}\geq\sum_{i\in I\cap I_{f}}\sum_{j=0}^{2}\sum_{l=1}^{m}  \frac{|f(X_{j,l}^{i})-f'(X_{j,l}^{i})|}{3m|I|}\geq \frac{|I\cap I_{f}|\eps}{3|I|}\nonumber
    \end{align}
    which implies that $ |I\cap I_{f}|\leq \frac{1}{10^{4}}|I|\leq\frac{\kappa}{10^{4}}  $. Furthermore since $ |I|\geq (1-\frac{1}{8000})^{2}\kappa $, implying $ I^{C}\leq (1-(1-\frac{1}{8000})^{2})\kappa $,  we conclude that 
    \begin{align}
        |I_{f}|=|I_{f}\cap I|+ |I_{f}\cap I^C|\leq (\frac{1}{10^{4}}+1-(1-\frac{1}{8000})^{2})\kappa\leq\frac{2}{625}\kappa \nonumber.
    \end{align}  
    Thus, we have shown that for the realization $ X_{0},X_{1},X_{2} $ there exists a set of functions $ F_{\eps} $ defined on $ X_{0},X_{1},X_{2} $  such that for $ f\in \cF_{k} $ there exists $ \pi(f)\in F_{\eps} $ and $ I_{f} $ such that $ |I_{f}|\leq \frac{2}{625}\kappa $  and for $ i\in[k]\backslash I_{f} $, and $ j\in \{ 0,1,2 \}  $ we have that (see \cref{eq:kmeanproof9}) that    
    \begin{align*}
        \sum_{l=1}^{m}  \frac{|f(X_{j,l}^{i})-\pi(f)(X_{j,l}^{i})|}{m} \leq \eps 
    \end{align*}
    and $ |F_{\eps}|\leq 8 \left( \frac{72\cdot10^{4}\cdot 8000e}{\eps } \right)^{140kd\ln{\left(6k \right)}}=N(\cD,\eps,m)$ (see the argument above \cref{eq:kmeanproof8}). That is $ \cF_{k} $ admits a $ \eps $-discretization on $ X_{0},X_{1},X_{2} $ of size at most $ |F_{\eps}|\leq 8 \left( \frac{72\cdot10^{4}\cdot 8000e}{\eps } \right)^{140kd\ln{\left(6k \right)}}=N(\cD,\eps,m)$, for $0< \eps\leq 1=\eps_{0} $. We showed the above for a realization $ X_{0},X_{1},X_{2} $ such that  $ \rX_{0},\rX_{1},\rX_{2} $ such that $ |I|\geq (1-\frac{1}{8000})^{2}\kappa  ,$ which happens with probability at least $ 1-\delta $ for $ \kappa\geq \kappa_{0}(\delta)= 2\cdot8000^{2}\ln{\left(e/\delta \right)} $ and $ \sigma^{2}<\infty $  by \cref{eq:kmeanproof10}. Thus, we have shown that $ \cF_{k} $ admits the claimed $ \cD $-discretization which concludes the proof of \textcolor{black}{Proposition}~\ref{pro:kmeans}.     
\end{proof}

\section{Linear Regression}\label{appendix:linear}

In this appendix, we consider a continuous positive loss function $ \cl\in [0,\infty)^{\mathbb{R}}$ and the function class induced by the loss function $ \ell $  when doing linear regression with a constraint on the norm of the regressor of $ W>0 $, formally we consider the function class $$ \cF_{W}=\left\{\ell(\left\langle(w,-1),\cdot \right\rangle) \mid w\in \mathbb{R}^{d},  \left|\left| w \right|\right|\leq W \right\}  \subseteq [0,\infty)^{\mathbb{R}^{d+1}}.$$ That is if $ f\in \cF_{W} $ there exists $ w\in \mathbb{R}^{d} $, with squared norm at most $ W $  such that for $ x\in \mathbb{R}^{d} $, $ y\in \mathbb{R} $, we have   $f((x,y))=\ell(\left\langle(w,-1),(x,y) \right\rangle)= \ell(\left\langle w,x\right\rangle -y),$ i.e. the loss function $ \ell $ taken on the residual. 
To describe our result we for a continuous loss function $ \ell $ define for $ a,b>0 $, the number $ \alpha_{\ell}(a,b) $ as the largest positive number such that for $ x,y\in[-a,a] $ and  $ |x-y|\leq\alpha_{\ell}(a,b) $ we have that $ |\ell(x)-\ell(y)|\leq b $. We notice that since $ \ell $ is continuous and $ [a,a] $ is a compact interval $ \alpha_{\ell}(a,b) $ is well-defined by Heine-Cantor Theorem which ensures that $ \ell $ is uniform continuous on $ [a,a] $. We notice that if  $ \ell $ is $ L $-Lipschitz then we have that $\alpha_{\ell}(a,b)=\frac{b}{L}  $.  

In what follows we are going to need the following lemma which gives a bound on an epsilon net of the units ball in $ \mathbb{R}^{d} $ in terms of $  \left|\left|\cdot  \right|\right|_{2}  $.

\begin{lemma}\label{lem:spherecovereps}
Let $ 0<\eps<1 $,   $ \ball(1)=\left\{ x\in\mathbb{R}^{d}|,  \left|\left| X \right|\right|_{2}\leq 1  \right\}  $, then there exists a set  $B_{\eps} \subseteq \mathbb{R}^{d}$, of size at most $ (6/\eps)^{d} $ being a $ \eps $-net for $ \ball(1) $ in $  \left|\left| \cdot \right|\right|_{2},$ i.e. for any $x\in \ball(1)  $ there exists $ y \in B_{\eps} $ such that 
\begin{align}
  \left|\left| x-y \right|\right|_{2} \leq \eps. 
\end{align}     
\end{lemma}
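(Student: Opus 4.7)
The plan is to use the classical volume packing-covering argument, which gives the standard bound $(3/\eps)^d$, and then note that $(3/\eps)^d \leq (6/\eps)^d$ trivially.

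First, I would construct $B_\eps$ as a maximal $\eps$-separated subset of $\ball(1)$: that is, a set $\{y_1,\dots,y_N\} \subseteq \ball(1)$ with $\|y_i-y_j\|_2 \geq \eps$ for all $i\ne j$ that cannot be extended by any further element of $\ball(1)$. Since $\ball(1)$ is compact and a greedy construction only adds finitely many points (see below), such a maximal set exists and is finite. By maximality $B_\eps$ is automatically an $\eps$-net for $\ball(1)$: if some $x \in \ball(1)$ had $\|x-y\|_2 \geq \eps$ for every $y \in B_\eps$, then $B_\eps \cup \{x\}$ would still be $\eps$-separated, contradicting maximality. Hence every $x \in \ball(1)$ admits a $y \in B_\eps$ with $\|x-y\|_2 < \eps$, which is stronger than what the lemma requires.

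Next, I would bound $|B_\eps|$ by comparing volumes. The open balls $\{z : \|z - y_i\|_2 < \eps/2\}$ are pairwise disjoint (if $z$ lay in two of them, the triangle inequality would force $\|y_i-y_j\|_2 < \eps$, contradicting the separation property) and are all contained in the ball of radius $1+\eps/2$ around the origin (since each $y_i \in \ball(1)$). Writing $v_d$ for the Lebesgue volume of the unit ball in $\mathbb{R}^d$, this yields
\begin{align*}
|B_\eps| \cdot (\eps/2)^d v_d \;\leq\; (1+\eps/2)^d v_d,
\end{align*}
and dividing through gives $|B_\eps| \leq (1 + 2/\eps)^d$. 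For $0 < \eps < 1$ one has $1 + 2/\eps \leq 3/\eps \leq 6/\eps$, so $|B_\eps| \leq (6/\eps)^d$ as claimed.

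The main obstacle is essentially nonexistent; this is a textbook covering-number estimate. The only small point requiring attention is the strict-versus-non-strict distinction in the separation/covering conditions: using closed separation radius $\eps$ and open radius-$\eps/2$ balls in the volume step is what makes both the net property and the disjointness work cleanly. Compactness of $\ball(1)$ ensures that the greedy packing process terminates (the disjoint $\eps/2$-balls argument already shows the process cannot add more than $(1+2/\eps)^d$ points), which in turn justifies existence of the maximal set without appealing to Zorn's lemma.
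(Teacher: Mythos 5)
Your proof is correct and takes essentially the same route as the paper: both construct a maximal $\eps$-separated subset of $\ball(1)$, note it is automatically an $\eps$-net, and bound its size by a volume comparison between disjoint small balls around the packing points and a slightly enlarged containing ball. The only cosmetic difference is that you use open balls of radius $\eps/2$ (the standard sharp choice, yielding $(1+2/\eps)^d \leq (3/\eps)^d$), whereas the paper uses closed balls of radius $\eps/3$ (a slightly looser choice that sidesteps the open/closed bookkeeping); both are then slackened to $(6/\eps)^d$.
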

We postpone the proof of \cref{lem:spherecovereps} to the end of this appendix and now present our main proposition on regression for a continuous loss function. 

\begin{proposition}[Regression]\label{lem:regression}
    For a continuous loss function $ \ell $,  $ 0<\delta<1 $, $ 1<p\leq2 $, distributions $ \cD_{X}$ and $ \cD_{Y} $ over respectively $ \mathbb{R}^{d} $ and $ \mathbb{R} $, $ W>0 $,  $ \infty>v_{p}\geq\sup_{f\in \cF_{W}}\e_{\rX\sim\cD_{X},\rY\sim \cD_{Y}}\left[f(\rX,\rY)^{p}\right] $, $ \kappa_{0}(\delta)= 4\cdot1250^{2}\ln{\left(e/\delta \right)} $, and  $N(\cD,\eps,m)=\left(\frac{6 W}{\beta(\eps,m,\cD)}\right)^{d}$ where $ \beta(\eps,m,\cD)= \min(\frac{W}{2},
 \frac{ \alpha_{\ell}(J,\eps)}{3750 \left(\e\left[||X||_1\right]+\e\left[|Y|\right]\right)m})$, with  $J= \left(3W/2+1\right)\cdot 
 3750 \left(\e\left[||X||_1\right]+\e\left[|Y|\right]\right)m $ we then have that for $ 0<\eps $,  $ m\geq \left(\frac{400\cdot 16^{p}v_{p}}{ \eps^{p}}\right)^{\frac{1}{p-1}} $ and  $ \kappa\geq\max\left(\kappa_{0}(\delta/8),\frac{10^{6}\ln{\left(2 \right)}}{99} ,50\ln{\left(\frac{8N(\cD,\eps/16,m)}{\delta} \right)}\right) $, it holds with probability at least $ 1-\delta $ over $\rZ= (\rX,\rY)\sim ((\cD_{X}\times \cD_{Y})^{m})^{\kappa} $ that: For all $ f\in \cF_{W} $ 
    \begin{align*}
       |\mom(f,\rZ)-\e_{\rZ'\sim \cD_{X}\times \cD_{Y}}[f(\rZ')]|\leq \eps.
   \end{align*}  
\end{proposition}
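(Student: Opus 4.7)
The plan is to follow the structure of the $k$-means proof of Proposition \ref{pro:kmeans}: establish the claimed $\cD$-discretization of $\cF_W$, then invoke Theorem \ref{thm:mommain} as a black box, since the conditions on $m$ and $\kappa$ in the statement already match its hypotheses. The entire content therefore lies in constructing the discretization.

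First I would take $F_{(\eps,m)}$ to be the collection of loss functions induced by a $\beta$-net $W_\beta$ of the weight ball $\ball(W)=\{w\in\mathbb{R}^d:\|w\|_2\le W\}$, obtained from Lemma \ref{lem:spherecovereps} after rescaling, of size at most $(6W/\beta)^d=N_\cD(\eps,m)$, with $\beta=\beta(\eps,m,\cD)$; for each $f_w\in\cF_W$, set $\pi(f_w)=f_{w'}$ for the closest $w'\in W_\beta$, so $\|w-w'\|_2\le\beta$. It then remains to show that, for $\kappa\ge\kappa_0(\delta)$, with probability at least $1-\delta$ over $(\rX_0,\rX_1,\rX_2)$, every $f\in\cF_W$ has bad batch set $I_f$ of size at most $2\kappa/625$, where $i\in I_f$ iff $\sum_j |f(X_{l,j}^i,Y_{l,j}^i)-\pi(f)(X_{l,j}^i,Y_{l,j}^i)|/m>\eps$ for some $l\in\{0,1,2\}$.

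I would carry this out by a Markov--Chernoff argument isolating good batches, mirroring the $k$-means proof. Define $G_i$ as the event that the triple-sample batch-average $\sum_{l=0}^{2}\sum_{j=1}^{m}(\|X_{l,j}^i\|_1+|Y_{l,j}^i|)/(3m)$ is at most a constant multiple of $\e[\|\rX\|_1]+\e[|\rY|]$; Markov bounds $\p(G_i^c)$ by $O(1/1250)$, and a multiplicative Chernoff bound across the independent $G_i$ gives that, provided $\kappa\ge\kappa_0(\delta)=4\cdot 1250^2\ln(e/\delta)$, the good index set $I=\{i:G_i\}$ satisfies $|I|\ge(1-1/1250)^2\kappa$ with probability at least $1-\delta$. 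On a good batch $i$, the crude pointwise bound $\|X_{l,j}^i\|_1+|Y_{l,j}^i|\le 3750\,m(\e[\|\rX\|_1]+\e[|\rY|])$ (from pointwise $\le m\cdot$ batch-average) holds for every $j$, so the choice of $J$ places both residuals $\langle w,X_{l,j}^i\rangle-Y_{l,j}^i$ and $\langle w',X_{l,j}^i\rangle-Y_{l,j}^i$ in $[-J,J]$; Cauchy--Schwarz combined with $\beta\le\alpha_\ell(J,\eps)/(3750\,m(\e[\|\rX\|_1]+\e[|\rY|]))$ then yields $|\langle w-w',X_{l,j}^i\rangle|\le\alpha_\ell(J,\eps)$, so uniform continuity of $\ell$ on $[-J,J]$ gives $|f(X_{l,j}^i,Y_{l,j}^i)-\pi(f)(X_{l,j}^i,Y_{l,j}^i)|\le\eps$ pointwise, hence the batch-average $\eps$-bound. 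Consequently $I_f\subseteq I^c$, and $|I_f|\le 2\kappa/625$ follows.

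The main obstacle I expect is tuning the constants so that all four quantities -- the outer failure probability controlling $\kappa_0$, the crude pointwise bound $3750\,m(\e[\|\rX\|_1]+\e[|\rY|])$, the scale $J$, and the net size $(6W/\beta)^d$ -- close consistently; the $m$ in the denominator of $\beta$ is precisely what lets the modulus $\alpha_\ell(J,\eps)$ survive the average-to-pointwise inflation by $m$. The Lipschitz refinement \eqref{eq:regression_lip} then follows by substituting $\alpha_\ell(J,\eps)=\eps/L$, which reduces $\beta$ to order $\eps/(WLm)$ and collapses $\log(W/\beta)$ to $d\log(WL/\eps)+O(\log m)$; the $\log m$ is absorbed by the prescribed $m\ge(v_p/\eps^p)^{1/(p-1)}$ once the $p$-th moment hypotheses on $\langle w,\rX\rangle$ and $\rY$ are used to control $\e[\|\rX\|_1]$ and $\e[|\rY|]$.
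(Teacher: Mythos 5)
Your proposal is correct and follows essentially the same route as the paper: the same $\beta$-net on $\ball(W)$ via Lemma~\ref{lem:spherecovereps}, the same per-batch events $G_i$ with a Markov bound $\p(G_i^c)\le 1/1250$ boosted by multiplicative Chernoff to $|I|\ge(1-1/1250)^2\kappa$, the same average-to-pointwise inflation by $3m$ to get $\|(X_{l,j}^i,Y_{l,j}^i)\|_1\le 3750m(\e[\|\rX\|_1]+\e[|\rY|])$, Cauchy--Schwarz to place the perturbation within $\alpha_\ell(J,\eps)$, and uniform continuity on $[-J,J]$ to conclude the pointwise $\eps$-closeness, yielding $I_f\subseteq I^c$ and $|I_f|\le 2\kappa/625$ before invoking Theorem~\ref{thm:mommain}. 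The only minor imprecision is in the Lipschitz aside, where $\beta$ is of order $\eps/\bigl(Lm(\e[\|\rX\|_1]+\e[|\rY|])\bigr)$ rather than $\eps/(WLm)$, but this does not affect the main argument.
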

Before we prove \textcolor{black}{Proposition}~\ref{lem:regression} we make a small remark on what \textcolor{black}{Proposition}~\ref{lem:regression} means for Lipschitz losses. 
In the case of $ \ell $  being an $ L $-Lipschitz loss and that $ \ell(0)=0 $  we get that 
\begin{align}
    \sup_{f\in \cF_{W,\ell}}\e\left[f(\rX,\rY)^p\right] &\leq L^{p} \sup_{w\in \ball(W)}\e\left[  \left| \left\langle(w,-1),(\rX,\rY) \right\rangle \right| ^p\right]
    \tag{by $ \ell(0)=0 $ and $ \ell $  L-Lipschitz }
    \\
    &\leq L^{p} \sup_{w\in \ball(W)}\e\left[ \left( \left| \left\langle w,\rX \right\rangle \right|+ \left| \rY \right|\right)  ^p\right]
    \tag{by $|a+b|\leq |a|+|b| $  } 
    \\
    &\leq
    (2L)^{p} \sup_{w\in \ball(W)}\e\left[ \left( \max\left(\left| \left\langle w,\rX \right\rangle \right|^p,\left| \rY \right|^p\right) \right)  \right]
    \tag{by $ a+b\leq 2\max(a,b) $ }
    \\
    &\leq
    (2L)^{p} \left(\sup_{w\in \ball(W)}\e\left[  \left| \left\langle w,\rX \right\rangle \right|^p \right]+\e\left[\left| \rY \right|^p   \right]\right).
    \tag{by $ \max(a,b)\leq a+b $ }
\end{align}
Furthermore as discussed earlier we in this case have that $\alpha_{\ell}(J ,\eps/16 )=\frac{\eps}{16L}$, plugging these observations into \textcolor{black}{Proposition}~\ref{lem:regression}  implies the following corollary.

    \begin{corollary}[Lipschitz-loss]
        For a continuous loss function $ \ell $ which is $ L-Lipschitz $ and has $ \ell(0)=0 $, $ 0<\delta<1 $, $ 1<p\leq2 $, distributions $ \cD_{X}$ and $ \cD_{Y} $ over respectively $ \mathbb{R}^{d} $ and $ \mathbb{R} $, $ W>0 $,  $\infty> v_{p}\geq (2L)^{p} \left(\sup_{w\in \ball(W)}\e\left[  \left| \left\langle w,\rX \right\rangle \right|^p \right]+\e\left[\left| \rY \right|^p   \right]\right) $, $ \kappa_{0}(\delta)= 4\cdot1250^{2}\ln{\left(e/\delta \right)} $, and  $N(\cD,\eps,m)=\left(\frac{6 W}{\beta(\eps,m,\cD)}\right)^{d}$ where $ \beta(\eps,m,\cD)= \min(\frac{W}{2},
        \frac{ \eps}{60000 L\left(\e\left[||X||_1\right]+\e\left[|Y|\right]\right)m})$, we have that for $ 0<\eps $,  $ m\geq \left(\frac{400\cdot 16^{p}v_{p}}{ \eps^{p}}\right)^{\frac{1}{p-1}} $ and  $ \kappa\geq\max\left( \kappa_{0}(\delta/8),\frac{10^{6}\ln{\left(2 \right)}}{99} ,50\ln{\left(\frac{8N(\cD,\eps/16,m)}{\delta} \right)}\right)$, it holds with probability at least $ 1-\delta $ over $ \rZ=(\rX,\rY)\sim ((\cD_{X}\times\cD_{Y})^{m})^{\kappa} $ that: For all $ f\in \cF_{W} $ 
           \begin{align*}
              |\mom(f,\rZ)-\e_{\rZ'\sim \cD_{X}\times \cD_{Y}}[f(\rZ')]|\leq \eps.
          \end{align*}  
    \end{corollary}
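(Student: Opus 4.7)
The plan is to verify that $\cF_{W}$ admits a $\cD$-discretization with threshold function $\kappa_{0}(\delta) = 4 \cdot 1250^{2}\ln(e/\delta)$, threshold $\eps_{0} = \infty$, and size function $N_{\cD}(\eps,m) = (6W/\beta(\eps,m,\cD))^{d}$, and then invoke \cref{thm:mommain}. The discretization will come from rescaling the Euclidean $\beta$-net of the unit ball supplied by \cref{lem:spherecovereps} and coupling it with uniform continuity of $\ell$ on the compact interval $[-J,J]$. This mirrors the $k$-means argument, but now the parameter space is the ball $\ball(W)\subseteq\mathbb{R}^{d}$ rather than $\mathbb{R}^{d\times k}$, so no pseudo-dimension tools are needed.

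The first step is to control the empirical first moments of the data on each block. For each $i\in[\kappa]$, let $G_{i}$ be the event that $\sum_{j=1}^{m}(\|X_{l,j}^{i}\|_{1}+|Y_{l,j}^{i}|)/m \leq 3750(\e[\|\rX\|_{1}] + \e[|\rY|])$ holds for every $l\in\{0,1,2\}$. Markov's inequality gives each of the three one-sided bounds with failure probability at most $1/3750$, so a union bound yields $\p[G_{i}^{c}] \leq 1/1250$. Since the blocks are independent, a multiplicative Chernoff bound applied to $\ind\{G_{i}\}$ shows that, as soon as $\kappa \geq \kappa_{0}(\delta)$, the set $I := \{i : G_{i}\text{ holds}\}$ satisfies $|I| \geq (1-1/1250)^{2}\kappa$ with probability at least $1-\delta$; in particular $|I^{c}| \leq 2\kappa/625$. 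On this good event, every individual term obeys $\|X_{l,j}^{i}\|_{1}+|Y_{l,j}^{i}| \leq 3750(\e[\|\rX\|_{1}]+\e[|\rY|])\,m$ for every $i\in I$, $l\in\{0,1,2\}$ and $j\in[m]$, simply because a single nonnegative summand cannot exceed the full sum.

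The second step constructs the discretization. Rescaling \cref{lem:spherecovereps} produces a Euclidean $\beta$-net $B_{\beta}\subseteq\ball(W)$ of $\ball(W)$ with $|B_{\beta}| \leq (6W/\beta)^{d}$, where $\beta = \beta(\eps,m,\cD) \leq W/2$. Define $F_{(\eps,m)} = \{\ell(\langle w',\cdot\rangle - \cdot) : w'\in B_{\beta}\}$ and, for each $f_{w}\in\cF_{W}$, let $\pi(f_{w}) = f_{w'}$ with $w'$ the nearest net point, so $\|w-w'\|_{2}\leq \beta$. For any $i\in I$, $l\in\{0,1,2\}$, $j\in[m]$, both arguments of $\ell$ lie in $[-J,J]$: using $\|w'\|_{2}\leq 3W/2$ together with $\|X\|_{2}\leq\|X\|_{1}$,
\[
\max\bigl(|\langle w, X_{l,j}^{i}\rangle - Y_{l,j}^{i}|,\;|\langle w', X_{l,j}^{i}\rangle - Y_{l,j}^{i}|\bigr) \leq (3W/2+1)(\|X_{l,j}^{i}\|_{1}+|Y_{l,j}^{i}|) \leq J,
\]
while the separation satisfies
\[
|\langle w-w', X_{l,j}^{i}\rangle| \leq \beta\|X_{l,j}^{i}\|_{1} \leq \alpha_{\ell}(J,\eps)
\]
by the definition of $\beta$. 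The defining property of $\alpha_{\ell}$ then gives $|\ell(\langle w, X_{l,j}^{i}\rangle - Y_{l,j}^{i}) - \ell(\langle w', X_{l,j}^{i}\rangle - Y_{l,j}^{i})| \leq \eps$ pointwise in $j$, and averaging over $j$ yields the $(\eps,m)$-discretization inequality on every $i\in I$. Taking $I_{f} := I^{c}$ produces $|I_{f}|\leq 2\kappa/625$, so the $\cD$-discretization with the stated parameters is established, and \cref{thm:mommain} (with its $\delta$ replaced by $\delta/8$ to absorb the failure probability of the discretization event) delivers the conclusion.

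The hard part is the pointwise envelope on $\|X_{l,j}^{i}\|_{1}+|Y_{l,j}^{i}|$: without moments beyond the first, the only route from an expectation to an empirical quantity is a Markov-type inequality, which controls averages rather than maxima. Converting a bound on an average of $m$ terms into a bound on a single term costs a factor of $m$, which is exactly why $J$ and $1/\beta$ scale linearly in $m$ in the statement. The rest of the argument is constant bookkeeping, checking that $\beta \leq W/2$ keeps the nearest net point in $\ball(3W/2)$ and that the Chernoff threshold $\kappa_{0}$ is large enough to guarantee $|I^{c}|\leq 2\kappa/625$ as required by \cref{def:eps_m_discretization}.
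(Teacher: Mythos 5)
Your proof is correct and relies on the same key ingredients as the paper: the Markov--Chernoff control of the set $I$ of good blocks, the rescaled Euclidean $\beta$-net of $\ball(W)$ from Lemma~\ref{lem:spherecovereps}, Cauchy--Schwarz together with the pointwise envelope $\|X\|_{1}+|Y|\leq 3750(\e[\|\rX\|_{1}]+\e[|\rY|])\,m$ implied by the good event, and uniform continuity via $\alpha_{\ell}(J,\cdot)$ to convert the net distance into a bound on the loss increment. The only structural difference is that the paper obtains this corollary by specializing Proposition~\ref{lem:regression}, which already packages the entire $\cD$-discretization construction for general continuous losses, whereas you unfold that argument and verify the $\cD$-discretization directly for the Lipschitz case; the content is substantively identical, just not routed through the proposition. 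One small imprecision: Theorem~\ref{thm:mommain} already requires $\kappa\geq\kappa_{0}(\delta/8)$ in its hypotheses and allocates a $\delta/2$ share of the failure budget to the discretization event inside its own proof, so there is nothing to replace when invoking it; the $\kappa_{0}(\delta/8)$ appearing in the corollary's hypothesis is precisely the theorem's requirement.
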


    We now give the proof of \textcolor{black}{Proposition}~\ref{lem:regression}
\begin{proof}[Proof of \textcolor{black}{Proposition}~\ref{lem:regression}]
 In what follows we will use $ \rZ_{0}=(\rX_{0},\rY_{0}) $, $ \rZ_{1}=(\rX_{1},\rY_{1}) $, and $ \rZ_{2}=(\rX_{2},\rY_{2})$ and let $ \cD_{Z}=\cD_{X}\times \cD_{Y} $, such that $\rZ_{l}\sim(\cD_{Z}^{m})^{\kappa}$. We now show that $ \cF_{W} $ admits a $ \cD $-discretization with threshold functions $ \kappa_{0}(\delta)= 4\cdot1250^{2}\ln{\left(e/\delta \right)} $, $ \eps_{0}=\infty $ and size function $N(\cD,\eps,m)=\left(\frac{6 W}{\beta(\eps,m,\cD)}\right)^{d}$ where $ \beta(\eps,m,\cD)= \min(\frac{W}{2},
 \frac{ \alpha(J,\eps)}{3750 \left(\e\left[||X||_1\right]+\e\left[|Y|\right]\right)m})$, with  $J= \left(3W/2+1\right)\cdot 
 3750 \left(\e\left[||X||_1\right]+\e\left[|Y|\right]\right)m $. Thus, for any $ 0<\delta<1 $ we get by invoking \cref{thm:mommain} that for $ 0<\eps <\eps_{0}=\infty$, $ m\geq       \left(\frac{400\cdot 16^{p}v_{p}}{ \eps^{p}}\right)^{\frac{1}{p-1}}$   and $ \kappa\geq \kappa_{0}(\delta/8),\frac{10^{6}\ln{\left(2 \right)}}{99} ,50\ln{\left(\frac{8N(\cD,\eps/16,m)}{\delta} \right)}$ it holds with probability at least $ 1-\delta $ over $ \rZ\sim(\cD_{Z}^{m})^{\kappa} $ that: For all $ f\in\cF $  
 \begin{align*}
    |\mom(f,\rZ)-\e_{\rZ'\sim \cD_{Z}}[f(\rZ')]|\leq \eps,
\end{align*}  
as claimed. 
To the end of showing that $ \cF_{W} $ admits a $ \cD $-discretization  as described above let $ 0<\eps<\eps_{0}=\infty,$ $ 0<\delta<1,$ $ m\geq1 $ and  $ \kappa\geq \kappa_{0}(\delta).$  We now define the following events for $ i\in [\kappa] $ 
 \begin{align*}
    G_{i}=\left\{ \sum_{l=0}^{2} \sum_{j=1}^{m}   \frac{\left|\left| (\rX_{l,j}^{i},\rY_{l,j}^{i}) \right|\right|_{1}}{3m}  \leq 1250\left(\e\left[||X||_{1} \right]+\e\left[|Y|\right]\right) \right\}.
 \end{align*}
 Then by Markov's inequality, we have that 
 \begin{align*}
  \p_{\rZ_{0}^{i},\rZ_{1}^{i},\rZ_{2}^{i}\sim \cD_{Z}^{m} }\left(G^{C}_{i}\right)\leq \e\left[ \sum_{l=0}^{2} \sum_{j=1}^{m}   \frac{\left|\left| (\rX_{l,j}^{i},\rY_{l,j}^{i}) \right|\right|_{1}}{3m} \frac{1}{1250 \left(\e\left[||X||_1\right]+\e\left[|Y|\right]\right)}\right] =\frac{1}{1250},
 \end{align*}
 which implies that $  \p_{\rZ_{0}^{i},\rZ_{1}^{i},\rZ_{2}^{i}\sim \cD_{Z}^{m}}\left[G_{i}\right]\geq 1-\frac{1}{1250}$. Now since the random variables $\rZ_{0}^{i},\rZ_{1}^{i},\rZ_{2}^{i} $ for $ i\in\left\{ 1,\ldots,\kappa \right\}  $ are independent we get that the events $ G_{i} $ are independent. Thus, it follows by an application of the multiplicative Chernoff bound, that for any $ \kappa\geq\kappa_{0}(\delta)= 4\cdot1250^{2}\ln{\left(e/\delta \right)}$ we have 
 \begin{align}\label{eq:regression9}
  \negmedspace  \p_{\rZ_{0},\rZ_{1},\rZ_{2}\sim (\cD^{m})^{\kappa}}\left[\sum_{i}^{\kappa} \ind\left\{G_{i}  \right\}\leq (1-\frac{1}{1250})\e\left[\sum_{i}^{\kappa} \ind\left\{G_{i}  \right\}\right] \right] \leq  \exp{\left(-\frac{(1-\frac{1}{1250})\kappa}{2\cdot1250^{2}} \right)}\leq \delta.
 \end{align}  
Thus, if we let $ I=\{i\in [\kappa]: \ind\{ G_{i}  \}=1 \}$ then by \cref{eq:regression9} it holds with probability at least $ 1-\delta $ over $ \rZ_{0},\rZ_{1},\rZ_{2} $ that  $ |I|\geq (1-\frac{1}{1250})^{2}\kappa$, where we have used that we concluded earlier that $  \p_{\rZ_{0}^{i},\rZ_{1}^{i},\rZ_{2}^{i}\sim \cD_{Z}^{m}}\left[G_{i}\right]\geq 1-\frac{1}{1250}.$  Thus, we have that the size $ I^C=[\kappa]\backslash I$, is at most $|I^{C}|=\kappa-|I|\leq\kappa-(1-\frac{1}{1250})^{2}\kappa\leq \frac{2}{625}\kappa $. Thus, if for $ Z_{0},Z_{1},Z_{2} $, outcomes of $ \rZ_{0},\rZ_{1},\rZ_{2} $ where $ |I|\geq (1-\frac{1}{1250})^{2}\kappa$, we can construct a set of functions $ F_{\beta} $ ($ \beta $ is a parameter depending on $ \eps,m,\cD $ as allowed to in the definition of a $ \cD $-discretization )  defined on $ Z_{0},Z_{1},Z_{2} $ such that for $ f\in \cF_{W} $, there exists $ \pi(f)\in F_{\beta} $, such that for   $ i\in [\kappa]\backslash I^{C}=I $ it holds that for each $ l\in \left\{ 0,1,2 \right\}  $ that
\begin{align}\label{eq:regression2}
    \sum_{j=1}^{m} \left| \frac{f(Z_{l,j}^{i})-\pi(f)(Z_{l,j}^{i})}{m} \right| \leq \eps,
\end{align}
and the size of $ |F_{\beta}|\leq N(\cD,\eps,m) $ then we have shown that $ \cF_{W} $ admits a $ \cD $ discretization, and we are done by the above. Thus, we now show that for outcomes $ Z_{0},Z_{1},Z_{2} $, of $ \rZ_{0},\rZ_{1},\rZ_{2} $ where $ |I|\geq (1-\frac{1}{1250})^{2}\kappa$, there exists such an $ F_{\beta}.$ 

To this end consider a realization $ Z_{0},Z_{1},Z_{2} $  of $ \rZ_{0},\rZ_{1},\rZ_{2} $ such that $ |I|\geq (1-\frac{1}{1250})^{2}\kappa$.   
We first notice that for any $ i\in I $, $ l\in \left\{ 0,1,2 \right\}  $  we have that
 \begin{align*}
    \sum_{j=1}^{m}   \frac{\left|\left| (X_{l,j}^{i},Y_{l,j}^{i}) \right|\right|_{1}}{3m}  \leq 1250\left(\e\left[||X||_1\right]+\e\left[|Y|\right]\right),   
 \end{align*} 
 which implies that for any $ j\in \left[m\right] $ 
 \begin{align*}
    \left|\left| (X_{l,j}^{i},Y_{l,j}^{i}) \right|\right|_{1}
    \leq
    3750 \left(\e\left[||X||_1\right]+\e\left[|Y|\right]\right)m.
 \end{align*}   
Thus, we conclude that for any $ i\in I$, $ l\in\left\{ 0,1,2 \right\}  $  and $ j\in \left[m\right] $ we have that 
\begin{align}\label{eq:regression4}
    \left|\left| (X_{l,j}^{i},Y_{l,j}^{i}) \right|\right|_{1}
    \leq
    3750 \left(\e\left[||X||_1\right]+\e\left[|Y|\right]\right)m.
 \end{align}
 Now let $  \beta=\beta(\eps,m,\cD)= \min(\frac{W}{2},
    \frac{ \alpha(J,\eps)}{3750 \left(\e\left[||X||_1\right]+\e\left[|Y|\right]\right)m})$, where $ J $ denote the quantity   $J= \left(3W/2+1\right)\cdot 
    3750 \left(\e\left[||X||_1\right]+\e\left[|Y|\right]\right)m $ and let $ F_{\beta} $ denote a $ \beta $-net in $  \left|\left|  \cdot\right|\right|_{2}  $-norm  for $ \ball(W)=\left\{ w\in \mathbb{R}^{d}: \left|\left| w \right|\right|_{2}\leq W  \right\}  $ of minimal size, i.e. $ \forall w \in  \ball(W) $ there $ \exists\hat{w}\in F_{\beta} $ such that $  \left|\left| w-\hat{w} \right|\right|_{2} \leq \beta $, and any other set satisfying this has size at least $ |F_{\beta}| $.    We notice that this implies that for $ \hat{w}\in  F_{\beta} $ we have that $  \left|\left| \hat{w} \right|\right|\leq W+\beta  $, to see this let $ w\in  \ball(W) $ and let $ \hat{w}\in F_{\beta} $ be the points closest to $ w $ in the net $ F_{\beta} $, i.e. we have that $  \left|\left| w-\hat{w} \right|\right|\leq \beta $, and by the reverse triangle inequality we have that $ W \geq \left|\left| w \right|\right|\geq  \left|\left| \hat{w} \right|\right|_{2}- \left|\left| \hat{w}-w \right|\right|_{2} >\left|\left| \hat{w} \right|\right|_{2}- \beta $ which implies that $  \left|\left| \hat{w} \right|\right|_{2}\leq W+\beta  $ as claimed.
 Now using \cref{eq:regression4}, that  $  \left|\left| \cdot \right|\right|_{2}\leq\left|\left| \cdot \right|\right|_{1}   $ and Cauchy Schwarz it follows that for $ w\in  \ball(W)$, $ \hat{w} $ the points closest to $ w $ from $ F_{\beta} $(ties broken arbitrarily), $ i\in I$, $ l\in\left\{ 0,1,2 \right\} $  and $ j\in \left[m\right] $ that 
 \begin{align}\label{eq:regression5}
    |\left\langle(w,-1),(X_{l,j}^{i},Y_{l,j}^{i}) \right\rangle-\left\langle(\hat{w},-1),(X_{l,j}^{i},Y_{l,j}^{i})\right\rangle|
    &= |\left\langle(w,-1)-(\hat{w},-1),(X_{l,j}^{i},Y_{l,j}^{i}) \right\rangle|
    \\
    &\leq \beta \cdot \left|\left| (X_{l,j}^{i},Y_{l,j}^{i}) \right|\right|_{2}
    \tag{by Cauchy Schwarz}
    \\
    &\leq \beta \cdot \left|\left| (X_{l,j}^{i},Y_{l,j}^{i}) \right|\right|_{1}
    \tag{by $ \left|\left| \cdot \right|\right|_{2}\leq \left|\left| \cdot \right|\right|_{1} $ }
    \\
    &\leq\beta \cdot
    3750  \left(\e\left[||X||_1\right]+\e\left[|Y|\right]\right)m
     \tag{by \cref{eq:regression4}}
 \end{align}
 and that for any $ w'\in  \ball(W) \cup F_{\beta}$, $ i\in I $, $ l\in\left\{ 0,1,2 \right\}  $  and $ j\in \left[m\right] $ that
 \begin{align}
    |\left\langle(w',-1),(X_{l,j}^{i},Y_{l,j}^{i}) \right\rangle|
    &\leq  \left|\left| (w',-1) \right|\right|_{2} \left|\left| (X_{l,j}^{i},Y_{l,j}^{i}) \right|\right|_{2}  \tag{by Cauchy Schwarz}
    \\
    &\leq \left(\left|\left| (w') \right|\right|_{2}+1\right)\left|\left| (X_{l,j}^{i},Y_{l,j}^{i}) \right|\right|_{2}  \tag{by $ \sqrt{a+b}\leq \sqrt{a}+\sqrt{b} $ }
    \\
    &\leq
    \left(W+\beta+1\right)\left|\left| (X_{l,j}^{i},Y_{l,j}^{i}) \right|\right|_{2} 
    \tag{by $ w'\in \ball(W) \cup F_{\beta} $ so $  \left|\left| w \right|\right|_{2}\leq W+\beta $  } 
    \\
    &\leq
    \left(W+\beta+1\right)\left|\left| (X_{l,j}^{i},Y_{l,j}^{i}) \right|\right|_{1}  \tag{by $  \left|\left| \cdot \right|\right|_{2}\leq  \left|\left| \cdot \right|\right|_{1}   $ }
    \\
    &\leq
    \left(W+\beta+1\right)\cdot 
    3750 \left(\e\left[||X||_1\right]+\e\left[|Y|\right]\right)m
    \tag{by \cref{eq:regression4}}
    \\
    &\leq
    \left(3W/2+1\right)\cdot 
    3750 \left(\e\left[||X||_1\right]+\e\left[|Y|\right]\right)m
    \tag{by $ \beta\leq \frac{W}{2} $ }
 \end{align} 
Let now $J= \left(3W/2+1\right)\cdot 
3750 \left(\e\left[||X||_1\right]+\e\left[|Y|\right]\right)m $ and consider $ \alpha_{\ell}(J ,\eps )$. We recall that by the definition of $ \alpha_{\ell}(J ,\eps )$ we have that for $ x,y\in [-J,J] $, such that $ |x-y|\leq \alpha_{\ell}(J ,\eps )   $ it holds that $|\ell(x)-\ell(y)|\leq \eps. $ Thus, by \cref{eq:regression5} we have that for any $ i\in I$, $ l\in\left\{ 0,1,2 \right\}  $   $ j\in \left[m\right] $ and $ w \in\ball(W) $ with $ \hat{w} $ being the point in $ F_{\beta} $ closest to $ w $ (with ties broken arbitrarily) that
    \begin{align*}
        |\left\langle(w,-1),(X_{l,j}^{i},Y_{l,j}^{i}) \right\rangle-\left\langle(\hat{w},-1),(X_{l,j}^{i},Y_{l,j}^{i})\right\rangle|
        \leq\beta \cdot
        3750 \left(\e\left[||X||_1\right]+\e\left[|Y|\right]\right)m
        \leq  \alpha_{\ell}(J ,\eps ),
    \end{align*}
   by $  \beta=\min(\frac{W}{2},
    \frac{ \alpha(J,\eps)}{3750 \left(\e\left[||X||_1\right]+\e\left[|Y|\right]\right)m})$, which implies that 
    \begin{align*}
     |\ell(\left\langle(w,-1),(X_{l,j}^{i},Y_{l,j}^{i}) \right\rangle)- \ell\left(\left\langle(\hat{w},-1),(X_{l,j}^{i},Y_{l,j}^{i}) \right\rangle\right)| \leq \eps,
    \end{align*}
    and furthermore for $ i\in I $  and $ l\in\left\{0,1,2  \right\}  $
    \begin{align*}
        \sum_{i=1}^{m}\Big| \frac{\ell\left(\left\langle(w,-1),(X_{l,j}^{i},Y_{l,j}^{i}) \right\rangle\right)- \ell\left(\left\langle(\hat{w},-1),(X_{l,j}^{i},Y_{l,j}^{i}) \right\rangle\right)}{m}\Big| \leq \eps,
    \end{align*}
     which concludes \cref{eq:regression2}, i.e. that $ F_{\beta} $ (formally speaking $ \ell $ compossed with the vectors in $ F_{\beta} $) is a $ \eps $-discretization of the realization $ Z_{0},Z_{1},Z_{2} $ of $ \rZ_{0},\rZ_{1},\rZ_{2} $. Furthermore, by \Cref{lem:spherecovereps} and $ \beta/W\leq 1/2 $ there exists a net $ B_{\beta/W} $  of $ \ball(1) $ with precision $ \beta/W $ in $  \left|\left| \cdot \right|\right|_{2} $ of size at most $ (6W/\beta)^{d} $. We now notice that for any $ w\in  \ball(W)$, since $ \frac{w}{W}\in \ball(1) $  we have that there exists $ \hat{w}\in B_{\beta/W} $  such that $  \left|\left| \frac{w}{W }-\hat{w}  \right|\right|_{2}\leq \frac{\beta}{W}  $ which implies that $  \left|\left| w-W\hat{w}  \right|\right|_{2}\leq \beta  $, i.e. shows that $ WB_{\beta/W}=\left\{\hat{w}| \hat{w}=Ww' \text{ for } w'\in B_{\beta/W}  \right\}  $, is a $ \beta $ net for   $\ball(W) $ in $  \left|\left| \cdot \right|\right|_{2}  $. Thus, since $ F_{\beta} $ was chosen minimally (in terms of size) over such nets, we have that the size of $ |F_{\beta}|\leq\left(\frac{6 W}{\beta}\right)^{d}$, where $ \beta=\beta(\eps,m,\cD)= \min(\frac{W}{2},
    \frac{ \alpha_{\ell}(J ,\eps )}{3750 \left(\e\left[||X||_1\right]+\e\left[|Y|\right]\right)m})$, with  $J= \left(3W/2+1\right)\cdot 
    3750 \left(\e\left[||X||_1\right]+\e\left[|Y|\right]\right)m $. Which concludes the claim of $ \cF_{W} $ having a $ \cD $-discretization with threshold functions $ \eps_{0}=\infty $, $ \kappa_{0}(\delta)= 4\cdot1250^{2}\ln{\left(e/\delta \right)} $ and size function $N(\cD,\eps,m)=\left(\frac{6 W}{\beta(\eps,m,\cD)}\right)^{d}$ where $ \beta(\eps,m,\cD)= \min(\frac{W}{2},
    \frac{ \alpha(J,\eps)}{3750 \left(\e\left[||X||_1\right]+\e\left[|Y|\right]\right)m})$, with  $J= \left(3W/2+1\right)\cdot 
    3750 \left(\e\left[||X||_1\right]+\e\left[|Y|\right]\right)m $ and further  concludes the proof.
\end{proof}
We now give the proof of \Cref{lem:spherecovereps}
\begin{proof}[Proof of \Cref{lem:spherecovereps}]
    In the following we are going to need that the volume of a ball in $ \mathbb{R}^{d} $ of radius $r$ is $\text{Vol}(B(r))= \frac{\pi^{d/2}}{\Gamma(d/2+1)} r^{d}$ by \href{https://dlmf.nist.gov/5.19#E4}{5.19(iii)}, where $ \Gamma $ is the Euler's gamma function. It will also be convenient to introduce some notation for balls centered at a point $ x\in\mathbb{R}^{d} $ of radius $ r $, we will use $ \ball(x,r)=\{y  |  y\in\mathbb{R}^{d},  \left|\left| x-y \right|\right|_{2}\leq r  \}  $ for such balls.  
    
    Now let $ B_{\eps} $ be a maximal $ \eps $ packing of $ \ball(1) $, that is $ B_{\eps}\subseteq \ball(1) $ and  for any $ x,y\in B_{\eps} $, where $ x\not=y $, we have that $  \left|\left| x-y\right|\right|_{2}\geq \eps  $, and any other subset of $ \ball(1) $ with this property has size less than or equal to $ B_{\eps} $. We notice that $ B_{\eps} $ must also be a $ \eps $-net for $ \ball(1) $ since else there exists $ x\in \ball(1) $ such that for all $ y\in B_{\eps} $ we have that $  \left|\left| x-y \right|\right|_{2}>\eps  $, but then $ x $ could have been added to the maximal packing $ B_{\eps} $, leading to a contradiction with the maximality assumption of $ B_{\eps}.$ We now argue that the size of $ B_{\eps} $ is $ \left(6/\eps\right)^{d},$ which would conclude the proof since we just argued it is a $ \eps $-net in $  \left|\left| \cdot \right|\right|_{2}  $ for $ \ball(1) $. 
    
    First since $ B_{\eps} $ is a packing it must be the case that if we place a ball on each point in $ B_{\eps} $ of radius $ \eps/3 $, then these balls must be disjoint. To see this assume it is not the case i.e. there exists $ x,y\in B_{\eps} $ such that $ x\not=y $ and the balls $ \ball(x,\eps/3) $ and $ \ball(y,\eps/3) $  centered at $ x $ and $ y $ of radius $ \eps /3$ has a nonempty intersection $ \ball(x,\eps/3)\cap \ball(y,\eps/3)\not=\emptyset.   $  Now let $ z $ be any point in this nonempty intersection, then by the triangle inequality we have that  $ \left|\left| x-y \right|\right|_{2}\leq  \left|\left| x-z \right|\right|_{2} - \left|\left| z-y \right|\right|_{2} \leq \frac{2\eps}{3} $ leading to a contradiction with $ B_{\eps} $ being a $ \eps$-packing, i.e. all different elements being $ \eps $ away from each other. 
    
    Thus, we conclude that the balls centered at each point in $ B_{\eps} $ of radius $ \eps/3 $ are disjoint, which implies that the sum of the volumes of the balls centered around each point in $ B_{\eps} $, which is $\sum_{x\in B_{\eps}}\text{Vol}(x,\eps/3)= |B_{\eps}|\text{Vol}(B(\eps/3)),$ are equal to the volume of the union of all these balls $ \text{Vol}(\cup_{x\in B_{\eps}}\ball(x,\eps/3))=\sum_{x\in B_{\eps}}\text{Vol}(x,\eps/3)= |B_{\eps}|\text{Vol}(B(\eps/3)).$ 
    
    We now notice that any point in a ball of radius $ \eps/3 $ of a point in $ B_{\eps}$ has norm at most $ 1+\eps/3.$  To see this let $ x\in B_{\eps} $ and $ y \in\ball(x,\eps/3)$ be a point contained in the ball of radius $ \eps/3 $ around $ x $, then we have by the triangle inequality that $  \left|\left| y \right|\right|_{2} \leq  \left|\left| x \right|\right|_{2} + \left|\left| y-x \right|\right|_{2} \leq 1+\eps/3$, where we in the last inequality used that $ B_{\eps}\subseteq \ball(1) $ such that $  \left|\left| x \right|\right|_{2} \leq1 $. Thus, we have that the union of the balls of radius $ \eps/3 $ centered at points in $ B_{\eps},$ $ \cup_{x\in B_{\eps}}\ball(x,\eps/3),$ is contained in the ball of radius $ 1+\eps/3 $, $ \cup_{x\in B_{\eps}}\ball(x,\eps/3) \subseteq \ball(1+\eps/3).$ 
    
    Thus, we conclude that $  |B_{\eps}|\text{Vol}(B(\eps/3))= \text{Vol}(\cup_{x\in B_{\eps}}\ball(x,\eps/3))\leq \text{Vol}(B(1+\eps/3))$, where we used that we earlier conclude that the union of the balls of radius $ \eps/3 $ centered at points in $ B_{\eps},$ $ \cup_{x\in B_{\eps}}\ball(x,\eps/3),$  have volume $ |B_{\eps}|\text{Vol}(B(\eps/3)) $. We notice that this implies combined with the formula for the volume of a ball in $ \mathbb{R}^{d} $ of radius $r$ being $\text{Vol}(B(r))= \frac{\pi^{d/2}}{\Gamma(d/2+1)} r^{d}$ that $ |B_{\eps}| \leq \frac{\text{Vol}(B(1+\eps/3))}{\text{Vol}(B(\eps/3))}=\left(\frac{1+\eps/3}{\eps/3}\right)^{d}\leq \left(\frac{6}{\eps}\right)^{d},$ where the last inequality follows by $ \eps<1,$  and concludes the proof.                         
\end{proof}
\section{Proofs of \Cref{lem:corollaryweakprobmoment}.}\label{sec:concentrationonemean}

\begin{lemma}[Follows from \cite{Bahr1965} Theorem 2]\label{lem:Bahresseninequality}
    Let $1\leq p \leq 2$ and  $\rX=( \rX_{1},\ldots,\rX_{m})$ be i.i.d. with distribution $ \cD $. Furthermore let  $ \hat{\mu}=\frac{1}{m}\sum_{i=1}^{m} \rX_{i}$, $ \mu=\e_{\rX_{1}\sim \cD}\left[\rX_{1}\right] $ and $ v_{p}\geq \e_{\rX_{1}\sim \cD}\left[ \left| \rX_{1}-\mu \right|^{p} \right] $. We then have that
    \begin{align}
     \e\left[ \left| (\hat{\mu}-\mu) \right|^{p} \right] \leq \frac{ 2 v_{p}}{m^{p-1}} 
    \end{align}
\end{lemma}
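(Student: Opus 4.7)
The plan is to apply the von Bahr--Esseen inequality \cite{Bahr1965}[Theorem 2] essentially verbatim, after centering the summands. Recall that this inequality states: for independent mean-zero real random variables $Y_{1},\ldots,Y_{m}$ and any $1 \leq p \leq 2$,
\[
\e\Big[\Big|\sum_{i=1}^{m} Y_{i}\Big|^{p}\Big] \leq 2 \sum_{i=1}^{m} \e[|Y_{i}|^{p}].
\]

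First I would set $Y_{i} := \rX_{i}-\mu$. By the i.i.d.\ hypothesis on $\rX_{1},\ldots,\rX_{m}$, the $Y_{i}$ are i.i.d., they are mean-zero by construction, and $\e[|Y_{i}|^{p}] = \e[|\rX_{1}-\mu|^{p}] \leq v_{p}$ by the hypothesis on $v_{p}$. Hence the conditions of the cited inequality are met, and applying it yields $\e[|\sum_{i=1}^{m}Y_{i}|^{p}] \leq 2 m v_{p}$.

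Next I would use the identity $\hat{\mu}-\mu = \frac{1}{m}\sum_{i=1}^{m}Y_{i}$. Pulling the factor $1/m^{p}$ out of the $p$-th absolute moment gives
\[
\e[|\hat{\mu}-\mu|^{p}] = \frac{1}{m^{p}}\,\e\Big[\Big|\sum_{i=1}^{m}Y_{i}\Big|^{p}\Big] \leq \frac{2 m v_{p}}{m^{p}} = \frac{2 v_{p}}{m^{p-1}},
\]
which is exactly the claim. There is no genuine obstacle here: the lemma is an immediate corollary of the cited inequality, and the only substantive verification is that centering $\rX_{i}$ by its mean preserves both independence and the $p$-th absolute moment bound by $v_{p}$, both of which are transparent from the setup.
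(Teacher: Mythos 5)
Your proof is correct and is exactly the derivation the paper intends: the lemma is stated as following directly from von Bahr--Esseen, and your two steps (apply the inequality to the centered i.i.d.\ variables $Y_i = \rX_i - \mu$, then pull out the factor $m^{-p}$) are precisely the implicit argument. The paper itself gives no further detail, so there is nothing to compare beyond noting that you have filled in the same routine calculation correctly.
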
 

Using this lemma and Markovs inequality we obtain \Cref{lem:corollaryweakprobmoment}.

\begin{proof}[Proof of \Cref{lem:corollaryweakprobmoment}]
    By \Cref{lem:Bahresseninequality} it holds for any $ f\in \cF $ that
    \begin{align}\label{eq:Bahresseninequality}
        \e_{\rX\sim\cD}[|\mu(f,\rX)-\mu(f)|^{p}] \leq \frac{2v_{p}}{m^{p-1}},
    \end{align}
    Using the lower bound of $ m $ implies that we have that 
    \begin{align}
        \left(\frac{ 2 v_{p}}{\delta m^{p-1}} \right)^{\frac{1}{p}}\leq \eps,\nonumber
    \end{align}
    thus by and Markovs inequality and \cref{eq:Bahresseninequality} we have that, 
\begin{align}
    \p\left(|\mu(f,\rX)-\mu(f)|> \eps \right)\leq  &\p\left(|\mu(f,\rX)-\mu(f)|> \left(\frac{ 2 v_{p}}{\delta m^{p-1}} \right)^{\frac{1}{p}}\right) \nonumber
    \\\leq&  \frac{\e\left[|\mu(f,\rX)-\mu(f)|^{p}\right]\delta m^{p-1}}{2v_{p}}\leq \delta,\nonumber
\end{align}
which concludes the proof of \Cref{lem:corollaryweakprobmoment} 
\end{proof}

\end{document}